\newcommand{\engquote}[1]{``#1''}
\newcommand{\SDE}{\text{SDE}}
\newcommand{\ODE}{\text{ODE}}
\newcommand{\DSM}{\text{DSM}}
\newcommand{\DDSM}{\text{DDSM}}
\DeclareMathOperator{\PoE}{PoE}
\def\eqref#1{equation~\ref{#1}}
\newtheorem{proposition}{Proposition}[section]
\newtheorem{lemma}{Lemma}[section]
\newtheorem{definition}{Definition}[section]
\def\1{\bm{1}}
\newcommand{\train}{\mathcal{D}}
\newcommand{\valid}{\mathcal{D_{\mathrm{valid}}}}
\def\rva{{\mathbf{a}}}
\def\rvb{{\mathbf{b}}}
\def\rvs{{\mathbf{s}}}
\def\rvw{{\mathbf{w}}}
\def\rvx{{\mathbf{x}}}
\def\rvy{{\mathbf{y}}}
\def\rvz{{\mathbf{z}}}
\def\vzero{{\bm{0}}}
\def\vtheta{{\bm{\theta}}}
\def\vepsilon{{\bm{\epsilon}}}
\def\vs{{\bm{s}}}
\def\mA{{\bm{A}}}
\def\mI{{\bm{I}}}
\def\mX{{\bm{X}}}
\def\mSigma{{\bm{\Sigma}}}
\DeclareMathAlphabet{\mathsfit}{\encodingdefault}{\sfdefault}{m}{sl}
\SetMathAlphabet{\mathsfit}{bold}{\encodingdefault}{\sfdefault}{bx}{n}
\newcommand{\R}{\mathbb{R}}
\newcommand{\KL}{D_{\mathrm{KL}}}
\newcommand{\geo}[1]{\overset{\circ}{#1}}
\title{When Are Two Scores Better Than One? \\ Investigating Ensembles of Diffusion Models}
\author{\name Raphaël Razafindralambo \email raphael.razafindralambo@inria.fr \\
      \addr Université Côte d’Azur, Inria, CNRS, I3S, Maasai, Nice, France
      \AND
      \name Rémy Sun \email remy.sun@inria.fr \\
      \addr Université Côte d’Azur, Inria, CNRS, I3S, Maasai, Nice, France
      \AND
      \name Frédéric Precioso \email frederic.precioso@univ-cotedazur.fr\\
      \addr Université Côte d’Azur, Inria, CNRS, I3S, Maasai, Nice, France
      \AND
      \name Damien Garreau \email damien.garreau@uni-wuerzburg.de\\
      \addr Julius-Maximilians-Universität Würzburg, \\ Institute for Computer Science / CAIDAS, Würzburg, Germany
      \AND
      \name Pierre-Alexandre Mattei \email pierre-alexandre.mattei@inria.fr \\
      \addr Université Côte d’Azur, Inria, CNRS, LJAD, Maasai, Nice, France}
\begin{document}

\maketitle

\begin{abstract}
Diffusion models now generate high-quality, diverse samples, with an increasing focus on more powerful models. Although ensembling is a well-known way to improve supervised models, its application to unconditional score-based diffusion models remains largely unexplored. In this work we investigate whether it provides tangible benefits for generative modelling. We find that while ensembling the scores generally improves the score-matching loss and model likelihood, it fails to consistently enhance perceptual quality metrics such as FID on image datasets. We confirm this observation across a breadth of aggregation rules using Deep Ensembles, Monte Carlo Dropout, on CIFAR-10 and FFHQ. We attempt to explain this discrepancy by investigating possible explanations, such as the link between score estimation and image quality. We also look into tabular data through random forests, and find that one aggregation strategy outperforms the others. Finally, we provide theoretical insights into the summing of score models, which shed light not only on ensembling but also on several model composition techniques (e.g. guidance). Our Python code is available at \url{https://github.com/rarazafin/score_diffusion_ensemble}.
\end{abstract}

\section{Introduction}\label{section: introduction}

Diffusion models have emerged as powerful generative models, demonstrating state-of-the-art performance in applications such as image generation \citep{dhariwal2021diffusion}, text-to-image synthesis \citep{saharia2022photorealistic}, video modeling \citep{ho2022video}, graph and molecular design \citep{liu2023generative}, and tabular data generation and imputation \citep{kim2022stasy, jolicoeur2024generating}. These models are inspired by diffusion processes which gradually transform complex data distributions into simple ones, and vice versa.

The key idea behind diffusion models is to define a forward process that progressively adds noise to data over multiple steps \citep{ho2020denoisingdiffusionprobabilisticmodels,song2021scorebasedgenerativemodelingstochastic}, reaching a \engquote{known} distribution, such as Gaussian noise. A reverse process, typically parameterized by a U-Net \citep{ronneberger2015u}, is then trained to denoise the data step by step, reconstructing the original data distribution. Sampling is performed by applying this denoising process iteratively given a model trained to match a score function, starting from the noise.

While diffusion models have demonstrated impressive performance in generating high-quality data, the usual strategy to improve their capabilities is either to train deeper neural networks or to extend the training time  \citep{dhariwal2021diffusion, song2021scorebasedgenerativemodelingstochastic, rombach2022high}. However, this scaling strategy comes with increased computational costs and may not be accessible to all practitioners.

Ensembling \citep{dietterich2000ensemble}, which combines multiple models' outputs to improve performance, offers an overlooked opportunity to improve diffusion without relying on extensive training of one single model. The idea behind ensembles is that each model contributes different errors and can correct one another’s mistakes. These techniques, widely used in statistics and supervised learning, have demonstrated strong empirical results. Ensembling has shown promise for diffusion models as illustrated by the use of Random Forests (an ensemble-based strategy) for unconditional sampling of tabular data \citep{jolicoeur2024generating}. Forms of ensembling are also used for uncertainty estimation of diffusion models, leveraging entropy over the ensemble’s
predictions  \citep{jazbec2025generative,kou2023bayesdiff,de2025diffusion}, and to improve fidelity when targeting specific distributions \citep{liu2022compositional,du2023reducereuserecyclecompositional}.

Interestingly, attempting to directly apply ensembling to diffusion models does not improve the samples as straightforwardly as one might expect. This is precisely what we observe in the preliminary example of  \Cref{fig: first example} where the outputs of the neural networks, which are the ensemble members, are aggregated using a simple mean of score models. Our observations show that an ensemble of five models does not achieve the highest image quality even though it beats the average one. Yet in supervised learning, neural network ensembles yield noticeable gains as soon as $K=2$ \citep{lakshminarayanan2017simple} and generally surpass the best individual model (e.g. \citealp{hansen1990neural}, Figure 4; \citealp{lee2015m}, Table 4).

\begin{figure}[t]
    \centering
    \includegraphics[width=1.\linewidth]{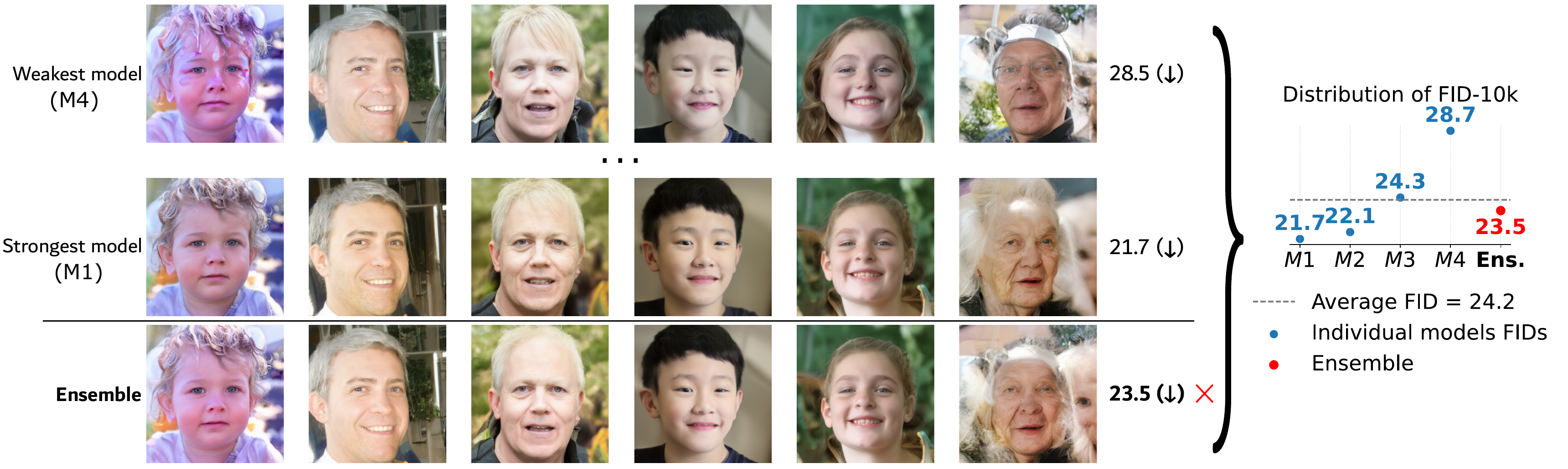}
    \caption{Visual comparison of samples and FID-10k ($\downarrow$) from two individual models from an ensemble of size $K=4$ trained on FFHQ-256, the ensemble using the arithmetic mean and DDIM (\Cref{section: unifying perspective neural network}), and the distribution of FID-10k on the ensemble. The initial noise seed is fixed. Ensembling does not clearly improve results: quantitatively, ensembling does not beat the best model. See \Cref{section: averaging fails} for the evolution of two image quality metrics with respect to $K$.}
    \label{fig: first example}
\end{figure}

\looseness-1
In this paper we investigate the impact of ensembling on diffusion models across multiple methods. Our contribution is to show that \textbf{ensembling yields only marginal, if any, improvements in sample quality}, despite a consistent (but small) reduction in score matching loss. Specifically, we contribute as follows.

\begin{enumerate}[itemsep=0pt, topsep=0pt, leftmargin=1.5em]

    \item We look into ensembling for diffusion models (\Cref{section: how to make ensemble of dms}), using both deep and non-deep base models (\Cref{section: create ensemble of DMs}), and diverse aggregation strategies (\Cref{section: ensemble mechanisms}) 
. We also provide theoretical insights on its effect on the training objective error and contextualize ensembling within other contexts
(\Cref{section: theoretical note}).
    \item We empirically evaluate multiple ensembling methods and show that, despite improving the training objective as predicted by our theoretical analysis, they bring no significant gains on key metrics such as FID and KID on CIFAR-10 and FFHQ, or the Wasserstein distance on tabular data (\Cref{section: exeperiments aggregations}). 

    \item We highlight two settings in which ensembling can be helpful. First, the Mixture of Experts provides small benefits when the models are sufficiently complementary, allowing their diversity to be effectively exploited. Second, in the case of decision-tree aggregation, one method alleviates an underestimation bias issue, leading to improved performance.

    \item We investigate factors affecting ensemble performance, including model diversity (\Cref{section: diversity promoting}) and the mismatch between the score matching loss and perceptual quality metrics that could prevent ensembling from showing clear gains (\Cref{section: explanation correlation score FID}).

\end{enumerate}

\section{Preliminary on ensembles and diffusion models}

\subsection{Ensemble methods}

\looseness=-1
Ensemble methods trace back to the early days of classical machine learning, where they emerged as a powerful strategy to improve model robustness and accuracy \citep{dietterich2000ensemble, yang2023survey}. Rather than relying on a single predictor, ensembles aggregate the outputs of multiple models to reduce variance and avoid overfitting. A notable example is the \emph{Random Forest} algorithm \citep{breiman2001random}, which builds on the principle of \emph{Bagging} \citep{breiman1996bagging} to create a diverse collection of decision trees and combine their predictions. Formally, let $\mathcal{X}$ and $\mathcal{Y}$ denote the input and output spaces. An ensemble is defined as a collection of $K > 1$ predictors,
$\left\{ f^{(k)} : \mathcal{X} \rightarrow \mathcal{Y}, \quad 1 \leq k \leq K \right\}$,
which are combined using an aggregation rule, typically the mean or majority voting. In classification tasks with $L$ classes, the output space $\mathcal{Y}$ corresponds to the $L$-dimensional probability simplex. Over time, ensemble methods have become standard tools in predictive modelling, often outperforming individual models, both in theory \citep{mattei2024ensemblesgettingbettertime} and practice \citep{dietterich2000ensemble,grinsztajn2022tree}. Beyond prediction, ensembles can also be used for generative modelling \citep{riesselman2018deep}, clustering \citep{fleury2025unsupervised}, or statistical inference \citep{bach2008bolasso}.
\looseness=-1

As deep learning became a dominant paradigm in machine learning, ensemble methods were naturally adapted to neural networks. A prominent example is the \emph{Deep Ensemble} (DE), introduced in the 1990s \citep{hansen1990neural} but popularized decades later thanks to its strong empirical results \citep{lakshminarayanan2017simple, stewart2023regression}. It involves constructing $K$ models, each sharing the same neural network architecture, which we denote $f_{\vtheta}^{(k)} \coloneqq f_{\vtheta_k}$ for $1 \leq k \leq K$ with $\vtheta_k$ the parameters of the $k$-th model. A key ingredient for its success is the \emph{diversity} between the individual models \citep{fort2019deep}. This diversity naturally arises from sources such as random initialization and stochastic gradient descent, even when training on the same dataset. To mitigate the cost of training \(K\) separate models, several \engquote{train one, get \(K\) for free} techniques such as Monte Carlo Dropout (MC Dropout, \citealp{srivastava2014dropout, gal2016dropout}) or  Snapshot Ensembles \citep{huang2017snapshot} have been proposed, which are particularly appealing for diffusion models where training multiple networks is computationally expensive.

Regardless of how the ensemble is constructed, test-time inference requires combining the predictions of the $K$ individual models into a single output. Given a test input $x \in \mathcal{X}$, this is done via an aggregation rule
\begin{equation}
f(x;K,\vtheta) = \text{AGG}\left( f^{(1)}_\vtheta(x), f_\vtheta^{(2)}(x), \dots, f_\vtheta^{(K)}(x) \right).
\end{equation} 
A common approach, particularly in regression and probabilistic settings, is the arithmetic mean, in part because it is the combination that minimizes the average squared loss  \citep{wood2023unified}.
\looseness=-1

\subsection{Diffusion models}

In score-based diffusion models, sampling involves learning to estimate the score function.
Since we wish to ensemble these estimated scores, we first recall their purpose within the formulation of diffusion models of \citet{song2021scorebasedgenerativemodelingstochastic}, based on stochastic differential equations (SDEs).

\subsubsection{General framework: score-based SDEs}\label{section: general framework diffusion models}

We introduce three central components that will be used throughout this work:  the denoising SDE (\Cref{eq: backward diffusion process}), its associated numerical approximation for sampling, and the training loss used to learn the score function (\Cref{eq: score_loss}). These components will serve as the basis for explaining how ensembles are trained and how individual models can be combined and evaluated at test time.

Diffusion models aim to generate samples that approximate a given target distribution. Let $q_0(\mathbf{x_0})$ denote this target $d$-dimensional data distribution. Score-based diffusion models \citep{song2020generativemodelingestimatinggradients,song2021maximum} simulate a process in which data $\rvx_0 \sim q_0(\rvx_0)$ is progressively corrupted by noise (the \textit{forward process}) and then denoised to recover the original data distribution (the \textit{backward process}). The forward process is defined by an SDE with continuous time from time $0$ to $T$ of the form
\begin{equation}\label{eq: forward diffusion process}
 \mathrm{d}\rvx  = \mathbf{f}(\rvx, t)\mathrm{d}t + g(t)\mathrm{d}\rvw,
\end{equation}
where $\mathbf{f}: \R^d \times [0,T] \rightarrow \R^d$ and $g: [0,T] \rightarrow [0, +\infty)$ are pre-assigned such that $\rvx_T$ approximates $\mathcal{N}(\vzero,\sigma_T^2 \mI)$, $\rvw_t \in \mathbb{R}^d$ is a standard Wiener process, and $\rvx_0 \sim q_0(\rvx_0)$. Under some regularity conditions, defining $q_t(\rvx)$ as the marginal distribution of $\rvx_t$ and $\bar{\rvw}_t$ as a time-reversed Wiener process, a corresponding reverse-time (backward) SDE with the same marginals can be written \citep{ANDERSON1982313, haussmann1986time}
\begin{equation}\label{eq: backward diffusion process}
\mathrm{d}\rvx = [\mathbf{f}(\rvx,t) - g(t)^2 \nabla_\rvx \log q_{t}(\rvx)]\mathrm{d}t + g(t)\mathrm{d}\bar{\rvw}.
\end{equation}

To draw samples that approximate $q_0$, we follow the procedure of \citet{song2021maximum}: start from Gaussian noise $\rvx_T$ and integrate the backward SDE using typically Euler–Maruyama updates. This involves discretizing the interval $[0,T]$ into $N$ steps $0 = t_0 < \dots < t_N = T$ and applying the solver from $t_N$ down to $t_0$. Provided the model is sufficiently trained and under additional conditions \citep{de2022convergence}, the final sample $\rvx_{t_0}$ is distributed according to a density $p_{0,\vtheta}^{\mathrm{SDE}}$ that approximates $q_0$. 
\Cref{eq: backward diffusion process} also admits a deterministic ODE whose trajectories share the same marginals, allowing the exact computation of the model likelihood via the instantaneous change-of-variables formula \citep{chen2018neural}.
\looseness=-1

The Stein score $\nabla_\rvx \log q_{t}(\rvx)$ is the unknown component of \Cref{eq: backward diffusion process}. Accordingly, score-based diffusion models train to estimate it at each $t$ by minimizing the \textit{Denoising Diffusion Score Matching} (DDSM) objective
\begin{equation}\label{eq: score_loss}
 L_{\DDSM}(\vs_\vtheta) \coloneqq \mathbb{E}_{t \sim [0,T]} \left( \lambda_t \mathbb{E}_{q_{t|0}(\rvx_t|\rvx_0)q_0(\rvx_0)}  \| \vs_{\vtheta} (\rvx_t,t) - \nabla_{\rvx_t} \log q_{t|0} (\rvx_t | \rvx_0) \|^2 \right),
\end{equation}
where $\lambda_t \equiv \lambda(t) > 0$ is a weighting function, and $\vs_{\vtheta} (\rvx_t,t)$, generally a time-dependent neural network, aims to predict $ \nabla_\rvx \log q_{t}(\rvx)$ \citep{song2020generativemodelingestimatinggradients}. Implementations of $\rvs_\vtheta(\rvx_t,t)$ include U-Nets \citep{ronneberger2015u}, Vision Transformers \citep{peebles2023scalable} and Tree-based models \citep{jolicoeur2024generating} such as Random Forests \citep{breiman1996bagging} or Gradient Boosted Trees \citep{friedman2000additive}.
\looseness=-1

\subsubsection{Unifying perspectives on denoising generative processes} \label{section: unifying perspective neural network}

The neural network (or any chosen model) estimation of a time-dependent vector field (e.g., Stein score) is the central element of the denoising process for many generative models. While the continuous SDE formulation estimates the score function \citep{song2021scorebasedgenerativemodelingstochastic} to go to data from noise, other popular generative models such as DDPM \citep{ho2020denoisingdiffusionprobabilisticmodels} and DDIM \citep{song2020denoising} estimate the noise associated to each step. Flow Matching \citep{lipman2022flow} estimates a velocity field to transport one distribution to another, and \citet{albergo2023stochastic} unify this perspective with diffusion models in a general interpolation framework. Here, we illustrate on DDPMs/DDIMs the unifying framework by showing that they can be treated interchangeably with score-based models when reasoning about the score.


Indeed, DDPMs model the forward and backward process as Gaussian Markov chains and learn to predict noise with a network $\vepsilon_{\vtheta}(\rvx_t, t)$. This prediction can be reparameterized into a score estimate as
$\vs_{\vtheta}(\rvx, t) = -\frac{\vepsilon_{\vtheta}(\rvx, t)}{\sigma_t},$
where $\sigma_t$ denotes the standard deviation of the marginal distribution at time $t$. This relation allows us to unify DDPMs, DDIMs, and score-based models within a shared ensemble framework.

\section{How to make an Ensemble of Diffusion models?}\label{section: how to make ensemble of dms}

The aim of this work is to study the application of ensemble methods to diffusion models. 
While ensembling in classical machine learning typically involves training multiple models and simply averaging their outputs once, this straightforward strategy does not apply to diffusion models. Their generative process involves a complex iterative procedure, making the aggregation of multiple models non-trivial. We structure this section in two parts: we first explain how to obtain $K$ distinct diffusion models (\Cref{section: create ensemble of DMs}), and then how to combine their outputs into a single one (\Cref{section: ensemble mechanisms}). We illustrate the general methodology in \Cref{fig: methodology}.

\begin{figure}[t]
    \centering
    \includegraphics[width=.9\linewidth]{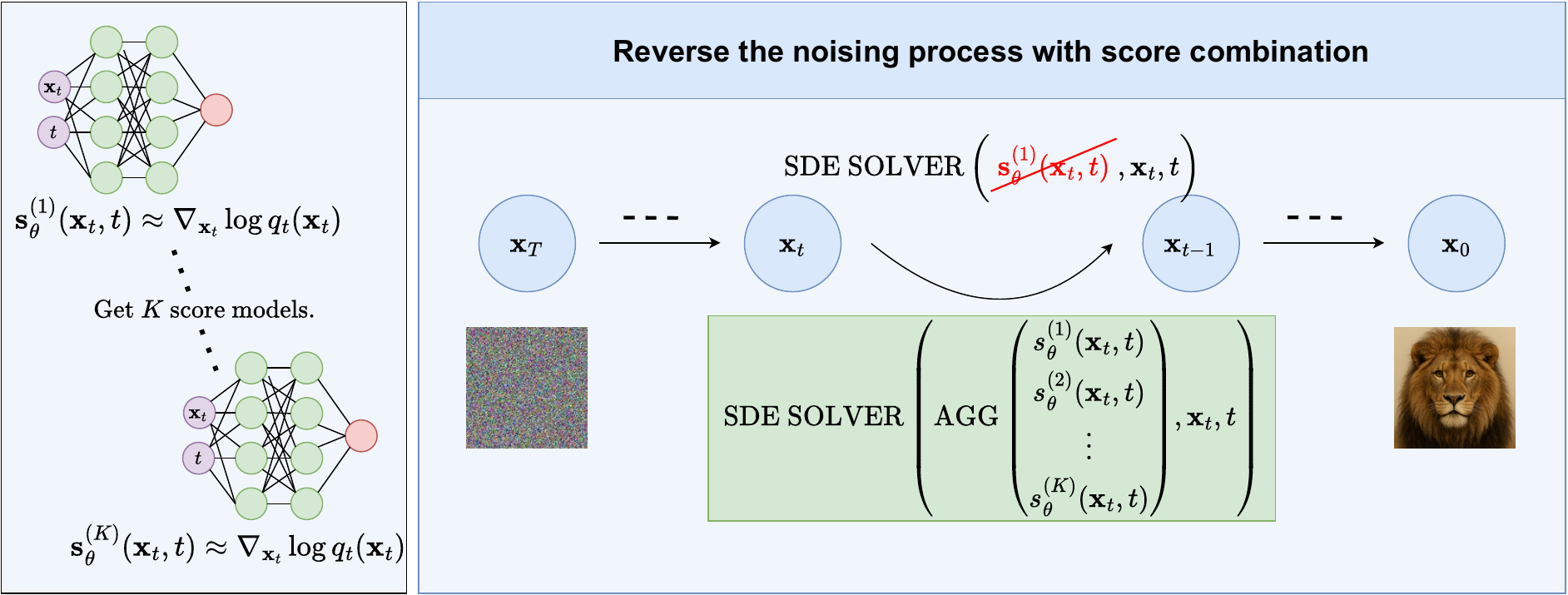}
    \caption{Overview of ensembling within the Diffusion Models framework. We build $K$ score models, for example using Deep Ensemble where each model optimize in parallel the same loss $L_\DDSM$. At inference time, we start from noise and generate $\rvx_0$ using an SDE solver update rule at each step in which we combine the score models using a specific combination rule (e.g. arithmetic mean), instead of using one score model.}
    \label{fig: methodology}
\end{figure}

\subsection{Instantiate $K$ score-based predictors}\label{section: create ensemble of DMs}

Building on the fact that diffusion models learn a time-dependent score function, we now explore how to build a diverse diffusion model ensemble. We consider both approaches based on neural networks, including independently trained U-Nets, as well as ensemble of decision trees.
That said, we will consider DE as the main focus of our empirical study due to its simplicity and widespread use in neural network ensembling.
\subsubsection{Ensembles of neural networks}
We mainly consider the case of neural network ensembles \citep{hansen1990neural}. This is typically implemented through Deep Ensemble. To construct a DE of diffusion models, we independently train $K$ U-Nets $\{ \vs_{\vtheta}^{(k)} \}_{k=1}^K$, each initialized with a different random seed. Hence each model is trained separately on the full dataset to optimize the DDSM objective $L_{\DDSM}$ (\Cref{eq: score_loss}).

At this stage, the procedure is identical to that of constructing a standard deep ensemble in regression \citep{lakshminarayanan2017simple}. A notable point is that the Monte Carlo approximation of the loss exhibits relatively high entropy, due to both the stochastic sampling of timesteps and the injection of Gaussian noise.

Beyond DE, we investigate alternatives techniques to encourage ensemble diversity (discussed in our experiments in \Cref{section: experiments}). Indeed \citet{xu2024towards} showed that diffusion models with different weight initializations (or architectures) tend to converge to similar functions, despite a natural intuition that stochasticity in the loss computation would lead to diverse solutions. This is explained by the smoothness of the loss landscape, and the result is similar outputs at inference time for the same input noise.

One can use Monte Carlo Dropout \citep{srivastava2014dropout, gal2016dropout, ashukha2020pitfalls} (MC-Dropout) as a lightweight alternative to training $K$ separate U-Nets, which can be computationally prohibitive in terms of training time or memory during inference in our ensemble framework. Indeed MC-Dropout keeps dropout \citep{srivastava2014dropout} active at test time on a single model, enabling multiple stochastic forward passes that approximate ensemble behavior without additional training cost.
\subsubsection{Ensembles of decision trees} \label{section: forestdiffusion}
Random Forests - which ensemble individual decision trees - can be integrated into diffusion model frameworks to enable efficient training and sampling. Notably, \citet{jolicoeur2024generating} demonstrated that such tree-based models are particularly effective for tabular data generation, outperforming deep learning–based diffusion methods \citep{kim2022stasy, kotelnikov2023tabddpm}. The method replaces neural networks with standard ensemble-based estimators, training a distinct Random Forest at each noise level to approximate the score function. We complement their study by focusing on a new aspect: while they emphasize that XGBoost is the best method to estimate the score (ahead of Random Forest), we analyze the effect of the number of trees forming a Random Forest on performance and study various aggregation schemes. 

\subsection{How to aggregate the $K$ score predictors?}\label{section: ensemble mechanisms}

Building on the previous section \Cref{section: create ensemble of DMs}, where we introduced how to construct an ensemble of $K$ time-dependent models $\vs_{\vtheta}^{(1)}(\cdot,\cdot), \dots, \vs_{\vtheta}^{(K)}(\cdot,\cdot)$ each trained to estimate the score function at every timestep, we now turn to the question: how can we effectively aggregate these diffusion models? In this section, we introduce novel aggregation schemes for combining the individual score outputs in $\mathbb{R}^d$ from $K$ neural networks at each time step, which we later evaluate in our experiments.

We focus on aggregating diffusion models within a \engquote{democratic} ensemble framework, where each model contributes equally to the aggregation process. Furthermore, we assume that the models are exchangeable, meaning their order does not influence the outcome of the aggregation. Based on these principles, we examine several types of ensembles. Unlike traditional ensemble methods in machine learning, where models generate a single prediction for each instance, our framework has to handle outputs at every timestep of the sampling process. Indeed, each diffusion model corresponds to a complete sampling trajectory. Consequently, we consider ensemble approaches that merge score predictions at each timestep. In the following, we adopt the formalism of score-matching, consequently, focus on the neural network that predicts the score. However each of the methods can be applied within other frameworks (\Cref{section: unifying perspective neural network}), and in particular DDPM.

The \textbf{Arithmetic mean} of scores, used as a first example in \Cref{section: introduction}, is the most natural way to combine the score models at each time step. Let us denote
\begin{equation}
\overline{\vs}^{(K)}_\vtheta(\rvx, t) \coloneqq \frac{1}{K}\sum_{k=1}^K \vs^{(k)}_\vtheta(\rvx, t).
\end{equation}
This is the most common approach for aggregating models to form ensembles, and we provide in \Cref{section: score composition contextualization} a theoretical analysis that discusses an intuitive though non-rigorous interpretation of this combination rule as a geometric mean of densities. In the same way, we can compute the sum of scores $\sum_{k=1}^K \vs_\vtheta^{(k)}(\rvx, t)$ which also admits a meaningful probabilistic interpretation (see \Cref{section: score composition contextualization}) but likely fails by producing out-of-distribution score outputs at each noise level.
The \textbf{Geometric mean} and \textbf{Median} are alternative schemes, which apply coordinate-wise computations to $K$ vector field outputs as the previous method.

In contrast to averaging schemes, we also consider methods that perform random selection among score models. The \textbf{Mixture of experts} strategy choses a unique score model randomly and uniformly prior to sampling, which corresponds to sampling directly from the mixture of densities (see \Cref{app: combination rules}). In \textbf{Alternating sampling}, we exploit the iterative nature of the process by rather randomly selecting a different expert $\vs^{(k)}(\rvx_{t_n}, t_n)$ at each noise level  $t_n$. Finally, in the \textbf{Dominant feature} approach, we select at each timestep the component of largest absolute value across all experts, defined for each $i \in \{1,\dots,d\}$ as
$[\vs^{(k^*)}_\vtheta(\rvx,t)]_i \quad \text{where} \quad
|[\vs^{(k^*)}_\vtheta(\rvx,t)]_i| = \max_{1 \leq k \leq K} |[\vs^{(k)}_\vtheta(\rvx,t)]_i|$.
This strategy selects contributions from the models that have the most to say at each coordinate, with the goal of producing the strongest possible noise and inducing updates of maximal magnitude.

\subsection{Contextualization of score combinations with other uses}\label{section: score composition contextualization}

There exists a commonly used connection between step-wise score model aggregation and sampling from a composition of models, given that scores correspond to gradients of log-densities. Model composition usually refers to combining models that have been trained to address different tasks.

A popular example is classier-free guidance \citep{ho2022classifier}, where scores of different models are summed to guide samples toward a class under a temperature parameter. Specifically, given a class label $\rvy$, and $\gamma \geq 0$, the goal is to target the distribution with density $\tilde{p}(\rvx|\rvy) \propto p(\rvx|\rvy)p(\rvy|\rvx)^{1+\gamma} \propto p(\rvx)^{-\gamma}p(\rvx|\rvy)^{1+\gamma}$. For this distribution, the corresponding score function is straightforward to compute: $
\nabla_\rvx \log \widetilde{p}(\rvx|\rvy)  = \nabla_\rvx \log p(\rvx)^{-\gamma}p(\rvx|\rvy)^{1+\gamma} = (1 + \gamma) \nabla_{\rvx} \log p(\rvx|\rvy) - \gamma \nabla_{\rvx} \log p(\rvx)$. Consequently, to sample from the target distribution, the reverse SDE uses the score function given for all $t>0$ by the approximation
\begin{equation}\label{eq: guidance equality score}
\nabla_\rvx \log \widetilde{p}_t(\rvx|\rvy) \approx (1 + \gamma) \nabla_{\rvx} \log p_t(\rvx|\rvy) - \gamma \nabla_{\rvx} \log p_t(\rvx),
\end{equation}
where both score functions are estimated using two diffusion models $\rvs_\vtheta(\rvx,t \mid \rvy)$ and $\rvs_\vtheta(\rvx,t)$. \citet{liu2022compositional} adopts a generalizing framework of classifier-free guidance to combine multiple features. \citet{alexanderson2023listen} apply this approach in another context and refers to it as a Product-of-Experts. In the context of experimental design, \citet{iollo2025bayesian} combine posterior distributions by averaging their scores. 

Interestingly, this mirrors our setting: averaging $K > 1$ score models across all $t > 0$ corresponds to combining densities. Specifically, given $K$ probability density functions $p^{(1)}(\rvx),\dots,p^{(K)}(\rvx)$, the average score
$\frac{1}{K} \sum_{k=1}^K \nabla_\rvx \log p^{(k)}(\rvx)$
is the score of the normalized geometric mean of the densities, which we define for the remainder of the paper as the \textit{Product-of-Experts} (PoE):
\begin{equation}
\geo{p}^{(K)}(\rvx) \coloneqq 
\sqrt[K]{p^{(1)}(\rvx)\dots p^{(K)}(\rvx)} \big/ Z_K, 
\quad 
Z_K = \int \sqrt[K]{p^{(1)}(\rvx)\dots p^{(K)}(\rvx)}\, \mathrm{d}\rvx.
\end{equation}
The same terminology has been used to describe generalized geometric means with arbitrary exponents in the context of Gaussian Processes \citep{liu2018generalized,cohen2021healing}. Applying the same logic as for guidance, ensembling in the backward process for every $t>0$ would estimate \begin{equation}\label{eq: equality score PoE}
\frac{1}{K} \sum_{k=1}^K \nabla_\rvx \log p_t^{(k)}(\rvx) \approx \nabla_\rvx \log \geo{p}_t^{(K)}(\rvx),
\end{equation} where $t > 0$ refers to distributions obtained by noising the data distribution at $t = 0$. Alternatively, summing the scores corresponds to the unnormalized product $\prod_k p^{(k)}(\rvx)$, also referred to as PoE in \citet{hinton2002}.

Some works combine multiple diffusion models in alternative ways. For instance, \citet{balaji2022ediff, feng2023ernie} combine expert denoisers specialized for different stages of the generative process to better preserve text-conditioning signals. While these approaches do not perform score averaging, they can still be viewed as a form of ensembling, similarly to our Alternating sampling method.

\section{Does averaging scores make sense? A theoretical insight}\label{section: theoretical note}

In this section, we examine whether averaging scores at each step makes sense from a theoretical perspective. We state that while it provides a simple motivation for ensembling by reducing the $L_\DDSM$ objective in expectation (\Cref{section: K models better for score}), we have to caution against a natural but misleading interpretation as geometric mean of densities (\Cref{section: PoE pitfalls}). We show that in fact diffusion and composing densities in this way do not commute, a relevant point beyond ensembling, as similar compositions arise in guidance-based frameworks.

\subsection{$K+1$ models are better than $K$ for score estimation} \label{section: K models better for score}

The following proposition shows that, under mild assumptions, an ensemble of $K$ models $\vs_\vtheta^{(1)},\dots,\vs^{(K)}_\vtheta$ leads to a better expected loss than $K$ models. Such results, mostly based on Jensen's inequality, are classical and well established in machine learning (e.g., regression) for usual convex losses like MSE \citep{mcnees1992uses,breiman1996bagging,mattei2024ensemblesgettingbettertime}. Here we extend them to the score estimation context.

\begin{proposition}[Monotonicity for the DDSM loss]\label{proposition: ensemble monotonicity true score loss}
Let $\vs_\vtheta^{(1)}, \dots, \vs_\vtheta^{(K)}$ be $K$ score estimators mapping $\R^d \times [0,T]$ to $\R^d$. If for any pair of random variables $(\rvx_t, t) \in \R^d \times [0,T] $, the outputs $\{\vs^{(1)}_\vtheta(\rvx_t,t)\}_{k=1}^K $ are identically distributed (i.d.) conditional on $(\rvx_t,t)$, then
\begin{equation}
\mathbb{E}\left[ L_{\DDSM}(\overline{\vs}_\vtheta^{(K+1)}) \right] \leq \mathbb{E}\left[ L_{\DDSM}(\overline{\vs}_\vtheta^{(K)}) \right].
\end{equation}
\end{proposition}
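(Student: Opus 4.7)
The plan is to reduce this to a classical variance-reduction-by-averaging argument via Jensen's inequality applied to the convex map $\vu \mapsto \|\vu\|^2$, in the spirit of the MSE/bagging analyses cited just before the statement.

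The first step is to peel back the expectations in $L_\DDSM$. I define, for each $k$, the pointwise error $\vu^{(k)} \coloneqq \vs^{(k)}_\vtheta(\rvx_t,t) - \nabla_{\rvx_t}\log q_{t|0}(\rvx_t\mid\rvx_0)$, so that the inner integrand of $L_\DDSM(\overline{\vs}^{(K)}_\vtheta)$ becomes $\|K^{-1}\sum_{k=1}^K \vu^{(k)}\|^2$. Since $\lambda_t$, $q_{t|0}$ and $q_0$ do not depend on the ensemble members, Fubini allows me to swap the outer $\mathbb{E}$ (over the ensemble randomness) with the integrals over $(t,\rvx_0,\rvx_t)$, so it suffices to establish, pointwise in $(\rvx_t,t,\rvx_0)$,
\[ \mathbb{E}\Bigl\|\tfrac{1}{K+1}\sum_{k=1}^{K+1}\vu^{(k)}\Bigr\|^2 \;\leq\; \mathbb{E}\Bigl\|\tfrac{1}{K}\sum_{k=1}^{K}\vu^{(k)}\Bigr\|^2. \]

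The combinatorial core is the leave-one-out identity, which a one-line double-counting argument gives:
\[ \tfrac{1}{K+1}\sum_{k=1}^{K+1}\vu^{(k)} \;=\; \tfrac{1}{K+1}\sum_{j=1}^{K+1}\overline{\vu}^{(K,-j)}, \quad \overline{\vu}^{(K,-j)} \coloneqq \tfrac{1}{K}\sum_{k\neq j}\vu^{(k)}. \]
Applying Jensen's inequality to the convex function $\|\cdot\|^2$ yields the pointwise bound $\|\overline{\vu}^{(K+1)}\|^2 \leq \frac{1}{K+1}\sum_{j=1}^{K+1}\|\overline{\vu}^{(K,-j)}\|^2$. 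Taking expectations over the ensemble randomness and using the conditional identical-distribution hypothesis to equate each $\mathbb{E}\|\overline{\vu}^{(K,-j)}\|^2$ with $\mathbb{E}\|\overline{\vu}^{(K)}\|^2$ closes the pointwise step, and pushing the inequality back through the $(t,\rvx_0,\rvx_t)$ integrals gives the claim on $L_\DDSM$.

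The subtle point — and what I expect to be the main obstacle — is that identical \emph{marginal} distributions of the individual $\vs^{(k)}_\vtheta(\rvx_t,t)$ do not, by themselves, force the $K+1$ leave-one-out averages to share a common distribution: one actually needs joint exchangeability of $(\vs^{(1)}_\vtheta,\dots,\vs^{(K+1)}_\vtheta)$ conditional on $(\rvx_t,t)$. This is automatic in the practically relevant instantiations (Deep Ensembles with i.i.d.\ seeds, MC Dropout with independent masks) because the members are then conditionally i.i.d., but I would recommend strengthening the hypothesis statement to exchangeability (or independence plus identical marginals) rather than leaving the required symmetry implicit in the abbreviation \engquote{i.d.}.
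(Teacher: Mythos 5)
Your proof is correct and follows essentially the same route as the paper's (Appendix~\ref{app: arithmetic mean theoretical results}): the leave-one-out averaging identity, Jensen's inequality applied to the convex squared norm, and the identical-distribution hypothesis to equate the expected losses of the leave-one-out averages, with only the indexing ($K{+}1$ vs.\ $K$ rather than $K$ vs.\ $K{-}1$) differing. Your closing remark is also well-taken: the paper's step $\mathbb{E}[\mathcal{L}_\DDSM(\frac{1}{K-1}\sum_{k\neq j}\vs_\vtheta^{(k)},\lambda(\cdot))]=\mathbb{E}[\mathcal{L}_\DDSM(\overline{\vs}_\vtheta^{(K-1)},\lambda(\cdot))]$ indeed requires equality of the \emph{joint} laws of all $(K{-}1)$-subsets (i.e.\ exchangeability, which holds for the conditionally i.i.d.\ Deep Ensemble members the paper has in mind), not merely identical marginals.
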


See \Cref{app: arithmetic mean theoretical results} for a proof. We also discuss in the latter the i.d. assumption of this proposition and provide a similar but more general result on classical score-matching \citep{hyvarinen2005estimation}. We also provide results based on KL divergence between probability paths, linking ensembling to improvements in likelihood.

\subsection{Diffused PoE is not the PoE of diffused distributions}\label{section: PoE pitfalls}

In this section, we show that diffusion and composition do not commute. More precisely, we exhibit a straightforward counter example demonstrating that contrary to natural intuition, adding noise to distributions (e.g., according to a noise scale $\sigma_t$) and then combining them (e.g., geometric mean) does not yield the same result as combining them first and then adding noise. Such result has already been stated in prior work for the product of two densities \citep{du2023reducereuserecyclecompositional}. \citet{chidambaram2024does} separately provided a counterexample showing that the supports of the resulting distributions generally differ, which consequently implies non-commutativity. Here, we complement these works with an even simpler and more direct one in \Cref{section: proposition gaussian} centered on the non-commutativity. Then, we contextualize the result in \Cref{section: what does it say}.

\subsubsection{The simple Gaussian counterexample}\label{section: proposition gaussian}

Let us now formalize this result. We consider the case where the initial distributions are $K$ centered Gaussians with diagonal covariance matrices, i.e., $\mathcal{N}(\rvx; \vzero, \alpha_k \mI)$ for $k = 1, \dots, K$ with $\alpha_k > 0$. Moreover, we adopt a simplified setting detailed in \Cref{app: diffused poe is not poe: gaussian case}. In brief, we focus on a specific instance of the Variance Preserving SDE framework \citep{song2021scorebasedgenerativemodelingstochastic}. To summarize, our counterexample demonstrates that noise and PoE generally do not commute, except when all initial densities are identical, actually reducing PoE to one model.

\begin{proposition}\label{proposition: counter example gaussian case true}
Let $p_0^{(k)} = \mathcal{N}(\vzero, \alpha_k \mI)$ for all $k \leq K$. We denote the PoE of these  distributions by
$\geo{p}^{(K)}_0 \coloneqq \mathrm{PoE}(p_0^{(1)}, \dots, p_0^{(K)})$. Furthermore, for any density $q_0$, we write $q_t$ the distribution obtained by diffusing it under the forward VP SDE. Then, for any $t>0$, $\geo{p}^{(K)}_t \neq \text{PoE}(p^{(1)}_t, \dots,p^{(K)}_t)$ unless $ \alpha_1 = \dots = \alpha_K$.

\end{proposition}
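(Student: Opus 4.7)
The plan is to reduce the problem to a one-dimensional variance comparison by exploiting that every distribution in sight is a centered isotropic Gaussian. First, I recall two simple facts about this setting. (i) Under the forward VP SDE starting from $\mathcal{N}(\vzero, \alpha \mI)$, the marginal at time $t$ is $\mathcal{N}(\vzero, (\gamma_t \alpha + 1 - \gamma_t)\mI)$ with $\gamma_t = \exp(-\int_0^t \beta(s)\,\mathrm{d}s) \in (0,1)$ for every $t \in (0,T]$. (ii) A direct log-density calculation shows that the $\PoE$ of $K$ centered isotropic Gaussians with variances $\sigma_1^2, \dots, \sigma_K^2$ is itself a centered isotropic Gaussian whose variance equals their harmonic mean $H(\sigma_1^2, \dots, \sigma_K^2) \coloneqq K / \sum_k 1/\sigma_k^2$. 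Applied in the two orders described in the proposition, these facts yield two centered isotropic Gaussians, so the question collapses to comparing two scalar variances.

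Concretely, diffusing the PoE first yields $\geo{p}^{(K)}_t = \mathcal{N}(\vzero, \tilde\alpha_t \mI)$ with $\tilde\alpha_t = \gamma_t H(\alpha_1, \dots, \alpha_K) + 1 - \gamma_t$, whereas taking the PoE of the diffused Gaussians yields $\PoE(p_t^{(1)},\dots,p_t^{(K)}) = \mathcal{N}(\vzero, \hat\alpha_t \mI)$ with $\hat\alpha_t = H(\gamma_t \alpha_1 + 1 - \gamma_t, \dots, \gamma_t \alpha_K + 1 - \gamma_t)$. Setting $b_k = 1/\alpha_k$ and $\bar b = \frac{1}{K}\sum_{k=1}^{K} b_k$, a short rearrangement gives
\begin{equation*}
\frac{1}{\tilde\alpha_t} = \phi_{\gamma_t}(\bar b), \qquad \frac{1}{\hat\alpha_t} = \frac{1}{K}\sum_{k=1}^{K} \phi_{\gamma_t}(b_k), \qquad \text{where } \phi_\gamma(b) \coloneqq \frac{b}{\gamma + (1-\gamma)b}.
\end{equation*}

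The conclusion then follows from Jensen's inequality. A one-line derivative computation gives $\phi_\gamma''(b) = -2\gamma(1-\gamma)/(\gamma + (1-\gamma)b)^3$, which is strictly negative for every $\gamma \in (0,1)$ and $b > 0$; hence $\phi_{\gamma_t}$ is strictly concave on $(0,\infty)$ for every $t \in (0,T]$. Strict Jensen applied to the $b_k$'s then gives $1/\hat\alpha_t < 1/\tilde\alpha_t$, with equality iff $b_1 = \dots = b_K$, i.e.\ iff $\alpha_1 = \dots = \alpha_K$. Therefore $\tilde\alpha_t \neq \hat\alpha_t$, and the two centered isotropic Gaussians differ, whenever the $\alpha_k$'s are not all equal. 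The only step requiring any care is fact (ii) — one must verify that the normalising constant $Z_K$ is finite (so the PoE is a genuine density) and correctly identify the resulting variance — but this is a short Gaussian computation; everything else is bookkeeping plus a single application of Jensen's inequality.
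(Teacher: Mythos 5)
Your proof is correct, and it reaches the same reduction as the paper — both arguments boil the statement down to comparing the scalar variances $\tilde\alpha_t = \gamma_t H(\alpha_1,\dots,\alpha_K) + (1-\gamma_t)$ and $\hat\alpha_t = H(\gamma_t\alpha_1 + 1-\gamma_t,\dots,\gamma_t\alpha_K + 1-\gamma_t)$, using the facts that the VP marginal of $\mathcal{N}(\vzero,\alpha\mI)$ is $\mathcal{N}(\vzero,(\gamma_t\alpha+1-\gamma_t)\mI)$ and that the PoE of centered isotropic Gaussians is the centered isotropic Gaussian whose variance is the harmonic mean of the individual variances. Where you genuinely diverge is in how you prove $\tilde\alpha_t \le \hat\alpha_t$ with equality iff all $\alpha_k$ coincide. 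The paper establishes this via super-additivity of the harmonic mean, $H(\mathbf{a})+H(\mathbf{b})\le H(\mathbf{a}+\mathbf{b})$ with equality iff $\mathbf{a}\propto\mathbf{b}$, which it derives from a reverse Minkowski inequality, itself proved through reverse H\"older and reverse Young — a three-lemma chain. You instead pass to precisions $b_k = 1/\alpha_k$, observe that both reciprocal variances are expressed through the single function $\phi_\gamma(b) = b/(\gamma+(1-\gamma)b)$ — one as $\phi_{\gamma_t}(\bar b)$, the other as $\frac{1}{K}\sum_k\phi_{\gamma_t}(b_k)$ — and conclude by strict Jensen since $\phi_\gamma''(b) = -2\gamma(1-\gamma)/(\gamma+(1-\gamma)b)^3 < 0$ on $(0,\infty)$ for $\gamma\in(0,1)$. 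I checked the algebra: both identities and the second derivative are correct, and strict concavity gives exactly the right equality case ($b_1=\dots=b_K$ iff $\alpha_1=\dots=\alpha_K$). Your route is shorter and more elementary, trading the paper's self-contained reverse-inequality machinery for a one-variable convexity computation; the paper's lemmas are more general-purpose but are overkill for this isotropic case. Two cosmetic points: the phrase ``strict Jensen \dots gives $1/\hat\alpha_t < 1/\tilde\alpha_t$, with equality iff'' should be reworded (state the weak inequality with its equality case, or state the strict inequality under the hypothesis that the $b_k$ are not all equal); and your $\gamma_t = \exp(-\int_0^t\beta(s)\,\mathrm{d}s)$ differs from the paper's normalization of the VP SDE by a factor of two in the exponent, but since all that matters is $\gamma_t\in(0,1)$ for $t>0$, this does not affect the argument.
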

\subsubsection{Implications for PoE and Diffusion Guidance}\label{section: what does it say}

Having established that diffusion and PoE do not generally commute, we now turn to discussing the implications in two distinct contexts: the ensemble framework studied in this paper and the guidance setting.

To begin with, \Cref{proposition: counter example gaussian case true} straightforwardly shows that \Cref{eq: equality score PoE} is generally \textbf{not an equality} for any $t > 0$, as it directly follows from applying the score function. More importantly, and more generally, it demonstrates that step-wise score averaging (resp. summing) does \textbf{not} correspond to sampling from a PoE (resp. normalized product of densities). If it were true, such result would be beneficial for two reasons we detail in \Cref{app: combination rules} (simplifies theory, eliminates low-probability regions).

Following this, \Cref{eq: guidance equality score} is also \textbf{not an equality}, and the guidance method proposed by \citet{ho2022classifier} does not produce the target distribution $\widetilde{p}(\rvx|\rvy)$, although it approximates it closely. This approach of combining diffusion models should be viewed as a heuristic for approximating the desired distribution. Some previous works have pointed out this subtlety, and proposed methods to target more closely the true distribution using MCMC-based correctors \citep{du2023reducereuserecyclecompositional} or the Feynman–Kac equation \citep{skreta2025feynman}.

These results also extend to classical inverse problems as discussed in \Cref{app: implications of non commutativity}, in which we target a conditional distribution $p(\rvx | \rvy) \propto p(\rvx)p(\rvy | \rvx)$ given an \textit{observed signal} $\rvy$.

\section{Experiments}\label{section: experiments}
In our experiments, we show that ensembling generally does not lead to consistent improvements in the sample quality of diffusion models. We adopt as our set of practices the study of simple democratic ensembles, that is, ensembles with equal-weight aggregation of independently trained score (or noise) predictors (\Cref{section: ensemble mechanisms}) sharing the same architecture. We first explore the straightforward strategy of averaging the score predictions of independently trained models at each timestep and find that it fails to enhance visual quality (\Cref{section: averaging fails}). We then experiment various ensemble strategies, and aggregation methods ranging from simple arithmetic and geometric means to more elaborate strategies, in an attempt to identify which factors, if any, can mitigate the shortcomings (\Cref{section: ablation study}). We follow the common practice of studying both neural network ensembles and decision tree ensembles \citep{wood2023unified,theisen2023ensembles}. Finally, we seek to explain them by analyzing the relationship between image quality and the objective the models are actually optimized for within the ensemble (\Cref{section: explanation correlation score FID}).

We train models on tabular data, on CIFAR-10 (32×32), and on FFHQ (256×256), using different architectures. We measure the FID and KID on 10k samples\footnote{ The standard is 50k samples. Due to the smaller sample size, FID (often referred to as FID-10k in this context, but we simply refer to it as FID) tends to be overestimated, as it is biased upward.}, which are two perceptual metrics, and $L_\DDSM$.
For a given maximum ensemble size $K_{\text{max}}$, we report confidence intervals for the variability across model subsets when evaluating FID for smaller ensemble sizes (e.g. for $K=2$ we take all the subsets of size 2 in the whole set of size $K$). 
For the $L_\DDSM$ loss, intervals are computed from 10 runs. In \Cref{section: explanation correlation score FID}, FID intervals are estimated using bootstrap. KID follows the same procedure as FID. Datasets, model choices, training setups, evaluation methods, and confidence intervals are detailed in \Cref{app: experiment details}.

\subsection{When ensemble averaging fails: insights from image quality metrics}\label{section: averaging fails}

We show that applying simple ensembling using arithmetic mean yields either little or no improvements in image quality despite reducing the score matching loss.
In \Cref{fig: fid-cifar,fig: kid-cifar,fig: fid-ffhq,fig: kid-ffhq,fig: loss-cifar}, we evaluate this approach by progressively increasing the ensemble size $K$ up to $K=5$ or $K=10$.

\begin{figure}[H]
    \centering
    \begin{subfigure}{0.19\linewidth}
        \includegraphics[width=\linewidth]{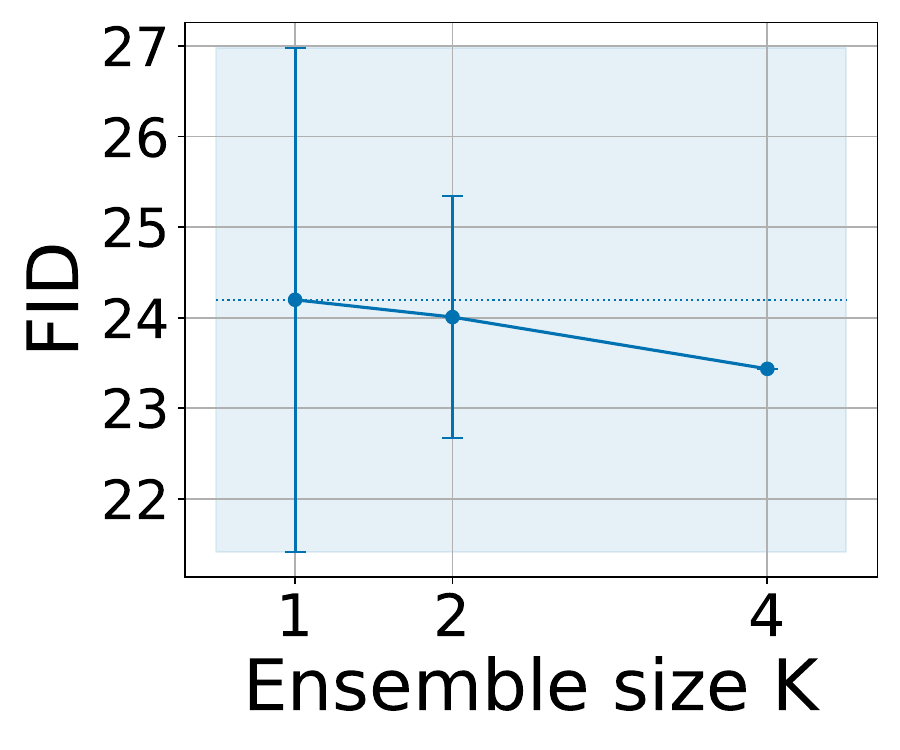}
        \caption{FID (FFHQ)}
        \label{fig: fid-ffhq}
    \end{subfigure}
    \hfill
    \begin{subfigure}{0.19\linewidth}
        \includegraphics[width=\linewidth]{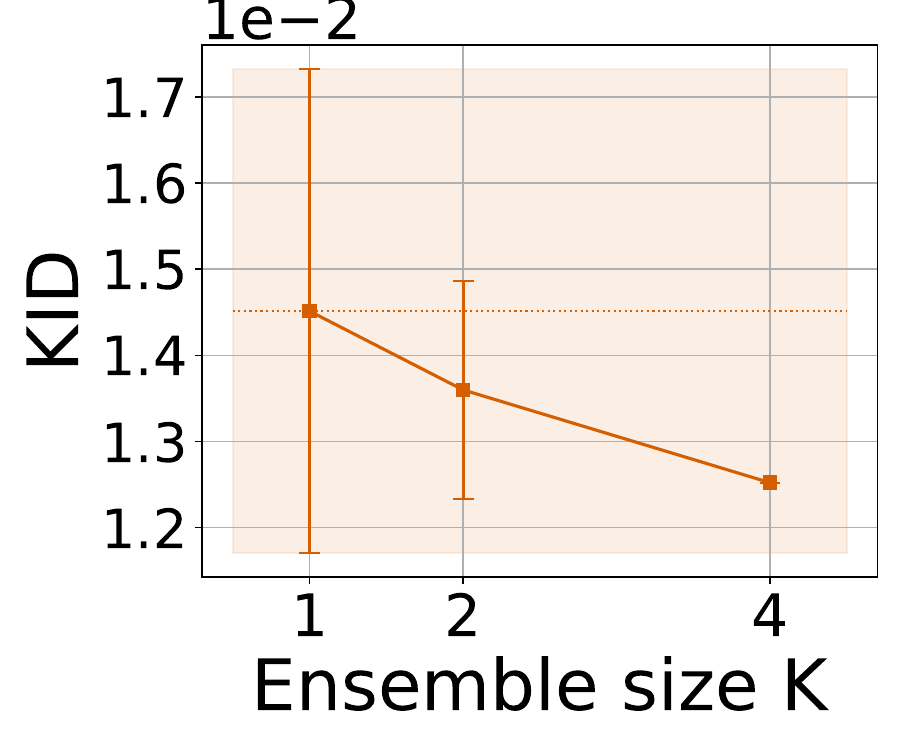}
        \caption{KID (FFHQ)}
        \label{fig: kid-ffhq}
    \end{subfigure}
    \hfill
    \begin{subfigure}{0.19\linewidth}
        \includegraphics[width=\linewidth]{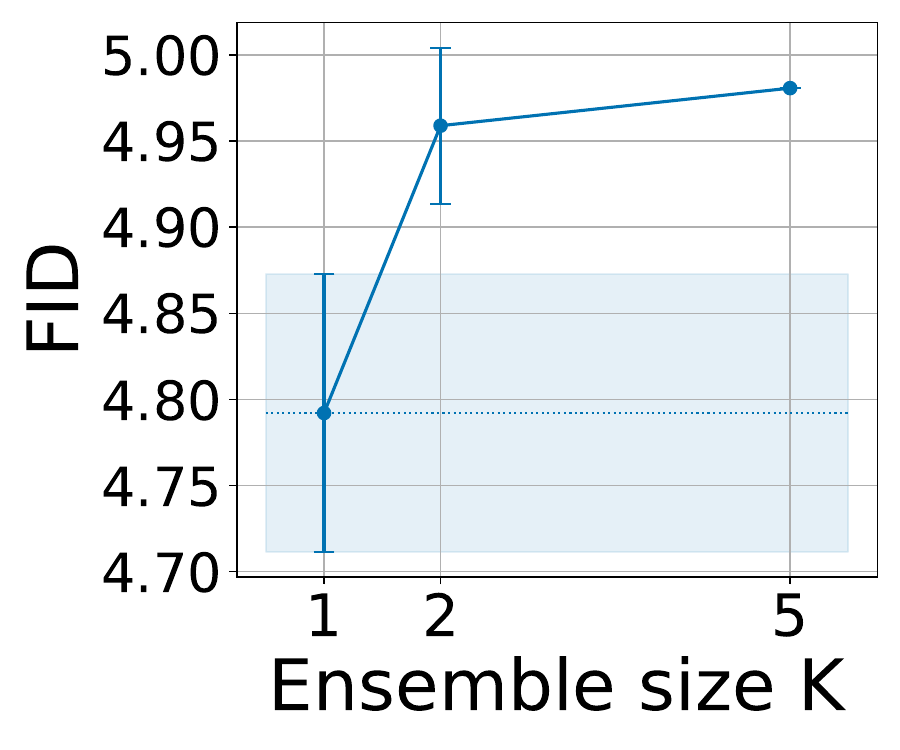}
        \caption{FID (CIFAR)}
        \label{fig: fid-cifar}
    \end{subfigure}
    \hfill
    \begin{subfigure}{0.19\linewidth}
        \includegraphics[width=\linewidth]{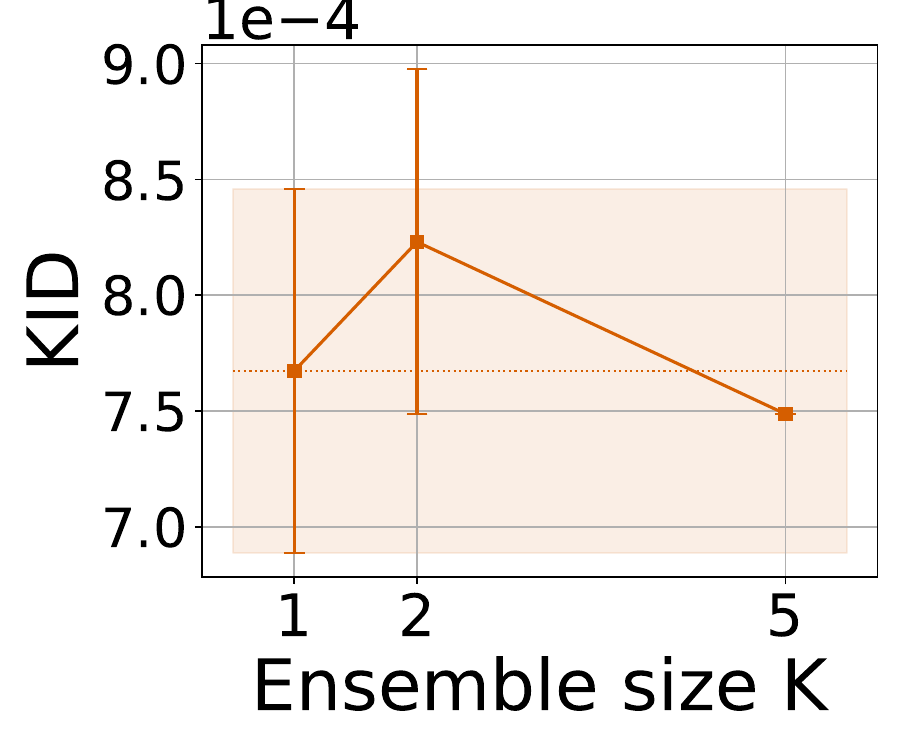}
        \caption{KID (CIFAR)}
        \label{fig: kid-cifar}
    \end{subfigure}
    \hfill
    \begin{subfigure}{0.19\linewidth}
        \includegraphics[width=\linewidth]{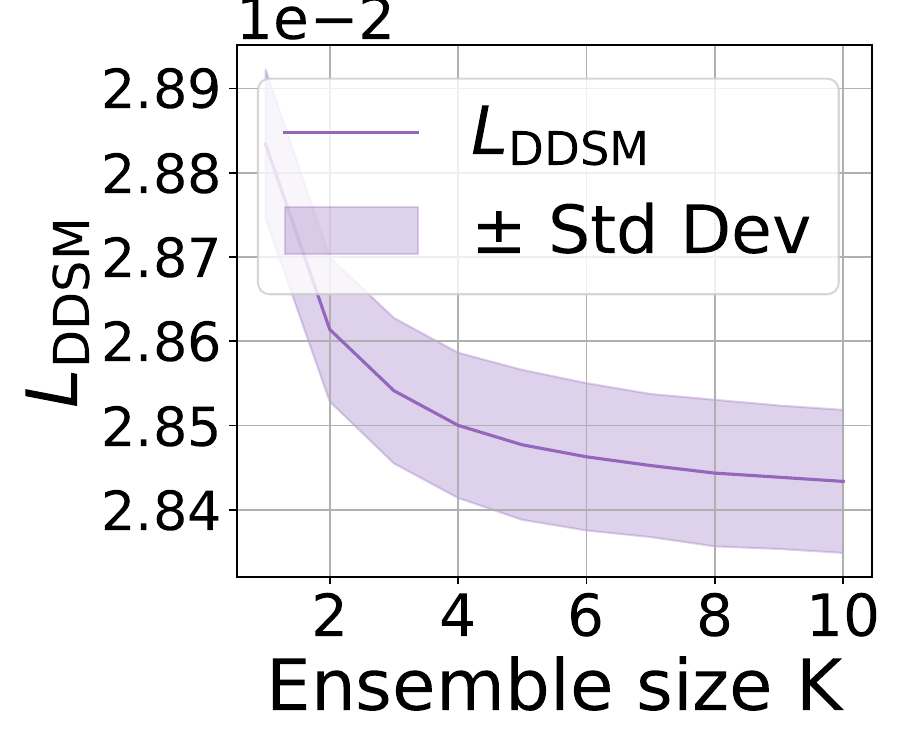}
        \caption{$L_{\text{DDSM}}$ (CIFAR)}
        \label{fig: loss-cifar}
    \end{subfigure}
    \caption{Evolution of FID ($\downarrow$) and KID ($\downarrow$) and $L_\DDSM$ ($\downarrow$) in function of ensemble size. Samples of the models in \Cref{fig: fid-ffhq,fig: kid-ffhq} are displayed in \Cref{fig: first example}. $L_\DDSM$ is only evaluated on CIFAR-10 since it corresponds to its training objective. See \Cref{app: experiment details} for details on each experiment (uncertainty bounds).}
    \label{fig:all-metrics}
\end{figure}

This result is surprising since the main heuristic justification of ensembles is that groups are collectively wiser than individuals, as aggregated decisions tend to exhibit reduced variance. Moreover, because the score matching loss, used as the training objective, consistently decreases with larger ensembles, one might expect the models to achieve better approximation of the target distribution and produce improved samples.

Here, however, we find that simple ensembling yields marginal, if any, gains over the average model and never surpasses the best individual model in perceptual metrics such as FID and KID. We stress that surprisingly, as shown on CIFAR-10 in Appendix (\Cref{fig: distribution fid cifar}), ensembling can even perform worse than the weakest individual model.

\Cref{fig: fid ffhq sampling steps} (Appendix) shows further unsuccessful attempts to use score averaging to compensate for discretization errors with fewer DDIM steps on FFHQ. We also tried applying in \Cref{tab: ensembling early timesteps} (Appendix) only during the early sampling steps to test whether averaging is mainly harmful near the data space, but it did not outperform individual models. All these findings suggest that selecting an ensembling method like arithmetic mean that reduces the training loss is insufficient, and that further progress will require going beyond this aggregation strategy.

\subsection{Investigation on ensemble approaches}\label{section: ablation study}

We explore a range of ensemble variants to identify promising directions and better understand simple averaging failures. First, we evaluate alternative aggregation strategies (\Cref{section: ensemble mechanisms}) beyond the arithmetic mean. Next, we evaluate the MC Dropout approach (\Cref{section: MC Dropout}). We also investigate whether encouraging diversity among models can improve performance (\Cref{section: diversity promoting}). Finally we extend our analysis with the non-deep learning approach described in \Cref{section: forestdiffusion} using Forest-VP with Random Forest (\Cref{section: exeperiments aggregations}).

\subsubsection{Comparative study of aggregation techniques}\label{section: exeperiments aggregations}

We verify whether the behavior observed by averaging scores holds across all the aggregation methods given in \Cref{section: ensemble mechanisms} on both datasets and show that the perceptual improvements are at best marginal (see \Cref{tab: aggregation_results}).

\newcolumntype{G}{>{\centering}b{0.065\textwidth}}
\newcolumntype{C}{>{\centering\arraybackslash}b{0.065\textwidth}}
\definecolor{LightCyan}{rgb}{0.88,1,1}
\definecolor{LightOrange}{rgb}{1,0.85,0.70}
\definecolor{DarkCyan}{rgb}{0,0.8,0.8}
\definecolor{DarkOrange}{rgb}{1,0.50,0.0}

\newcolumntype{G}{>{\centering\arraybackslash}b{0.065\textwidth}}
\definecolor{LightCyan}{rgb}{0.88,1,1}
\definecolor{LightOrange}{rgb}{1,0.85,0.70}
\definecolor{DarkCyan}{rgb}{0,0.8,0.8}
\definecolor{DarkOrange}{rgb}{1,0.50,0.0}

\newcolumntype{G}{>{\centering\arraybackslash}p{0.13\textwidth}}

\begin{table}[t]
    \centering
    \caption{Comparison of different aggregation methods on FID, KID, and $L_\DDSM$. The best value for each metric is highlighted in bold. We exclude Alternating Sampling and Mixture of Experts, as $L_\DDSM$ reduces to a simple average of individual losses. We focus on aggregations where actual loss reduction is expected. Standard deviations (in parentheses) are reported when applicable; see \Cref{app: experiment details} for computation details.
 }
    \small
    \begin{tabular}{l G G G G G}
    \toprule
    & \multicolumn{3}{c}{CIFAR-10 ($32 \times 32$) and $K=5$} & \multicolumn{2}{c}{FFHQ ($256 \times 256$) and $K=4$} \\
    Deep Ensemble & FID $\downarrow$ & KID $\downarrow$ & $L_\DDSM$ $\downarrow$ & FID $\downarrow$ & KID $\downarrow$ \\\midrule
    \rowcolor{LightOrange}
    Individual models & \textbf{4.79}~(0.08) & \textbf{0.0008}~(0.0001) & 0.0284~(0.0005) & 24.20~(5.70) & 0.015~(0.003) \\
    Arithmetic mean & 4.98 & 0.0007 & \textbf{0.0280}~(0.0005) & 23.44 & 0.013 \\
    Geometric mean & 4.97 & 0.0008 & \textbf{0.0280}~(0.0005) & 24.32 & 0.013 \\
    Dominant & 7.88 & 0.0041 & 0.0287~(0.0004) & 46.79 & 0.040 \\
    Median & 4.98 & 0.0009 & 0.0281~(0.0006) & 23.13 & 0.012 \\
    Alternating sampling & 5.05 & 0.0008 & - & 23.07 & 0.012 \\
    Mixture of experts & 4.93 & 0.0008 & - & \textbf{20.36} & \textbf{0.011} \\
    \bottomrule
    \rowcolor{LightCyan}
    Best individual model & 4.65 & 0.0006 & 0.0282~(0.0006) & 21.7 & 0.012 \\
    \bottomrule
    \end{tabular}
    \label{tab: aggregation_results}
\end{table}

The four aggregation schemes that combine models at each timestep show no significant differences. On CIFAR-10, these ensembles fail to outperform the average performance of individual models, even though slight improvements in the score matching objective can be observed. On FFHQ, they generally achieve marginal gains over individual models but still do not surpass the best single model. This supports the notion that reducing the training loss alone is insufficient to achieve improvements in perceptual quality.

Among the alternative methods, Mixture of Experts appears most effective, outperforming even the best model on FFHQ. This suggests that randomly selecting a model prior to each sampling run increases diversity across generated samples by leveraging inter-model variability. However, since only one model contributes to each generated image, this approach may not qualify as a true ensemble in traditional sense.
Additional results for aggregation schemes, including the sum, are reported in \Cref{app: agg schemes additional results} as they did not lead to improvements, and are discussed theoretically in  \Cref{app: arithmetic mean theoretical results,app: high level ensemble}.

\subsubsection{MC Dropout instead of Deep Ensemble}\label{section: MC Dropout}

We also evaluate MC Dropout on CIFAR-10 with up to $K=20$ models and find that it performs no better—and often worse—than Deep Ensembles (see \Cref{tab: mc_dropout_results} in appendix). FID degrades from 4.83 (no dropout) to 5.85 at best, which corresponds to Arithmetic mean, KID doubles, and $L_\DDSM$ slightly increases by 0.0002. We do not measure it on FFHQ since the
associated model is not trained with Dropout. 
The drop in image quality aligns with a deterioration of $L_\DDSM$, indicating that MC Dropout’s Monte Carlo approximation weakens the network and introduces harmful noise in score estimation. As a result, it not only fails to improve over DE but can even degrade performance on both perceptual metrics and $L_\DDSM$.

\subsubsection{Diversity-promoting strategies}\label{section: diversity promoting}

In this section we check if encouraging predictive diversity between models lead to improvements. Indeed ensemble performance is commonly attributed to two factors: (1) the average performance of the individual models, and (2) the diversity in their predictions \citep{breiman1996bagging,breiman2001random,brown2005diversity}. The idea is that if models make different errors, their combination can reduce overall error through averaging. We focus on the Arithmetic mean and Mixture of experts, as the former serves as our baseline ensemble technique and the latter operates at a different stage of sampling and exhibited the most notable improvements in \Cref{tab: aggregation_results}.

We first examine the common initialization practice in U-Nets that may hinder diversity in diffusion models (see \Cref{fig: predictive diversity scales}). A near-zero scaling factor $\lambda$ on the terminal weights causes the outputs to be near zero (see details in \Cref{app: near zero output}). In \Cref{tab: aggregations init high scale}, we evaluate how this choice influences the performance of DE, by leveling up the scale $\lambda$ to one.
We find that this approach does not resolve the lack of improvement observed with ensembles, despite introducing additional diversity in the network’s output space (see \Cref{app: initialization diversity} for a broader analysis, including the impact of scaling $\lambda$ on training dynamics and evaluation metrics). This suggests that increasing predictive diversity alone is insufficient to enhance ensemble performance in diffusion models.

\begin{wrapfigure}[19]{r}{0.38\linewidth}
    \centering
    \includegraphics[width=\linewidth]{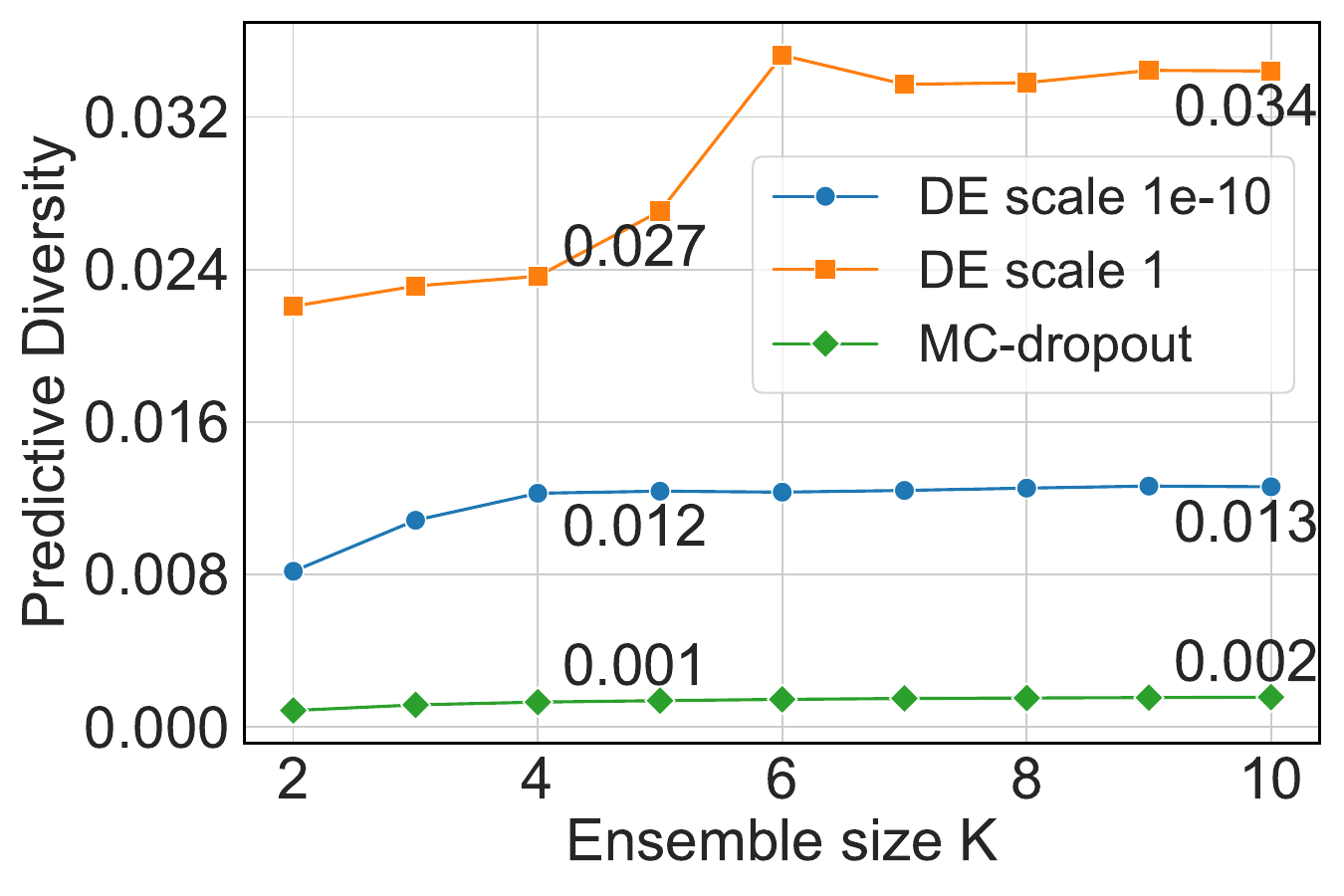}
    \caption{Predictive diversity across three ensemble methods. Increasing $ \lambda $ slightly enhances diversity. \texttt{DE scale $\lambda$} denotes Deep Ensemble with models trained on initialization scale $\lambda$. See \Cref{app: effect scaling on predictive diversity} for details on diversity calculation.}
    \label{fig: predictive diversity scales}
\end{wrapfigure}

Secondly we train ensemble members on different subsets of the dataset as a straightforward way to encourage diversity, and we evaluate this strategy in \Cref{tab: aggregations intersection,tab: aggregations no intersection}. For instance, on CIFAR-10, one can assign each model to a specific subset of classes (e.g., two or three classes per model), ensuring that each member specializes in a different part of the data distribution. Combining models specialized on different parts of a dataset is a well-studied area \citep{mcallister2025decentralized, skreta2024superposition}.
\begin{table}[t]
    \centering
    \begin{minipage}{0.48\textwidth}
        \centering
        \caption{Comparison of different aggregation methods for $K = 5$ on CIFAR-10 in terms of FID, KID, and $L_\DDSM$. We upscaled initialization variance $\lambda$ to one to enhance diversity.}
        \small
        \begin{tabular}{l G G G}
        \toprule
        & \multicolumn{3}{c}{CIFAR-10 ($32 \times 32$)} \\
        $\lambda$ upscaled $K = 5$ & FID $\downarrow$ & KID $\downarrow$ & $L_\DDSM$~$\downarrow$ \\\midrule
        \rowcolor{LightOrange}
        Individual model DE & \textbf{4.79} (0.08) & \textbf{0.0008} (0.0001) & \textbf{0.0284} (0.0005) \\
        \rowcolor{LightCyan}
        Individual model & 5.07 (0.08) & 0.0010 (0.0001) & 0.0283 (0.0004) \\
        Arithmetic mean & 5.21 & 0.0011 & 0.0284 (0.0005) \\
        Mixture of experts & 5.38 & 0.0010 & - \\
        \bottomrule
        \end{tabular}
        \label{tab: aggregations init high scale}
    \end{minipage}
    \hfill
        \begin{minipage}{0.48\textwidth}
        \centering
        \caption{Ensemble performance for $K = 5$ models trained on overlapping subsets of CIFAR-10 in terms of FID, KID, and $L_\DDSM$. Two classes are excluded per model, shifting every two.}
        \small
        \begin{tabular}{l G G G}
        \toprule
        & \multicolumn{3}{c}{CIFAR-10 ($32 \times 32$)} \\
        Subsets $K = 5$ & FID $\downarrow$ & KID $\downarrow$ & $L_\DDSM$~$\downarrow$ \\\midrule
        \rowcolor{LightOrange}
        Individual model DE & \textbf{4.79} (0.08) & \textbf{0.0008} (0.0001) & \textbf{0.0284} (0.0005) \\
        \rowcolor{LightCyan}
        Individual model & 9.65 (0.92) & 0.0028 (0.0006) & 0.0289 (0.0005) \\
        Arithmetic mean & 7.13 & 0.0027 &  0.0284 (0.0006) \\
        Mixture of experts & 5.86 & 0.0008 & - \\
        \bottomrule
        \end{tabular}
        \label{tab: aggregations intersection}
    \end{minipage}
\end{table}

Since each model is specialized in a specific subset of the dataset, one might expect that taking their arithmetic mean for example would merge their respective expertises. However, we find that both aggregation schemes improve image quality but not sufficiently to reach the quality of the models of \Cref{tab: aggregation_results}. The high FID of Arithmetic mean could result from mass concentration on the intersection of supports, reducing coverage by discarding regions captured by only a subset of models. On Mixture of experts, results are poor in comparison to \Cref{tab: aggregation_results} since models are more domain-specific, which could also harm diversity. We analyze the scenario with disjoint supports in appendix (\Cref{tab: aggregations no intersection}), revealing an even greater degradation.

Finally we evaluate models trained with varying numbers of iterations (see \Cref{fig: impact ensemble}): fewer iterations yield weaker models with more diverse errors which ensembling can possibly compensate effectively. With this in mind, we analyze how ensembling benefits vary with model strength.

We observe that ensembles built from weaker models do yield bigger improvements in FID, while gains diminish and even turn negative as the base models become stronger. Interestingly, training the models longer consistently reduces the loss, highlighting the disconnect between training objective and perceptual quality.

Overall, these experiments show that more diverse ensembles lead to better ensembling effects, especially when base models are weak or specialized. However, this error compensation alone is not sufficient to outperform a single baseline model trained on the full dataset.

\begin{wrapfigure}[15]{r}{0.6\linewidth}
    \vspace{-0.2cm}
    \centering
    \begin{minipage}{0.475\linewidth}
        \centering
        \begin{subfigure}{\linewidth}
            \includegraphics[width=\linewidth]{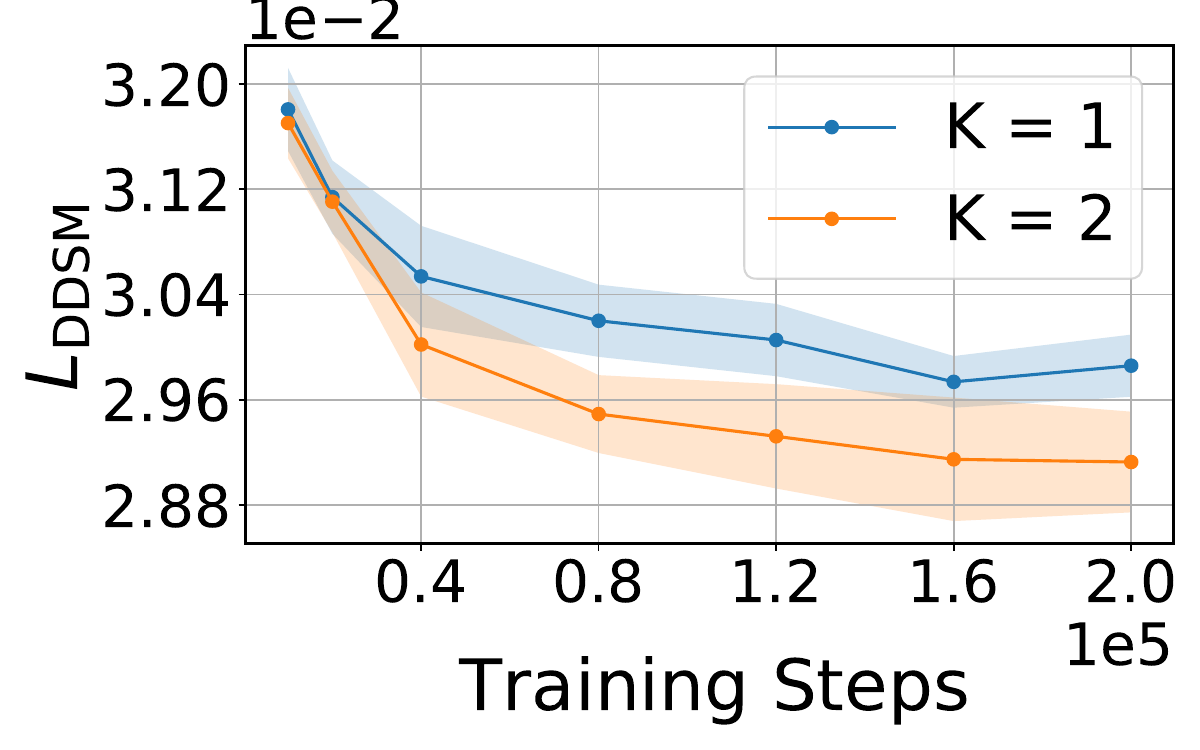}
            \caption{$L_{\DDSM}$ vs Training Steps}
            \label{fig: loss diff checkpoints}
        \end{subfigure}
    \end{minipage}
    \hfill
    \begin{minipage}{0.475\linewidth}
        \centering
        \begin{subfigure}{\linewidth}
            \includegraphics[width=\linewidth]{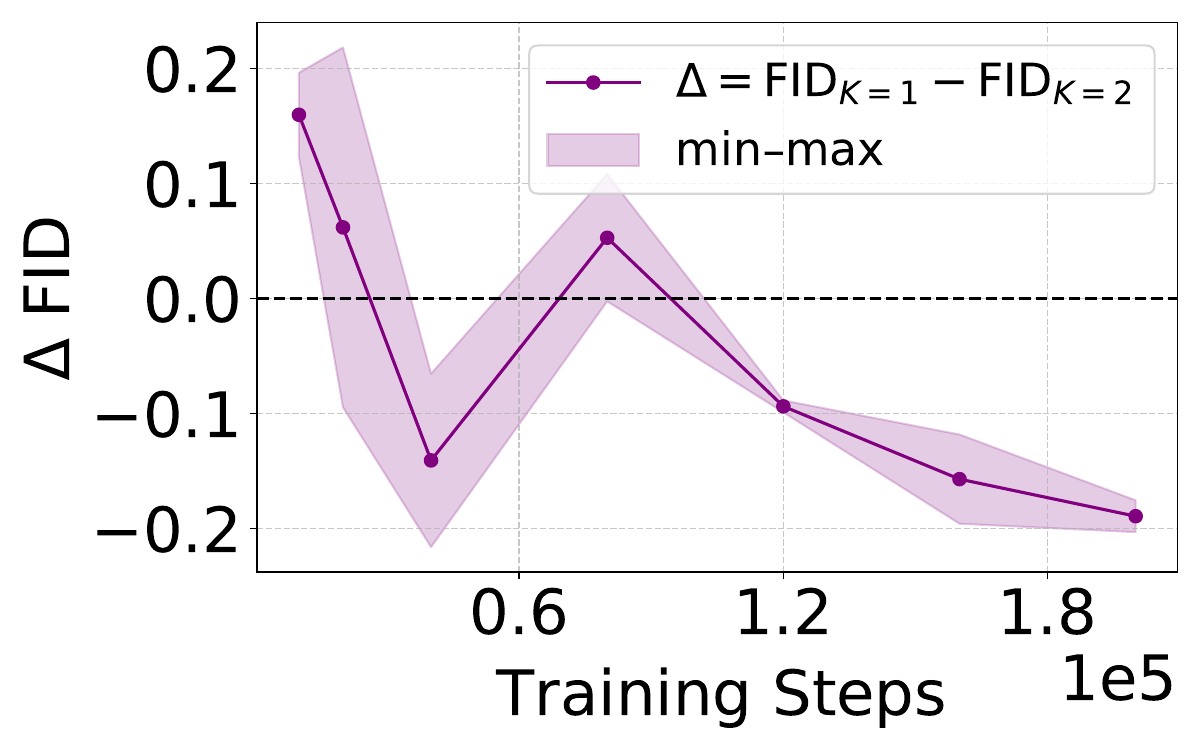}
            \caption{$\Delta$FID vs Training Steps}
            \label{fig: fid diff checkpoints}
        \end{subfigure}
    \end{minipage}
    \caption{Comparison of DE ($K{=}2$) and individual model ($K{=}1$) in terms of training loss and FID, plotted against the number of training iterations. The difference stays positive for $L_\text{DDSM}$ (a), while for FID (b), for which we directly plot it, the $\Delta$ varies from positive to negative. See \Cref{app: experiment details} for details on the CI.}
    \label{fig: impact ensemble}
\end{wrapfigure}

\subsubsection{Random Forests}\label{section: experiments forestdiffusion}

We evaluate Random Forests as a score predictor and show in \Cref{fig: random forest wasserstein plot}  that \textbf{Dominant components} aggregation yields remarkably stronger performance than other aggregation techniques, that generaly perform poorly. We conduct experiments on various number of trees on the Iris dataset. To assess these settings (details in \Cref{app: experiment details}) we evaluate the Wasserstein distance between generated and test data. See \Cref{tab: w1 test random forest,tab: cov test combination,tab: f1 test combination,tab: w1 test random forest,tab: w1 test random forest wine red,tab: w1 test random forest wine white} for more aggregations on larger datasets.

We hypothesize that this particular improvement of Dominant feature strategy compared to others results from a systematic underestimation of the noise magnitude in simple averaging, rather than from ensembling itself.

\begin{wrapfigure}[12]{r}{0.40\linewidth}
    \centering
    \includegraphics[width=\linewidth]{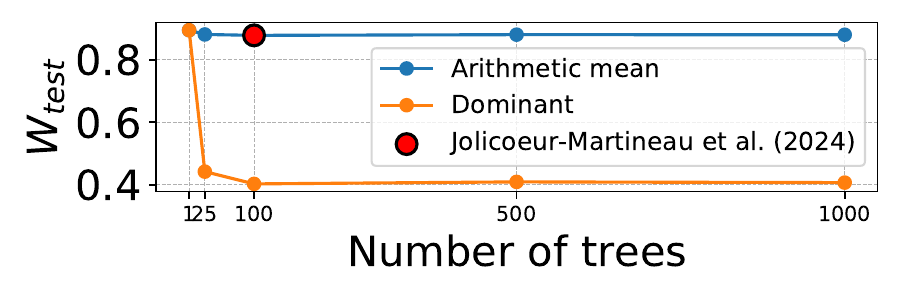}
    \caption{ Wasserstein distance $W_{\text{test}}$ ($\downarrow$) for Arithmetic mean and Dominant feature methods as a function of the number of trees $K$. We perform 3 train/test splits for each experience, and report the mean.}
    \label{fig: random forest wasserstein plot}

\end{wrapfigure}
To check this, for each perturbed data $\mX_t \in \mathbb{R}^{n \times d}$, we fit a Random Forest regressor to predict the noise that transforms the original data point $\mX_0 \in \mathbb{R}^{n \times d}$ into $\mX_t$.
This noise is in fact systematically underestimated during generation, as shown in \Cref{fig: generation forestdiffusion stds}. Arithmetic mean exhibits a lower overall standard deviation for all $t > 0$, supporting the hypothesis of a statistical bias, namely that it fails to capture the true noise magnitude.

\subsection{Correlation between Score Matching and FID}\label{section: explanation correlation score FID}
As we have observed across previous experiments, the $L_\DDSM$ function used during training and perceptual metrics remain poorly aligned in ensemble evaluation. As shown in \Cref{section: averaging fails}, score matching can benefit from increasing $K$ when perceptual metrics do not. This suggests that ensembling might be effective to some extent, as it enhances score estimation which is the very objective the models are trained to optimize.

In this section, we highlight the gap between these two metrics. We also show that the gap in behavior between score-based metrics and FID is particularly noticeable when noise is introduced into the score function.

\subsubsection{Score matching and FID during training}

We show in \Cref{fig: training evolution} that already over the training course, the behavior of the optimized objective differs from the one of the perceptual metric of interest. While FID keeps improving steadily, the validation score matching loss drops sharply early on and then stagnates. This shows that the score matching loss alone is not a sufficient indicator on the performance of the model in generation. These results are obtained by evaluating both metrics on a single score model across checkpoints up to 200k iterations.

\subsubsection{Noising the score and its effect on FID} 

We show in \Cref{fig: noise sensitivity} that perturbing the score in the regime where ensemble improvements typically affect the loss leads to significant and chaotic variations in FID At each timestep, we apply perturbations to the score model by adding $\tau \|\vs_\vtheta\|_2 \rvz $ to its output, where $\tau \in [0, 1]$ is a fixed noise level, and $\rvz$ is uniformly distributed on the unit sphere. These results highlight that FID does not evolve in a predictable manner in comparison to the score, showing that intuitive interpretations of FID variations can be misleading.

\begin{figure*}[t]
    \centering
    \begin{subfigure}[t]{0.48\linewidth}
        \centering
        \begin{minipage}{0.49\linewidth}
            \centering
            \includegraphics[width=\linewidth]{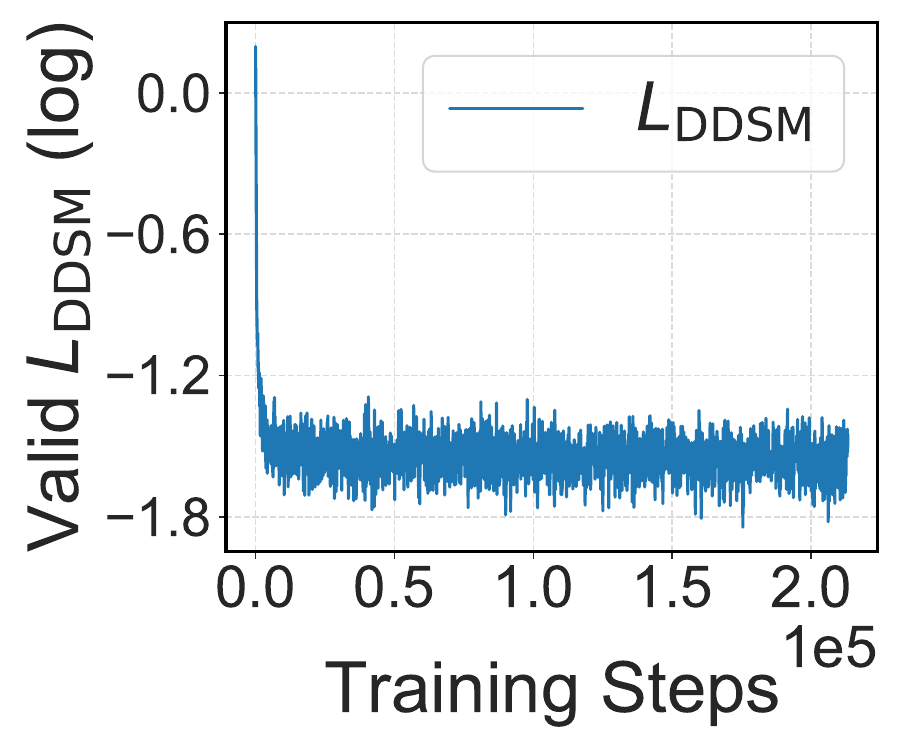}
        \end{minipage}\hfill
        \begin{minipage}{0.49\linewidth}
            \centering
            \includegraphics[width=\linewidth]{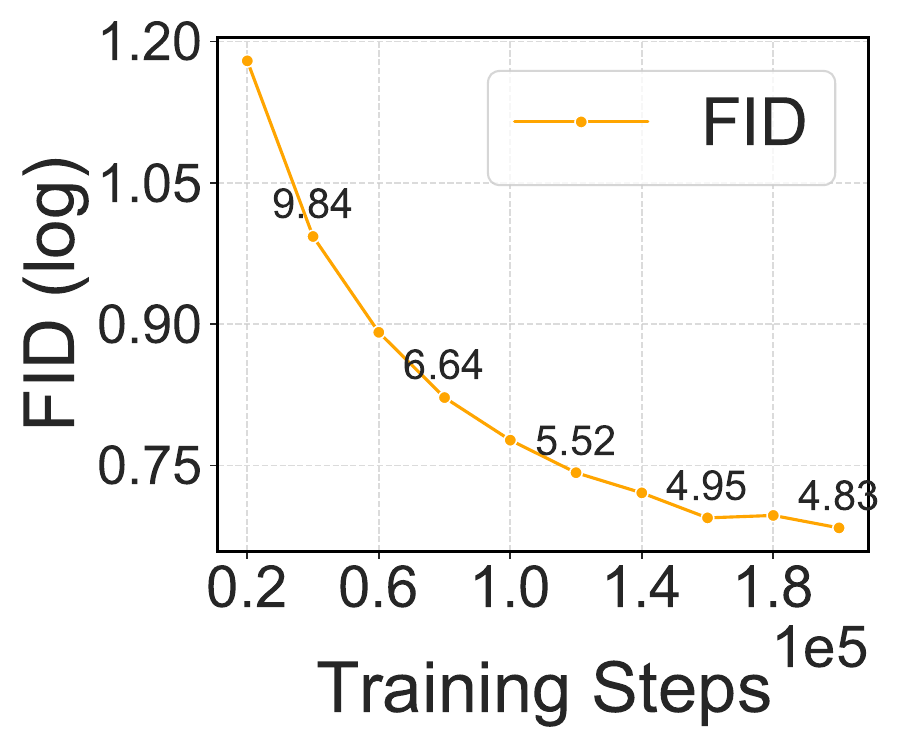}
        \end{minipage}
        \caption{}
        \label{fig: training evolution}
    \end{subfigure}%
    \hfill
    \begin{subfigure}[t]{0.48\linewidth}
        \centering
        \begin{minipage}{0.49\linewidth}
            \centering
            \includegraphics[width=\linewidth]{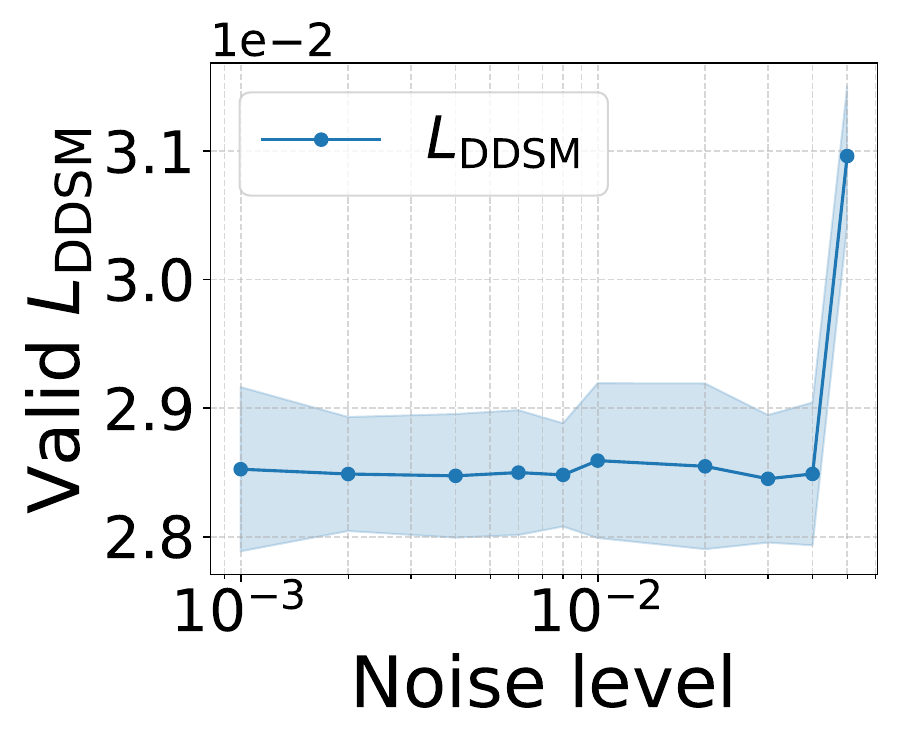}
        \end{minipage}\hfill
        \begin{minipage}{0.49\linewidth}
            \centering
            \includegraphics[width=\linewidth]{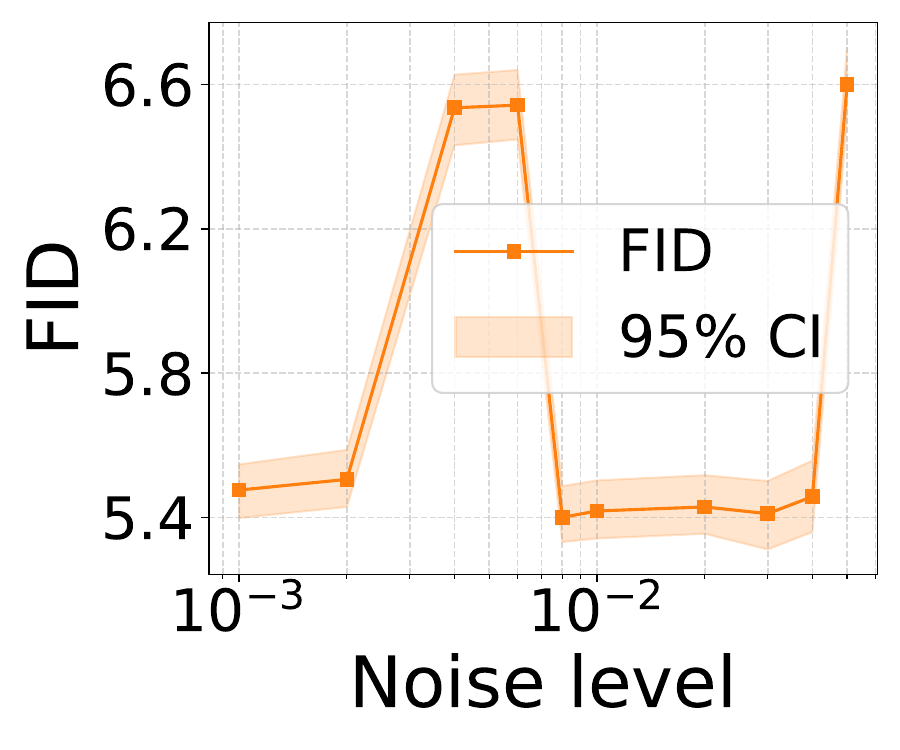}
        \end{minipage}
        \caption{}
        \label{fig: noise sensitivity}
    \end{subfigure}

    \caption{Evolution of score matching loss and FID (in log scale) during training (a), and sensitivity of FID and $L_\text{DDSM}$ to noise levels $\tau$ (in log scale on the figure) in the score function (b). In both cases, trends suggest that the FID evolution cannot be reliably predicted from the score dynamics.}
    \label{fig: training_and_noise_effects}
\end{figure*}

The disconnect between what diffusion models are trained to optimize (score matching) and how they are typically evaluated explains why ensembling struggles to diminish image quality metrics. This is further reflected in the decreasing behavior of the likelihood (see \Cref{fig: NLL cifar}), which has a close mathematical link (up to first or second-order derivatives \citep{song2021maximum,lu2022maximum}) with the score estimation. Echoing our observation in this section, \citet{theis2016noteevaluationgenerativemodels} and \citet{van2015locally} pointed out a generally accepted point in the generative model community, that high likelihood does not necessarily translate into better sample quality. Our experiments are another illustration of this point.

\section{Conclusion}

We investigated ensemble techniques for diffusion models on unconditional generation, starting with straightforward aggregation rules then extending to more heuristic or diversity-promoting (but still simple) strategies. Averaging a Deep Ensemble, the most natural approach, failed to boost perceptual metrics such as FID and KID. Motivated by this, we tried alternative aggregation schemes, dropout-based ensembles, subset training, and initialization tweaks; none consistently closed the gap. We also evaluated Random Forests for the tabular data generation framework, and found that using the component with the highest amplitude across the trees improves sample quality based on Wasserstein distance.

The main observation is that ensembling generally yields only marginal gains in FID. The only consistent improvement observed with Deep Ensembles concerns score matching, which is largely a mechanical effect resulting from Jensen’s inequality and the models’ exchangeability. Nevertheless, we observe a few notable exceptions where specific forms of ensembling can yield more perceptible benefits in particular settings: the Mixture of Experts performs better when the models are truly complementary, suggesting that the diversity captured by independently trained models is valuable. However, averaging their scores tends to blur this diversity, thereby degrading perceptual quality. Overall, these observations reinforce the main disconnect: better score matching does not necessarily translate into higher perceptual quality, which explains the limited impact of standard ensembling.

Regarding the benefit–cost trade-off, our experiments suggest that standard ensembling offers limited returns relative to its computational expense. While a couple of approaches can provide more improvements, these gains remain rare, context-dependent. In addition, the substantial increase in training and sampling cost rarely justifies their use in diffusion models. This stands in contrast with supervised learning, where ensembling is known to provide consistent and significant performance gains \citep{hansen1990neural,lee2015m} that clearly outweigh its computational overhead.

We showed that the mean of the scores at each step does not sample the PoE. While Feynman–Kac \citep{skreta2025feynman} or MCMC-based \citep{du2023reducereuserecyclecompositional} correctors could in principle adjust the sampling process to better target the PoE, we deliberately omitted them to restrict the work to baseline ensembling. Moreover, guidance-based approaches already perform well without the additional theoretical rigor introduced by such corrections. Nonetheless, future work could further explore this direction.

We voluntarily restricted our analysis to \engquote{democratic} ensembles, where all models contribute equally and independently. Future work aiming to improve diffusion models through ensembling could explore more advanced strategies, such as train one, get $K$ for free or other low-cost ensemble methods \citep{rame2021mixmo,huang2017snapshot,ha2016hypernetworks,havasi2021training,wasay2020mothernetsrapiddeepensemble}. Another promising direction would be to relax the equal or independent constraint, for instance through weighting or boosting.

\subsubsection*{Acknowledgments}

This work was supported by the French government, through the 3IA Côte d’Azur Investments in the Future project managed by the National Research Agency (ANR) with the reference numbers ANR-19-P3IA-0002 and ANR-23-IACL-0001. DG also acknowledges the support of ANR through project NIM-ML (ANR-21-CE23-0005-01). Parts of this work were done as DG was employed at Université Côte d’Azur.

We thank the reviewers for their constructive feedback, which helped improve the quality and clarity of this paper.

\bibliography{main}
\bibliographystyle{tmlr}
\newpage
\appendix

This appendix supplements our main contributions with additional insights, methodological clarifications, and extended empirical results:
\begin{itemize}
    \item \Cref{app: experiment details} details the experimental settings used throughout the study.
    \item \Cref{app: how to evaluate ensemble on score matching and nll} outlines how to assess step-wise aggregation strategies beyond sampling, including evaluations of $L_\DDSM$ and model likelihood.
    \item \Cref{app: theoretical insights} elaborates on the theoretical motivations, formulation, and implications of the proposed aggregation mechanisms. In particular, we prove \Cref{proposition: ensemble monotonicity true score loss} and \Cref{proposition: counter example gaussian case true}.
    \item \Cref{app: high level ensemble} presents exploratory yet ineffective alternatives for combining diffusion models.
    \item \Cref{app: tables plots unets} and \Cref{app: tables plots random forest} provide additional plots and tables supporting our experimental findings in \Cref{section: experiments}.
    \item \Cref{app: initialization diversity} examines and contextualizes the popular yet understudied practice of downscaling final-layer weights at initialization in diffusion models.
    \item \Cref{app: note fid} offers a short commentary on the limitations of FID and the caution required when interpreting it.
\end{itemize}

\section{Experimental details}\label{app: experiment details}

We describe in detail both our deep and non-deep approaches.

\subsection{Ensemble of U-Nets}

We train our models on low-resolution images using CIFAR-10 (32×32) \citep{krizhevsky2009learning}. To assess the scalability to higher-resolution data, we also conduct experiments on FFHQ (256×256) \citep{karras2019style}.

For experiments on CIFAR-10, our neural network follows the DDPM++ continuous architecture as proposed by \citet{song2021scorebasedgenerativemodelingstochastic}. We use the VP-SDE formulation discretized with 1000 diffusion steps, and a predictor-only sampling procedure by adopting the Euler-Maruyama scheme. Training parameters (learning rate, batch size, etc.) and diffusion parameters ($\bar{\beta}_{\text{min}},\bar{\beta}_{\text{max}}$) are identical to those in the original paper; in particular, we set the dropout rate to 0.1. Each model is trained for 200k iterations.
For FFHQ-256, we use the ADM architecture from OpenAI’s official repository, which includes several improvements over the previous U-Net design \citep{dhariwal2021diffusion}. The model is trained using the DDPM objective. We follow the repository’s recommendation for the learning rate (1e-4), while the batch size (4) and total number of training iterations (10M) are chosen based on our computational resources. No dropout is used in this setup. At sampling time, we use the DDIM sampler with $100$ steps with an entropy level $\eta=0.5$ (see \citet{song2020denoising} for a definition) by default. Unless otherwise specified, these settings are consistently applied accross all experiments presented in this work.

We evaluate the perceptual quality of generated images using the Fréchet Inception Distance (FID) \citep{heusel2017gans} and Kernel Inception Distance (KID) \citep{binkowski2018demystifying}. On the one hand, FID measures the distance between the feature distributions of real and generated images using activations from a pre-trained Inception network \citep{szegedy2016rethinking}, and assuming the features are Gaussian-distributed. On the other hand KID computes the Maximum Mean Discrepancy (MMD) with a polynomial kernel. All metrics are computed on 10k samples instead of the standard 50k. Due to the smaller sample size, FID (often referred to as FID-10k in this context, but we simply refer to it as FID) tends to be overestimated, as it is biased upward.

\subsection{Random Forests}

We conduct experiments on Forest-VP by varying the number of trees, denoted as $K$ (corresponding to \texttt{n\_estimators} in the scikit-learn implementation). The values considered are $\{1, 10, 25, 50, 100, 500, 1000\}$, where $K=100$ is the default setting used by \citet{jolicoeur2024generating}. Moreover, we consider various aggregation methods between the trees.

We adopt the same training and sampling procedure as the reference model in the original paper (\citealp{jolicoeur2024generating}, Table~6). 
In particular, we set \texttt{max\_depth = 7} following the cited work\footnote{\url{https://github.com/SamsungSAILMontreal/ForestDiffusion}}, 
which is sufficient for small datasets such as \textit{Iris}.

To ensure consistency with \citet{jolicoeur2024generating}, we evaluate Random Forest using the Wasserstein distance between the generated and test data, coverage \citep{naeem2020reliable}, and efficiency \citep{xu2019modeling}, with results reported in \Cref{tab: w1 test random forest,tab: cov test combination,tab: f1 test combination}. Coverage assesses the diversity of generated samples relative to the test set. Efficiency, is measured as the average F1 score obtained from training multiple
non-deep machine learning models for classification or regression on generated data and evaluated on the test set.

Alongside the Iris dataset ($n = 150$, $d=4$), we report additional results on larger datasets ($n > 1000$) in \Cref{app: random forest additional tables wasserstein}.

\subsection{Confidence intervals}

In this section, we detail how confidence intervals are computed for the different experiments presented in the paper. We distinguish three cases: 1) FID/KID evaluation on ensembles, 2) $L_\DDSM$ evaluation on ensembles, and 3) FID/KID evaluation in \Cref{section: explanation correlation score FID}.

\subsubsection{Standard deviations on ensembles}

For FID and KID evaluations on ensembles of size $K = 5$ in \Cref{fig: fid-ffhq,fig: kid-ffhq,fig: fid-cifar,fig: kid-cifar,tab: aggregation_results,tab: aggregations init high scale,tab: aggregations intersection,fig: fid diff checkpoints} and similar reports in Appendix, variability is assessed across different model subsets and not through repeated metric computation (due to the computational cost of FID on ensembles).
For individual models, the interval reflects the variability across the individual models taken separately. For subsets such as $2$ models, we report the variability obtained by evaluating all possible pairs among the five available models (i.e., $\binom{5}{2}$ combinations). For both configurations, we compute the mean and standard deviation of these measurements. 
For the full ensemble ($K=5$), the metric is computed only once since all models are included.

In the case of \Cref{fig: fid diff checkpoints}, where $K=2$ corresponds to the full ensemble, we use the minimum and maximum values to represent the uncertainty on $K=1$, as this choice is more appropriate than taking the standard deviation.

For the $L_\DDSM$ evaluation (\Cref{fig: loss-cifar,tab: aggregation_results,fig: training evolution} and similar reports in Appendix), we are able to measure the metric over multiple model combinations for each ensemble size, and repeat each measurement up to 10 times thanks to its low computational cost.

\subsubsection{Intervals for a fixed model w/ bootstrap}

In \Cref{fig: noise sensitivity}, since we rely on a single fixed model and require a precise estimation of the trend of these perceptual metrics, we assess variability across measurements using the bootstrap method (\citealp{hastie2009elements}, Section 7.11), which offers a cost-effective alternative to repeated evaluations. The bootstrap method estimates variability by repeatedly resampling (100 times in our case), with replacement, from the original set of measurements. Each resample is used to recompute the metric, producing an empirical distribution of estimates. The confidence interval can be estimated for example from the standard deviation of the bootstrap distribution or by taking its 2.5th and 97.5th percentiles for a 95\% interval, which is what we do.
FID is a biased estimator, as fewer samples tend to increase its value. Standard percentile bootstrapping can therefore introduce additional bias due to resampling with replacement. To ensure that bootstrap samples contain about 10\,000 \emph{unique} observations (matching FID-10k), we adjust the sample size accordingly as follows.

In a bootstrap resample of size $n$, each observation has a probability of not being selected given by
\begin{equation}
\left(1 - \frac{1}{n}\right)^n \approx e^{-1} \approx 0.368,
\end{equation}
which means that about $63.2\%$ of observations are unique on average. To obtain 10\,000 unique samples, we solve
\begin{equation}
0.632 \, n = 10\,000,
\end{equation}
which gives
\begin{equation}
n \approx 15\,823.
\end{equation}
Thus, using bootstrap samples of size 15\,823 ensures roughly 10\,000 distinct observations per resample, aligning the evaluation with FID-10k.

\section{How to evaluate score matching and model likelihood on arithmetic mean?}\label{app: how to evaluate ensemble on score matching and nll}

Given a trained model $\vtheta$, the procedures for computing $L_{\DDSM}(\vtheta)$ and $p_{0,\vtheta}^{\ODE}(\rvx(0))$ are well established, as explained in \Cref{section: general framework diffusion models} and by \citet{song2021scorebasedgenerativemodelingstochastic}. Thus, if $\{\vtheta_k \}_{k=1}^K$ denotes a collection of $K$ set of parameters optimized using DE, we essentially evaluate the metrics for model averaging by plugging in $\overline{\vs}^{(K)}_{\vtheta}(\rvx,t) \coloneqq \frac{1}{K} \sum_{k=1}^K \vs_{\vtheta}^{(k)}(\rvx,t)$ into their formula. In the following we provide details for implementation of this particular case, but it can be extended to any step-wise aggregation.

\subsection{Score matching loss of arithmetic mean} Let $\rvx_0 \in \valid$. We sample $t \in [0,T]$, and  $\rvz \sim \mathcal{N}(\vzero,\mI)$ to compute $\rvx_t \sim q_{t|0}(\rvx_t|\rvx_0)$ using $\rvx_0$ and the explicit formula of the forward transition (see \citealp{song2020generativemodelingestimatinggradients}, Appendix B). The only difference is that we replace the individual instance of a model by an average. For example, if we consider an SDE with zero drift and noise schedule $\sigma:[0,T] \rightarrow \R_{>0}$, $L_{\DDSM}$ is computed using $\| \overline{\vs}^{(K)}_{\vtheta}(\rvx_t,t) + \frac{\rvz}{\sigma(t)} \|^2$ which is averaged using MC over the distribution of $\rvx_0$ and the timesteps $t$.

\subsection{Model likelihood of arithmetic mean}

Before discussing ensembles, let us first present a short review of likelihood calculation for diffusion models. Associated to \Cref{eq: backward diffusion process}, there exists a corresponding deterministic process following an ODE and whose trajectories $\{\mathbf{x}_t\}_{t=0}^T$ share the same marginal densities $\{q_t(\mathbf{x})\}_{t=0}^T$ as the SDE:
\begin{equation}\label{eq: backward ODE}
	\frac{\mathrm{d}\mathbf{x}}{\mathrm{d}t} = \mathbf{\tilde{f}}(\mathbf{x},t) := \left[ \mathbf{f}(\mathbf{x},t) - \frac{1}{2} g(t)^2 \nabla_\mathbf{x} \log q_t(\mathbf{x}) \right].
\end{equation}
In this case, we denote $p_{t,\vtheta}^{\text{ODE}}(\mathbf{x}_t)$ the probability at each time $t$ induced by the parameterized ODE. Under this framework, we have a particular case of Neural ODE/Continuous Normalizing Flow \citep{chen2018neural}. Consequently, the likelihood $p_{0,\bm{\theta}}^{\text{ODE}}(\mathbf{x}_0)$ can be explicitly derived by using the instantaneous change-of-variables formula that connects the probability of $p_{0,\bm{\theta}}^{\text{ODE}}(\mathbf{x}_0)$ and $p_{T,\bm{\theta}}^{\text{ODE}}(\mathbf{x}_0)$, given by
\begin{equation}\label{eq: change of variable formula}
p_{0,\bm{\theta}}^{\text{ODE}} (\mathbf{x}(0)) = e^{\int_0^T \nabla_{\mathbf{x}} \cdot \mathbf{\tilde{f}}(\mathbf{x}(t), t) d t} p_{T,\bm{\theta}}^{\text{ODE}}(\mathbf{x}(T)),
\end{equation}
where $(\nabla_\mathbf{x} \cdot)$ denotes the divergence function (trace of Jacobian w.r.t $\mathbf{x}$). For the SDE counterpart, \citet{song2021maximum} established that $\log p_0^{\text{SDE}}(\mathbf{x}(0))$ admits a tractable lower bound, while \citet{albergo2023stochastic} newly derived a closed-form expression for it.

Now we explain how to compute this for an arithmetic mean of $K$ models. We denote by $\{\rvx_\vtheta(t)\}_{t=0}^T$ the trajectory of the ODE associated with the score model $\overline{\vs}^{(K)}_{\vtheta}(\rvx, t)$. Although $\{\rvx_\vtheta(t)\}_{t=0}^T$ depends on the full set of models $\vtheta_1, \dots, \vtheta_K$, we refer to it using the shorthand $\vtheta$ for simplicity, following the notation used for the average score function.
The log-likelihood can be exactly calculated by numerically solving the concatenated ODEs backward from \(T\) to \(0\), after initialization with \(\rvx_\vtheta(0) \sim q_0\) (\texttt{solve\_ivp} from \texttt{scipy.integrate}), 
\begin{align}
& \frac{d}{dt} 
\begin{bmatrix}
\rvx_\vtheta(t) \\ 
\log p_{t,\vtheta}^\ODE (\rvx_\vtheta(t))
\end{bmatrix} \\
& =
\begin{bmatrix}
f(\rvx_\vtheta(t), t) - \frac{1}{2} g^2(t) \overline{\vs}^{(K)}_{\vtheta}(\rvx_\vtheta(t), t) \\ 
\frac{1}{2} g^2(t) \nabla_{\rvx} \cdot \overline{\vs}^{(K)}_{\vtheta}(\rvx_\vtheta(t), t) - \nabla_{\rvx} \cdot \mathbf{f}(\rvx_\vtheta(t), t)
\end{bmatrix}.
\end{align}
The divergence $\nabla_{\rvx} \cdot (.)$ is approximated using the Hutchinson Monte-Carlo based estimator of the Trace \citep{hutchinson1989stochastic}, motivated by the identity 
\begin{equation}
Tr(A) = \mathbb{E}_{\mathbf{z}\sim \mathcal{N}(\vzero,\mI)}[\mathbf{z}^TA\mathbf{z}] \implies   \nabla_{\rvx} \cdot\overline{\vs}^{(K)}_{\vtheta}(\rvx_\vtheta(t), t) \approx \frac{1}{M}\sum_{m=1}^M \mathbf{z}_m \  \nabla_{\rvx}\overline{\vs}^{(K)}_{\vtheta}(\rvx_\vtheta(t), t)\ \mathbf{z}_m^T
\end{equation}
where $\rvz_1,\dots,\rvz_M \sim \mathcal{N}(\vzero,\mI)$. Following \citet{song2021scorebasedgenerativemodelingstochastic}, we set $M = 1$. To correctly compute the NLL, dequantization techniques are employed; see \citet{theis2016noteevaluationgenerativemodels} for details and justification.

\section{Theoretical insights on model aggregations} \label{app: theoretical insights}

In this section, we complement our theoretical findings by providing proofs and additional implications of the results. \Cref{app: arithmetic mean theoretical results} addresses the denoising score matching loss whose associated result is \Cref{proposition: ensemble monotonicity true score loss}; \Cref{app: combination rules} expands on \Cref{section: score composition contextualization}, notably by including Mixture of Experts in the analysis; \Cref{app: arithmetic mean is not PoE} develops the misconception raised in \Cref{section: PoE pitfalls}; and \Cref{app: diffused poe is not poe: gaussian case} provides the detailed framework and proof of the main result stated in \Cref{proposition: counter example gaussian case true}, followed by a discussion of the contexts implied by this result in \Cref{app: implications of non commutativity}.

\subsection{Arithmetic mean and model performance}\label{app: arithmetic mean theoretical results}

In this section, we provide the proof of \Cref{proposition: ensemble monotonicity true score loss}, 
which demonstrates the benefit of ensembling on $L_\DDSM$. We first establish a preliminary result showing that an ensemble of $K$ models performs better than a single model, before turning to the main result that $K{+}1$ models outperform $K$. 
We also derive a short consequence concerning the KL divergence between probability density paths.

\paragraph{K+1 models are better than K for score estimation.} 

The training objective of diffusion models is to minimize the denoising score matching loss in \Cref{eq: score_loss}, which computes the mean squared error between the estimated score and the true conditional score given clean images, averaged over all timesteps. This formulation has the advantage of reducing the inherently difficult generative modeling task to a series of simpler supervised regression problems. Given any positive time-dependent weighting function $\lambda(\cdot)$ and model $\vs_\vtheta(\cdot)$, we define an equivalent definition to \Cref{eq: score_loss},
\begin{equation}
\mathcal{L}_\DDSM(\vs_\vtheta,\lambda(\cdot)) \coloneqq \frac{1}{2} \int_0^T  \mathbb{E}_{p_0(\rvx_0)p_{t|0}(\rvx_t|\rvx_0)}\left[\lambda(t) \| \vs_{\vtheta} (\rvx_t,t) - \nabla_{\rvx_t} \log q_{t|0} (\rvx_t | \rvx_0) \|^2_2\right].
\end{equation}

In the following we demonstrate that when models from an ensemble of size $K$ are similar, adding one model improves the accuracy of our score estimation in average.

\begin{proposition}\label{proposition: ensemble monotonicity score loss}
Let $\vs^{(1)}_\vtheta,\dots,\vs^{(K)}_\vtheta$ be $K$ score estimators mapping $\R^d \times [0,T]$ to $\R^d$. Then, the DDSM loss satisfies
\begin{equation}
\mathcal{L}_\DDSM(\overline{\vs}_\vtheta^{(K)},\lambda(\cdot)) \leq \frac{1}{K} \sum_{j=1}^K \mathcal{L}_\DDSM\left(\frac{1}{K-1} \sum_{k \neq j} \vs_\vtheta^{(k)},\:\:\lambda(\cdot)\right)
\end{equation}
Moreover, if for any $(\rvx_t,t) \in \R^d \times [0,T]$, the outputs $\vs^{(1)}_\vtheta(\rvx_t,t), \dots, \vs_\vtheta^{(K)}(\rvx_t,t) $ are identically distributed, then, 
\begin{equation}
\mathbb{E} \left[\mathcal{L}_{\DDSM}(\overline{\vs}_\vtheta^{(K+1)},\lambda(\cdot)) \right] \leq \mathbb{E}\left[\mathcal{L}_{\DDSM}(\overline{\vs}_\vtheta^{(K)},\lambda(\cdot)) \right].
\end{equation}
\end{proposition}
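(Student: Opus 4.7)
The plan is to leverage the fact that the DDSM loss is, pointwise in $(t,\rvx_t,\rvx_0)$, a squared distance in the score output, hence convex in $\vs$. Writing the target as $\vu = \nabla_{\rvx_t} \log q_{t|0}(\rvx_t|\rvx_0)$, the loss rewrites as
\[\mathcal{L}_\DDSM(\vs_\vtheta,\lambda) = \tfrac12 \int_0^T \lambda(t)\,\mathbb{E}_{p_0\,p_{t|0}}\!\left[F_{t,\rvx_t,\rvx_0}(\vs_\vtheta(\rvx_t,t))\right] \mathrm{d}t,\]
with $F_{t,\rvx_t,\rvx_0}(\cdot) = \|\,\cdot - \vu\,\|_2^2$ convex in its argument. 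Jensen's inequality applied pointwise to $F_{t,\rvx_t,\rvx_0}$ is the only analytical tool I will need; everything else is algebraic bookkeeping and linearity of the outer integral against $\lambda(t)\,p_0\,p_{t|0}$.

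For the first inequality, the key is the leave-one-out identity
\[\overline{\vs}_\vtheta^{(K)} = \frac{1}{K}\sum_{j=1}^K \underbrace{\frac{1}{K-1}\sum_{k \neq j} \vs_\vtheta^{(k)}}_{=:\, \vs_\vtheta^{(-j)}},\]
which holds by simple double counting: each $\vs_\vtheta^{(k)}$ appears in exactly $K-1$ of the $\vs_\vtheta^{(-j)}$'s, so the leave-one-out ensembles themselves average to the full mean. Applying Jensen to $F_{t,\rvx_t,\rvx_0}$ at this convex combination yields, for every $(t,\rvx_t,\rvx_0)$,
\[F_{t,\rvx_t,\rvx_0}\!\left(\overline{\vs}_\vtheta^{(K)}(\rvx_t,t)\right) \le \frac{1}{K}\sum_{j=1}^K F_{t,\rvx_t,\rvx_0}\!\left(\vs_\vtheta^{(-j)}(\rvx_t,t)\right).\]
Multiplying by $\lambda(t)/2$ and integrating against $p_0\,p_{t|0}$ in $(t,\rvx_0,\rvx_t)$ commutes with the finite sum on the right, producing the first claim directly.

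For the monotonicity statement, I would apply the first inequality to an ensemble of $K+1$ score estimators, so that the right-hand side becomes an average over $K+1$ leave-one-out expected losses, each itself the DDSM loss of a mean of $K$ elements drawn from the $(K{+}1)$-pool. Taking expectation over the ensemble randomness, the identically distributed hypothesis collapses each of the $K+1$ terms to the same value, equal to $\mathbb{E}[\mathcal{L}_\DDSM(\overline{\vs}_\vtheta^{(K)},\lambda)]$: the joint law of any $K$-subset of the $(K{+}1)$-tuple matches that of the $K$ i.d.\ estimators whose mean defines the right-hand side. Averaging over $j$ then yields $\mathbb{E}[\mathcal{L}_\DDSM(\overline{\vs}_\vtheta^{(K+1)},\lambda)] \le \mathbb{E}[\mathcal{L}_\DDSM(\overline{\vs}_\vtheta^{(K)},\lambda)]$, as stated.

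The one subtlety I expect to need care with, and presumably what the appendix discussion flagged in the excerpt is meant to address, is the precise strength of the i.d.\ hypothesis. Strictly speaking, marginal identical distributions do not imply that the joint law of a $K$-subset of $K+1$ variables is invariant under the choice of the omitted index, so the clean sufficient condition is exchangeability of the $K+1$ score estimators. This is the regime one naturally obtains by training with independent seeds on the same dataset, as in Deep Ensembles, and under it the argument goes through verbatim, transplanting the classical bagging monotonicity result to the score-matching setting without requiring any SDE-specific machinery.
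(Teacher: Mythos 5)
Your proof is correct and follows essentially the same route as the paper's: the leave-one-out averaging identity $\overline{\vs}_\vtheta^{(K)} = \frac{1}{K}\sum_{j=1}^K \frac{1}{K-1}\sum_{k\neq j}\vs_\vtheta^{(k)}$ combined with pointwise Jensen on the convex squared norm, followed by taking expectations and using distributional symmetry to collapse the $K+1$ leave-one-out terms into a single $\mathbb{E}[\mathcal{L}_{\DDSM}(\overline{\vs}_\vtheta^{(K)},\lambda(\cdot))]$. Your closing remark is also well taken: the collapse step genuinely needs the joint law of every $K$-subset to coincide (i.e.\ exchangeability), not merely identically distributed marginals, a point the paper's proof passes over silently but which holds in the Deep Ensemble setting where the estimators are i.i.d.\ across seeds.
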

\begin{proof}
Let $K>1$. As a first step, we establish the following algebraic identity:
\begin{equation}
\frac{1}{K}\sum_{k=1}^K {\vs}_\vtheta^{(k)}(\rvx_t,t)= \frac{1}{K} \sum_{j=1}^K \frac{1}{K-1} \sum_{k \neq j} {\vs}_\vtheta^{(k)}(\rvx_t,t),
\end{equation}
which is a direct consequence of
\begin{equation}
\begin{bmatrix}
1 \\[2pt]
1 \\[2pt]
\vdots \\[2pt]
1 \\[2pt]
1
\end{bmatrix}
= \frac{1}{K-1}
\left(
\begin{bmatrix}
0 \\[2pt]
1 \\[2pt]
\vdots \\[2pt]
1 \\[2pt]
1
\end{bmatrix}
+
\begin{bmatrix}
1 \\[2pt]
0 \\[2pt]
\vdots \\[2pt]
1 \\[2pt]
1
\end{bmatrix}
+ \cdots +
\begin{bmatrix}
1 \\[2pt]
1 \\[2pt]
\vdots \\[2pt]
1 \\[2pt]
0
\end{bmatrix}
\right). 
\end{equation}
We now use Jensen's inequality and the convexity of the norm to obtain the first inequality.
\begin{align} 
\mathcal{L}_\DDSM(\overline{\vs}_\vtheta^{(K)},\lambda(\cdot)) & =  \frac{1}{2} \int_0^T  \mathbb{E}_{p_0(\rvx_0)p_{t|0}(\rvx_t|\rvx_0)}\left[\lambda(t) \| \left(\frac{1}{K}\sum_{k=1}^K\vs_\vtheta^{(k)}(\rvx_t,t)\right) - \nabla_{\rvx_t} \log q_{t|0} (\rvx_t | \rvx_0) \|^2_2\right] \\
& =  \frac{1}{2} \int_0^T  \mathbb{E}_{p_0(\rvx_0)p_{t|0}(\rvx_t|\rvx_0)}\left[\lambda(t) \| \frac{1}{K} \sum_{j=1}^K \frac{1}{K-1} \sum_{k \neq j} {\vs}_\vtheta^{(k)}(\rvx_t,t) - \nabla_{\rvx_t} \log q_{t|0} (\rvx_t | \rvx_0) \|^2_2\right] \\
& \leq \frac{1}{2K}\sum_{j=1}^K \int_0^T  \mathbb{E}_{p_0(\rvx_0)p_{t|0}(\rvx_t|\rvx_0)}\left[ \lambda(t) \| \frac{1}{K-1} \sum_{k \neq j} {\vs}_\vtheta^{(k)}(\rvx_t,t) - \nabla_{\rvx_t} \log q_{t|0} (\rvx_t | \rvx_0) \|^2_2\right] \\
& \eqqcolon \frac{1}{K} \sum_{j=1}^K \mathcal{L}_\DDSM\left(\frac{1}{K-1} \sum_{k \neq j} \vs_\vtheta^{(k)},\:\:\lambda(\cdot)\right).
\end{align}

Now assume that for any pair $(\rvx_t,t)$, the random variables $\vs^{(1)}(\rvx_t,t),\dots,\vs^{(K)}(\rvx_t,t)$ are identically distributed given $(\rvx,t)$. Then,

\begin{align}
\mathbb{E}[\mathcal{L}_\DDSM(\overline{\vs}_\vtheta^{(K)},\lambda(\cdot))] & = \frac{1}{K} \sum_{j=1}^K \mathbb{E}  \left[ \mathcal{L}_\DDSM\left(\frac{1}{K-1} \sum_{k \neq j} \vs_\vtheta^{(k)},\:\:\lambda(\cdot)\right) \right] \\
& = \frac{1}{K} \sum_{j=1}^K \mathbb{E}  \left[ \mathcal{L}_\DDSM\left(\frac{1}{K-1} \sum_{k = 1}^{K-1} \vs_\vtheta^{(k)},\:\:\lambda(\cdot)\right) \right] \\
& = \mathbb{E}  \left[ \mathcal{L}_\DDSM\left(\overline{\vs}_\vtheta^{(K-1)},\lambda(\cdot)\right) \right].
\end{align}

\end{proof}
Similar results are well-established in the case of convex losses \citep{ mattei2024ensemblesgettingbettertime}. \Cref{proposition: ensemble monotonicity score loss} shows that ensembling multiple score estimators systematically improves the DDSM loss. The first inequality holds deterministically: removing an element from the ensemble uniformly at random will, on average, degrade its performance. The second inequality strengthens this result in expectation, showing that when ordering does not matter (i.e., for i.d. estimators), a larger ensemble performs better than a smaller one in expectation. 

We now discuss the i.d. assumption. In our framework, we use DEs, where the models $\vs_\vtheta^{(1)}, \dots, \vs_\vtheta^{(K)}$ are obtained by training the same neural architecture independently according to the same method, on the same dataset, but starting from different random seeds of the same initialization distribution. As a result, for any fixed input $(\rvx_t, t)$, the outputs of the individual score models can be regarded as identically distributed samples from an implicit distribution induced by the training randomness. Therefore, our setup satisfies this assumption. 

A more general result on score matching can be equivalently deduced. We expect well-trained score models to minimize the following least squares objective:
\begin{equation}\label{eq: score_matching_original}
\mathcal{L}_{\DSM}(\vs_\vtheta) \coloneqq \mathbb{E}_{t \sim \mathcal{U}[0, T]} \mathbb{E}_{p_t(\rvx)} \left[ \lambda_t \left\| \nabla_{\rvx} \log p_t(\rvx) - \vs_{\vtheta}(\rvx, t) \right\|_2^2 \right].
\end{equation}
Although this expression is intractable in practice, it is equivalent up to an additive constant to the DDSM loss described in \Cref{eq: score_loss}, as shown by \citet{vincent2011connection}. Moreover, by applying the same arguments as in the proof of \Cref{proposition: ensemble monotonicity score loss}, we can establish a similar inequality for \Cref{eq: score_matching_original}, namely
\begin{equation}\label{eq: inequality general score matching}
\mathbb{E}\left[ \mathcal{L}_{\DSM}(\overline{\vs_\vtheta}^{(K+1)}) \right] \leq \mathbb{E}\left[ \mathcal{L}_{\DSM}(\overline{\vs_\vtheta}^{(K)}) \right].
\end{equation}

\paragraph{$K$ models estimate the path measure better} 

Let $q_{0:T}$ and $p^\SDE_{0:T,\vtheta}$ be the path measures of the trajectories $\{\rvx(t)\}_{t=0}^T$ and $\{\rvx_\vtheta(t)\}_{t=0}^T$, where the former is a stochastic process solution to \Cref{eq: backward diffusion process}, and the latter is solution to
\begin{equation}
\mathrm{d}\rvx = [\mathbf{f}(\rvx,t) - g(t)^2 \overline{\vs}_\vtheta^{(K)}(\rvx,t)]\mathrm{d}t + g(t)\mathrm{d}\bar{\rvw}.
\end{equation}
Both measures can be seen as joint distributions for which $q_0$ and $p^\SDE_{0,\vtheta}$ are marginals. 
\citet{lu2022maximum} showed that if we consider SDEs with fixed terminal conditions $\rvx(T) = \rvz$ and $\rvx_\theta(T) = \rvz$,  
\begin{equation}
D_{KL}(q_{0:T}(\cdot \mid \rvx(T) = \rvz) \ \| \ p^\SDE_{0:T,\vtheta}(\cdot \mid \rvx_\vtheta(T) = \rvz)) = - \mathbb{E}_{q_{0:T}}\left[\log \frac{p^\SDE_{0:T,\vtheta}}{q_{0:T}}\right] = \mathcal{L}_{\DDSM}(\overline{\vs}_\vtheta^{(K)},g(\cdot)^2).
\end{equation}
Combining this with \Cref{proposition: ensemble monotonicity score loss}, we show that the aggregation of a large number of score models leads to more precise approximations, on average, of the true SDE solution given a terminal point $\rvz$.

\subsection{Aggregation rules and their connection to density pooling}\label{app: combination rules}
In this section, we illustrate how the first three score-level aggregation methods introduced in \Cref{section: ensemble mechanisms} can be interpreted through the lens of model composition. The objective is to provide intuition for these aggregation schemes. By model composition, we refer to the combination of multiple probability density functions to form a new distribution, defined up to a normalizing constant. Initially studied for Energy Based Models (EBMs), model composition have mostly been applied for conditional generation in the form of classifier or classifier-free guidance \citep{du2023reducereuserecyclecompositional, skreta2024superposition, skreta2025feynman}. 

\paragraph{Arithmetic mean of scores.}  
Let $p^{(1)}(\rvx), \dots, p^{(K)}(\rvx)$ be $K$ probability density functions defined on $\mathbb{R}^d$. Then,
\begin{equation}
\frac{1}{K} \sum_{k=1}^K \nabla_{\rvx} \log p^{(k)}(\rvx) = \nabla_\rvx \log \sqrt[K]{\prod_{k=1}^K p^{(k)}(\rvx)}.
\end{equation}
In other words, the arithmetic mean of the individual score functions corresponds to the score of the geometric mean of the densities. This composition yields an unnormalized distribution, but the unknown normalizing constant vanishes under the gradient. Given the central role that the arithmetic mean of score models will play in our study, we formally introduce the geometric mean of densities as a Product of Experts in \Cref{definition: PoE}.
\begin{definition}[Product of Experts (PoE)]\label{definition: PoE}
Let $p^{(1)}(\rvx), \dots, p^{(K)}(\rvx)$ be $K$ probability density functions defined on $\mathbb{R}^d$.
\begin{equation}\geo{p}^{(K)}(\rvx) = \PoE(p^{(1)}(\rvx),\dots,p^{(K)}(\rvx)) \coloneqq \sqrt[K\,]{p^{(1)}(\rvz) \dots p^{(K)}(\rvz)}/ Z_K\end{equation}
where  $Z_K = {\int} \sqrt[K\,]{p^{(1)}(\rvz) \dots p^{(K)}(\rvz)} \mathrm{d}\rvz$.
\end{definition}
Using AM-GM inequality one can show without difficulty that $Z_K$ is finite and then the normalized composition is well-defined as a density. We now state a result that follows as a direct consequence.

\paragraph{Sum of scores.} In the same way, we can write the sum of scores as a score since
\begin{equation}
\sum_{k=1}^K \nabla_{\rvx} \log p^{(k)}(\rvx) = \nabla_\rvx \log  \prod_{k=1}^K p^{(k)}(\rvx).
\end{equation}
The operation $\frac{\prod_{k=1}^K p^{(k)}(\rvx)}{ \int \prod_{k=1}^K p^{(k)}(\rvx)\mathrm{d}\rvx}$ was originally called Product of Experts by \citet{hinton2002}. This operation is efficient when the models are different. Indeed it allows each expert to constrain different aspects of the data. As a result, the composed model assigns high probability only to points that satisfy \textit{all} individual constraints simultaneously, like an AND operator. For example, \citet{du2020compositional} applies the product rule to model the conjunction of concepts with EBMs, where each density is of the form $p^{(k)}(\rvx) = p(\rvx \mid c_k)$ with $c_1, \dots, c_K$ representing different concepts. More recently, \citet{du2023reducereuserecyclecompositional} applies the product to perform class-conditional and text-to-image generation using diffusion models (e.g. $p^{(k)}(\rvx) = p(\rvx|\text{\engquote{A sandy beach}}) $).

\paragraph{Mixture of experts.} The equivalent of the Product of Experts as defined above but for the union of distributions is the Mixture of Experts defined by
\begin{equation}
\overline{p}^{(K)}(\rvx) = \frac{1}{K}\sum_{k=1}^K p^{(k)}(\rvx).
\end{equation}
This one corresponds to a soft union, assigning high probability to regions where at least one of the experts does. In the case where each $p^{(k)}(\rvx)$ models a conditional distribution $p(\rvx \mid c_k)$ for a concept $c_k$, the mixture defines a model over samples that belong to \textit{any} of the concepts $\{c_1, \dots, c_K\}$. A cheap way to model this given $K$ generative models trained for example on different data would be to use a method described in \Cref{section: ensemble mechanisms}, that is randomly selecting a model (thus a distribution) among the $K$ ones before sampling. 

In comparison to the former composition rules, there is no formula that express the score of the mixture only in term of individual scores. We have \begin{equation}\nabla_{\rvx} \log \overline{p}^{(K)}(\rvx) = \sum_{k=1}^K \alpha^{(k)}(\rvx) \nabla_\rvx \log p^{(k)}(\rvx)\end{equation} with $\alpha^{(k)}(\rvx) = \frac{p^{(k)}(\rvx)}{\sum_{j=1}^K p^{(j)}(\rvx)}$. It would require to know the individual densities $p^{(1)}(\rvx),\dots,p^{(K)}(\rvx)$. \citet{skreta2024superposition} provides a way to estimate this compositions for diffusion models by using Itô density estimators on the fly during sampling.

Operations on probability densities often correspond to optimal solutions under divergence-based criteria. Typically, the geometric mean is the density $p$ that minimizes the average KL divergence to the individual models
\begin{equation}
\frac{1}{K} \sum_{k=1}^K \KL(p \Vert p^{(k)}).
\end{equation}
Similarly, the mixture of experts minimizes the average reverse KL divergence. More broadly, \citet{amari2007integration} introduces the concept of \engquote{$\alpha$-integration} to generalize density combinations, and establishes a general optimality result for this family of means.
\subsection{Pitfalls of these interpretations}\label{app: arithmetic mean is not PoE}

Although the equations described in \Cref{app: combination rules} are valid out of the diffusion context, composing score models at each timestep in these ways does not actually guide the diffusion process towards the distribution obtained by reverting the gradient operator and the logarithm. The reasons are the following.
\begin{enumerate}
\item \textbf{We do not predict a score.} Indeed, generally $\vs_{\vtheta}(\rvx,t) \neq \nabla_{\rvx}\log p_{t,\vtheta}^\SDE(\rvx)$ and $\vs_{\vtheta}(\rvx,t) \neq \nabla_{\rvx}\log p_{t,\vtheta}^\ODE(\rvx)$.
\item \textbf{Composition and adding noise are not commutative.} Typically, by averaging the scores, we don't sample from a PoE because the distribution at time $t>0$ is not the PoE of the trained models.
\end{enumerate}
We detail these two observations below.

(1) There is no guarantee that the score model has a structure of a score, or even that is predicts at each step $t$ the score associated to the distribution at time $t$ of the trajectory. For example if we consider the SDE with linear drift, a necessary condition is that $p^\SDE_{T,\vtheta}(\rvx)$ be Gaussian \citep{lu2022maximum} which is generally not the case, even though it may be very close to in practice. The reason is that $q_0$ is not Gaussian, and even if it were, the condition would only hold asymptotically as $T \to \infty$. In fact in the ODE case in \Cref{eq: backward ODE}, it has been shown that the score output at time $t=0$, $\rvs_\vtheta(\rvx, t=0)$, is less accurate than computing the gradient of $p_{0,\vtheta}^\ODE(\rvx)$ with respect to $\rvx$, as given by \Cref{eq: change of variable formula} (see \citealp{feng2023score}, Figure 3). However, well-trained diffusion models can still generate high-quality samples, as we have convergence guarantees: under suitable assumptions, the Wasserstein distance between $p^{\SDE}_{0,\vtheta}$ and $q_0$ is upper bounded and tends to zero with improved learning and finer Euler discretizations \citep{de2022convergence}.

(2)\label{paragraph: composition and noise not commute}
To sample from the product of densities using diffusion models, for example in the guidance framework \citep{dhariwal2021diffusion, ho2022classifier}, we operate the product at each noise level of the sampling procedure by replacing the score with what would be the score of the product.  For instance, given a feature $\rvy$ (e.g. a class), to estimate $p(\rvx|\rvy)$, we sample from a slighly different distribution  \begin{align}
\underbrace{\widetilde{p}(\rvx \mid \rvy)}_{\geo{p}^{(2)}(\rvx)} 
&\propto \underbrace{p(\rvx)}_{p^{(1)}(\rvx)} \underbrace{p(\rvy \mid \rvx)^{1 + w}}_{p^{(2)}(\rvx)} \label{eq: classifier guidance}\\
&= p(\rvx) \left( \frac{p(\rvx \mid \rvy)\, p(\rvy)}{p(\rvx)} \right)^{1 + w} \\
&\propto p(\rvx) \left( \frac{p(\rvx \mid \rvy)}{p(\rvx)} \right)^{1 + w} \\
&= \underbrace{p(\rvx)^{-w}}_{p^{(1)}(\rvx)} \underbrace{p(\rvx \mid \rvy)^{1 + w}}_{p^{(2)}(\rvx)} \label{eq: classifier-free guidance}
\end{align}
 where $w$ is a temperature parameter. While \citet{dhariwal2021diffusion} rely on \Cref{eq: classifier guidance} and train a classifier independently from the diffusion model to approximate the conditional likelihood, \citet{ho2022classifier} adopt \Cref{eq: classifier-free guidance} and jointly train two diffusion models instead. Hence, if for example we consider the latter, we compute at test time $\nabla_\rvx \log \tilde{p}_t(\rvx \mid \rvy) = -w\nabla_\rvx \log p_t(\rvx) 
  + (1+w)\nabla_\rvx \log p_t(\rvx|\rvy)$ for every $t>0$, where $p_t(\rvx|\rvy)$ and $\nabla_\rvx \log p_t(\rvx)$ are independently learned. However, this stepwise rule assume that the proportionality in \Cref{eq: classifier-free guidance} hold for noise levels $t > 0$. It is false when $p_t$ and $p_t(\cdot \mid \rvy)$ are the diffusion distributions obtained by computing convolution of $p$ and $p(\cdot \mid \rvy)$ with Gaussian noise levels. Fortunately, this pitfall does not compromise the effectiveness of the guidance procedure in practice, showing that this trick is a good proxy of the target distribution. 
  
  As indicated by \cite{du2023reducereuserecyclecompositional} and \citet{chidambaram2024does}, the misconception in reasoning is also true in the general case of product and geometric mean: the operations of applying noise to the probabilities and computing the product or the PoE in \Cref{definition: PoE} are not commutative. We may schematically write
\begin{equation}\label{eq: diffused PoE not PoE}
\PoE(p^{(1)}(\rvx),\dots,p^{(K)}(\rvx))_t \neq \PoE(p^{(1)}_t(\rvx),\dots,p^{(K)}_t(\rvx))
\end{equation}
as soon as $t>0$ if $p^{(1)}(\rvx),\dots,p^{(K)}(\rvx)$ are the starting distributions associated to each trained model (in fact $\neq$ means \engquote{not proportional to} here). The direct consequence is
\begin{equation}
\nabla_\rvx \log \geo{p}_t^{(K)} (\rvx) \neq \frac{1}{K} \sum_{k=1}^K \log p^{(k)}_t(\rvx).
\end{equation}

However, to the best of our knowledge, no simple, closed-form counterexample has been found to support this inequality. In the following section, we provide a proof using the case of centered Gaussians.

\subsection{Diffused PoE is not the PoE of diffused distributions: proof in the Gaussian case}\label{app: diffused poe is not poe: gaussian case}

In this part we show that in general, the PoE of distributions at $t = 0$ does not lead to the PoE of marginal distributions $\{p_t^{(k)}\}_{t>0}^{T}$ induced by each of the diffusion trajectories. More precisely, we demonstrate that a necessary and sufficient condition is that are all initial distributions are equal (e.g. we sample a unique distribution). We prove this assuming the target distributions are all centered Gaussians in $\R^d$. This case is particularly interesting, as it yields an explicit score and admits analytical solutions for the backward SDE \citep{pierret2024diffusion}.

Following \citet{pierret2024diffusion}, 
we consider the Variance Preserving forward SDE (or Ornstein–Uhlenbeck process)
\begin{equation}\label{eq: VP SDE practical}
\mathrm{d}\rvz_t = -\beta_t \rvz_t \mathrm{d}t + \sqrt{2\beta_t} \mathrm{d}\mathbf{w}_t, \quad 0 \leq t \leq T, \quad \rvz_0 \sim q_0.
\end{equation}
The distribution $q_0$ is noised progressively according to the variance schedule $\beta_t$. The equation \ref{eq: VP SDE practical} admits one strong solution written as
\begin{equation}
\rvz_t = \gamma_t \rvz_0 + \eta_t, \quad 0 \leq t \leq T
\end{equation}
where $\gamma_t \in (0,1)$ and $\eta_t>0$ are respectively deterministic and gaussian independent of $\rvz_0$. However both depend on $\beta_t$. Moreover, if $\Sigma_t$ denotes the covariance matrix of $\rvz_t$, we have 
\begin{equation}
\mSigma_t = \gamma_t \mSigma + (1 - \gamma_t)\mI
\end{equation}
if $\mSigma$ is the covariance matrix of $\rvz_0$. 

\paragraph{Assumption 1}(Gaussian assumption). In the following we assume that $q_0$ is a centered Gaussian distribution, namely $\mathcal{N}(\vzero,\mSigma)$, and that $\mSigma$ is invertible. In this case,
\begin{equation}\label{eq: gaussian assumption}
\rvz_t \sim q_t = \mathcal{N}(\vzero,\mSigma_t)
\end{equation}
and $\mSigma_t$ is invertible for $t>0$.
\paragraph{What is the PoE of (centered) Gaussians?} Assume $p^{(1)},\dots,p^{(K)}$ are all Gaussian distributions equal to $\mathcal{N}(\vzero,\mSigma_k)$. Then
\begin{equation}\label{eq: PoE of Gaussians}
\text{PoE}(p^{(1)},\dots,p^{(K)}) = \mathcal{N}\left(\vzero,\left(\frac{1}{K}\sum_{k=1}^K \mSigma_{k}^{-1}\right)^{-1}\right)
\end{equation}
where the equality holds thanks to the normalization coefficient.

\paragraph{Main proposition.} We start with the following lemma.

\begin{lemma}\label{lemma: super additivity}
Let $\mathbf{a} \in \mathbb{R}^K_{>0}$ and $\mathbf{b} \in \mathbb{R}^K_{>0}$. Let $H: (x_1,\dots,x_K) \mapsto (\frac{1}{K}\sum_{k=1}^K x^{-1}_k)^{-1}$ denote the harmonic mean function that takes a vector in $\mathbb{R}^K_{>0}$. Then $H$ is super-additive:
\begin{equation}
H(\mathbf{a}) + H(\mathbf{b}) \leq H(\mathbf{a + b}) 
\end{equation}
and equality holds if and only if there exists $\lambda > 0$ such that $\mathbf{a} = \lambda\mathbf{b}$ (we write $\mathbf{a} \propto \mathbf{b}$).
\begin{proof}
Let $\mathbf{a} = (a_1,\dots,a_K) \in \mathbb{R}^K_{>0}$ and $\mathbf{b} = (b_1,\dots,b_K) \in \mathbb{R}^K_{>0}$. The inequality corresponds to a special case of the reverse Minkowski's Inequality for Sums (\Cref{proposition: reverse minkowski})
\begin{equation}
\left( \sum_{k=1}^K (a_k + b_k)^p \right)^{1/p}
\geq
\left( \sum_{k=1}^K a_k^p \right)^{1/p}
+
\left( \sum_{k=1}^K b_k^p \right)^{1/p},
\end{equation}
valid for all $p<1$, and in particular for $p=-1$.
\end{proof}
\end{lemma}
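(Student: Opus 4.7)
The plan is to prove super-additivity via a Cauchy--Schwarz argument on a well-chosen decomposition, which has the bonus of delivering the equality case for free. An alternative would be to cite the reverse Minkowski inequality at $p = -1$ as the author does, but that inequality itself requires essentially the same work to establish, so I prefer a self-contained argument.

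I would introduce the shorthand $A \coloneqq \sum_k a_k^{-1}$, $B \coloneqq \sum_k b_k^{-1}$, and $C \coloneqq \sum_k (a_k+b_k)^{-1}$. Since $H(\mathbf{x}) = K \big(\sum_k x_k^{-1}\big)^{-1}$, the claim $H(\mathbf{a}) + H(\mathbf{b}) \leq H(\mathbf{a}+\mathbf{b})$ unwinds (after cancelling $K$) to $A^{-1} + B^{-1} \leq C^{-1}$, which is in turn equivalent to the polynomial-looking form $C(A+B) \leq AB$.

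The key step is the partial-fraction identity $a_k^{-1} = (a_k+b_k)^{-1} + b_k/(a_k(a_k+b_k))$ (and its symmetric counterpart with $a$ and $b$ swapped), which yields
\begin{equation}
A - C = \sum_k \frac{b_k}{a_k(a_k+b_k)}, \qquad B - C = \sum_k \frac{a_k}{b_k(a_k+b_k)}.
\end{equation}
Applying Cauchy--Schwarz to these two sums gives $(A-C)(B-C) \geq C^2$, because the geometric mean of the two summand sequences is exactly $(a_k+b_k)^{-1}$, so their pointwise product telescopes to $C^2$. Expanding $(A-C)(B-C) \geq C^2$ and cancelling the $C^2$ term yields $AB \geq C(A+B)$, which is what we wanted.

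For the equality case, Cauchy--Schwarz is tight iff the two summand sequences are proportional across $k$, a condition that reduces to $b_k^2/a_k^2$ being constant in $k$, i.e., $\mathbf{a} \propto \mathbf{b}$. The main obstacle is spotting the right Cauchy--Schwarz pairing; once the decomposition of $A - C$ and $B - C$ is in hand, the remaining algebra is routine.
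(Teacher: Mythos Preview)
Your proof is correct and takes a genuinely different route from the paper. The paper simply identifies the claim as the $p=-1$ case of the reverse Minkowski inequality for sums, and then proves that inequality separately via a chain of lemmas (reverse Young $\Rightarrow$ reverse H\"older $\Rightarrow$ reverse Minkowski), each carrying its own equality characterisation. Your argument bypasses this scaffolding entirely: the partial-fraction decomposition of $A-C$ and $B-C$ sets up a single Cauchy--Schwarz step whose cross term is exactly $C^2$, and the equality case $b_k^2/a_k^2 = \text{const}$ drops out immediately. Your approach is more elementary and fully self-contained for the specific $p=-1$ case at hand; the paper's approach has the advantage of situating the lemma within the general reverse-Minkowski family (all $p<1$), which may be useful if one later needs other exponents, but at the cost of three auxiliary results.
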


The counter-example is the following.
\begin{proposition}\label{proposition: counter example gaussian case}
Assume we are under Gaussian assumption for $K>1$ distributions $p_0^{(1)},\dots,p_0^{(K)}$, and moreover each initial covariance matrix is equal to $\alpha_k\bm{I}$ with $\alpha_k > 0$. If $p_0 = \text{PoE}(p_0^{(1)},\dots,p_0^{(K)})$ is the initial condition of Eq. (\ref{eq: VP SDE practical}), then for each $t>0$, the marginal distributions $p_t$ associated to the strong solution $\{\rvx_t\}_{t>0}^T$ verifies  
\begin{equation}
p_t = \text{PoE}(p^{(1)}_t, \dots,p^{(K)}_t) 
\end{equation}
if and only if $ \alpha_1 = \dots = \alpha_K$.

\end{proposition}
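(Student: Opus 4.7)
The plan is to compute both sides of the claimed identity explicitly in the centered-Gaussian setting, exploit the homogeneity of the harmonic mean $H$, and then invoke the super-additivity lemma \Cref{lemma: super additivity} to decide when equality can hold.

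First, I would compute the PoE at time $0$. Using \Cref{eq: PoE of Gaussians} with $\mSigma_k = \alpha_k \mI$, I get $p_0 = \mathcal{N}(\vzero, H(\alpha_1,\dots,\alpha_K)\mI)$, where $H(\mathbf a) = (K^{-1}\sum_k a_k^{-1})^{-1}$. Under the VP-SDE, the Gaussian assumption \Cref{eq: gaussian assumption} and the relation $\mSigma_t = \gamma_t \mSigma + (1-\gamma_t)\mI$ immediately yield
\begin{equation}
p_t = \mathcal{N}\bigl(\vzero,\; [\gamma_t H(\alpha_1,\dots,\alpha_K) + (1-\gamma_t)]\,\mI\bigr).
\end{equation}
On the other side, each diffused marginal is $p_t^{(k)} = \mathcal{N}(\vzero,[\gamma_t \alpha_k + (1-\gamma_t)]\mI)$, so applying \Cref{eq: PoE of Gaussians} gives
\begin{equation}
\text{PoE}(p_t^{(1)},\dots,p_t^{(K)}) = \mathcal{N}\bigl(\vzero,\; H(\gamma_t\alpha_1+(1-\gamma_t),\dots,\gamma_t\alpha_K+(1-\gamma_t))\,\mI\bigr).
\end{equation}
Since both distributions are centered Gaussians with covariance proportional to $\mI$, they coincide iff their scalar variances coincide.

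Next, I would write the right-hand variance as $H(\mathbf a + \mathbf b)$ with $\mathbf a = \gamma_t(\alpha_1,\dots,\alpha_K)$ and $\mathbf b = (1-\gamma_t)(1,\dots,1) \in \R^K_{>0}$ (both vectors in $\R^K_{>0}$ for $t>0$, since $\gamma_t \in (0,1)$). The harmonic mean is positively homogeneous of degree $1$, so $H(\mathbf a) = \gamma_t H(\alpha_1,\dots,\alpha_K)$ and $H(\mathbf b) = (1-\gamma_t)$. Thus the left-hand variance is exactly $H(\mathbf a) + H(\mathbf b)$, and the identity $p_t = \text{PoE}(p_t^{(1)},\dots,p_t^{(K)})$ reduces to
\begin{equation}
H(\mathbf a) + H(\mathbf b) = H(\mathbf a + \mathbf b).
\end{equation}

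Finally, I would invoke \Cref{lemma: super additivity}: the above equality in the super-additivity inequality holds iff $\mathbf a \propto \mathbf b$. Because $t>0$ gives $\gamma_t>0$ and $1-\gamma_t>0$, this proportionality is equivalent to $(\alpha_1,\dots,\alpha_K) \propto (1,\dots,1)$, i.e.\ $\alpha_1 = \dots = \alpha_K$. Conversely, if all $\alpha_k$ are equal, both sides of the proposed identity trivially reduce to the same single Gaussian. The only potential obstacle I foresee is purely notational, namely being careful that the homogeneity of $H$ is applied in the right direction and that $\gamma_t \in (0,1)$ strictly for $t>0$ so that $\mathbf a,\mathbf b \in \R^K_{>0}$ and the lemma's strict equality characterization applies.
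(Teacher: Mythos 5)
Your proposal is correct and follows essentially the same route as the paper's own proof: both compute the two covariances explicitly, rewrite them as $H(\mathbf a)+H(\mathbf b)$ versus $H(\mathbf a+\mathbf b)$ using the homogeneity of the harmonic mean, and conclude via the equality case of the super-additivity lemma (\Cref{lemma: super additivity}) that equality holds iff $\mathbf a \propto \mathbf b$, i.e.\ $\alpha_1=\dots=\alpha_K$. No gaps; your cautionary remarks about $\gamma_t\in(0,1)$ match the assumptions the paper already makes.
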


\begin{proof}
Let $K>1$, $0 < t \leq T$, and $\alpha_1,\dots,\alpha_K > 0$. Let us derive the distribution $p_t$. From Eq. (\ref{eq: gaussian assumption}) and Eq. (\ref{eq: PoE of Gaussians}),  $p_t = \mathcal{N}(\bm{0},\bm{\Sigma}^{\text{PoE}}_t)$ where
\begin{equation}
\bm{\Sigma}^{\text{PoE}}_t = \gamma_t (\frac{1}{K} \sum_{k=1}^K \alpha_k^{-1}\bm{I})^{-1} + (1 - \gamma_t)\bm{I} = \underbrace{\left(\gamma_t(\frac{1}{K} \sum_{k=1}^K \alpha_k^{-1})^{-1} + (1 - \gamma_t)\right)\bm{I}}_{c^{\text{PoE}}_t}.
\end{equation}
In the other side, $\text{PoE}(p^{(1)}_t,\dots,p^{(K)}_t) = \mathcal{N}(\bm{0},\bm{\Sigma}^{\text{not PoE}}_t)$ where
\begin{equation}
\bm{\Sigma}^{\text{not PoE}}_t =  \left( \frac{1}{K} \sum_{k=1}^K (\gamma_t \alpha_k \bm{I} + (1 - \gamma_t)\bm{I})^{-1}\right)^{-1} =  \underbrace{\left(  \frac{1}{K} \sum_{k=1}^K (\gamma_t \alpha_k  + (1 - \gamma_t))^{-1}\right)^{-1}}_{c^{\text{not PoE}}_t}\bm{I} .
\end{equation}
Let us compare $\bm{\Sigma}^{\text{PoE}}_t$ and $\bm{\Sigma}^{\text{not PoE}}_t$. We establish the following result:
\begin{equation}c_t^{\text{PoE}} \leq c_t^{\text{not PoE}}
\end{equation}
and $c_t^{\text{PoE}} = c_t^{\text{not PoE}} \Leftrightarrow \forall k \in \{1,\dots,K\}, \quad \alpha_k = \lambda_t \frac{(1-\gamma_t)}{\gamma_t}$ with $\lambda_t > 0$.

To prove this, we write both scalar values in terms of harmonic means.

\begin{equation}
\begin{cases}
c_t^{\text{PoE}} & = \gamma_t H(\alpha_1,\dots,\alpha_K) + (1 - \gamma_t)  \\
c_t^{\text{not PoE}} & = H(\gamma_t\alpha_1 + (1-\gamma_t),\dots,\gamma_t \alpha_K + (1-\gamma_t))
\end{cases}
\end{equation}
where $H: \rvx = (x_1,\dots,x_K) \mapsto (\frac{1}{K}\sum_{k=1}^K x^{-1}_k)^{-1}$. Since everything is positive, the result is straightforward using Lemma \ref{lemma: super additivity} and $H(\lambda\rvx) = \lambda H(\rvx)$ for all $(\lambda,\rvx) \in (\mathbb{R}_{>0},\mathbb{R}^K_{>0})$.
\begin{align}
c_t^{\text{PoE}} & = \gamma_t H(\alpha_1,\dots,\alpha_K) + (1 - \gamma_t)H(1,\dots,1) \\
& = H(\gamma_t (\alpha_1,\dots,\alpha_K)) + H((1 - \gamma_t) (1,\dots,1)) \\
& \leq H(\gamma_t\alpha_1 + (1-\gamma_t),\dots,\gamma_t \alpha_K + (1-\gamma_t)) \\
& = c_t^{\text{not PoE}}
\end{align}
with equality if and only there exists $\lambda_t > 0$ such that for all $k \in \{1,\dots,K\}$, $ \alpha_k = \lambda_t\frac{(1-\gamma_t)}{\gamma_t}$, which is equivalent to all the $\alpha_k$ being equal.
\end{proof}

This proof demonstrates that in the particular Gaussian case, the equality is equivalent to all the individual starting distributions being identical. 
In this case, and even more generally, for any $t\in(0,T]$ the intermediate probability associated to $t$ cannot be expected to follow a PoE structure, and computing the PoE at intermediate steps does not faithfully reflect the PoE of the initial distributions. 

\subsection{Implications of non-commutativity in two contexts} \label{app: implications of non commutativity}

We provide two well-known yet simple frameworks that are affected by \Cref{proposition: counter example gaussian case true}: concept-conditional generative modeling and linear inverse problems.

\paragraph{Concept-conditional generative modeling.} In this setting (see \Cref{paragraph: composition and noise not commute}), the components of the product are expected to be different models. Thus, \Cref{proposition: counter example gaussian case} hints that relying solely on the guidance approach is unlikely to produce samples from the true target conditional distribution. Some recent works leverage some hacks like MCMC or Feynman-Kac-based corrections during sampling to handle this theoretical pitfall and improve sampling accuracy \citep{du2023reducereuserecyclecompositional, skreta2025feynman}. No such correctors are applied in our experiments, as we focus on the baseline ensembling behavior without additional guidance refinements.

\paragraph{Linear inverse problems.} Let us consider linear inverse problems, that is, we observe a measurement \begin{equation}\rvy = \mA \rvx_0 + \vepsilon,\end{equation} where $\mA \in \mathbb{R}^{m \times d}$ is a known linear operator (or measurement matrix), $\rvx_0 \in \mathbb{R}^d$ is the unknown signal to be recovered, and $\vepsilon \in \mathbb{R}^m$ denotes additive noise, typically assumed to be a centered Gaussian variable independent of $\rvx_0$. The goal is to recover $\rvx_0$ from the observation $\rvy$, possibly under prior assumptions on the distribution of $\rvx_0$ or through a generative model. We would like to estimate $p(\rvx | \rvy) \propto p(\rvx)p(\rvy|\rvx)$. We can again adopt the \engquote{guidance} approach by using a pre-trained model aiming to estimate $\nabla_\rvx \log p_t(\rvx_t)$ and add it to an approximation of the measurement matching term $\nabla_\rvx \log p_t(\rvy | \rvx_t)$ for each noise level of the sampling process. The latter term is estimated using the identity \begin{equation}
p_t(\rvy \mid \rvx_t) = \int p(\rvy \mid \rvx_0)\, p(\rvx_0 \mid \rvx_t) \, \mathrm{d}\rvx_0.
\end{equation}
and numerous approximation methods can be leveraged \citep{chung2022diffusion,song2023pseudoinverse}.

Assume $\rvx_0 \sim q_0 = \mathcal{N}(\vzero,\mSigma)$. In this case, both $p(\rvx)$ and $p(\rvy|\rvx)$ correspond to centered Gaussian distributions. However, their covariance matrices differ since the likelihood $p(\rvy \mid \rvx)$ corresponds to a Gaussian with covariance $\mA \mSigma \mA^\top + w^2 \mI$, which contradicts the necessary condition in \Cref{proposition: counter example gaussian case}.

\subsection{Some useful algebra results}

\begin{lemma}[Reverse Young's Inequality for Products]\label{lemma: reverse young}
Let $ p, q \in \R_{>0} $ be strictly positive real numbers satisfying:
$ \frac{1}{p} - \frac{1}{q} = 1 $
Let $ a \in \mathbb{R}_{\geq 0} $ be a positive real number and  $ b \in \R_{>0} $ be a strictly positive real number.  

Then
\begin{equation}
ab \geq \frac{a^p}{p} - \frac{b^q}{q}.
\end{equation}
with equality if and only if $a^p = b^{-q}$.
\end{lemma}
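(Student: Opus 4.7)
The plan is to reduce the inequality to weighted AM--GM (or, equivalently, to calculus in one variable). Before starting, a sanity check: the stated inequality with the power $b^{+q}$ cannot be right, since taking $b \to 0^+$ with $a > 0$ fixed makes $\tfrac{a^p}{p} - \tfrac{b^q}{q} \to \tfrac{a^p}{p} > 0$ while the left-hand side tends to $0$; the stated equality condition $a^p = b^{-q}$ is, however, perfectly consistent with the corrected form $ab \geq \tfrac{a^p}{p} - \tfrac{b^{-q}}{q}$. I would therefore prove the corrected form, i.e.\ $ab + \tfrac{b^{-q}}{q} \geq \tfrac{a^p}{p}$.

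The case $a = 0$ is immediate since the right-hand side is non-positive. For $a, b > 0$, the key algebraic consequences of $\tfrac{1}{p} - \tfrac{1}{q} = 1$ are $1 - p = p/q$ (equivalently $q(1-p) = p$) and $p(q+1) = q$. Using these, my plan is to rewrite $ab + \tfrac{b^{-q}}{q}$ as $p \cdot \tfrac{ab}{p} + (1-p) \cdot \tfrac{b^{-q}}{p}$ and apply the weighted AM--GM inequality with weights $(p, 1-p)$. The exponent bookkeeping should then collapse the geometric-mean lower bound exactly to $\tfrac{a^p}{p}$, and the equality case of AM--GM --- namely that the two arguments coincide --- would unravel to $a = b^{-q-1}$, i.e.\ $a^p = b^{-q}$, matching the stated equality condition.

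An entirely equivalent alternative would fix $a > 0$ and minimise $F(b) = ab + b^{-q}/q$ over $b > 0$: then $F'(b) = a - b^{-q-1}$ has a unique zero at $b^\star = a^{-1/(q+1)}$, $F''(b) > 0$ throughout, and a direct computation using the same exponent identities yields $F(b^\star) = \tfrac{a^p}{p}$. The only real obstacle is the exponent arithmetic; everything else --- AM--GM or strict convexity, plus the equality analysis --- is routine.
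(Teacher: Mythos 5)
Your proof is correct and takes essentially the same route as the paper: both reduce the claim to the forward Young / weighted AM--GM inequality (the paper applies standard Young's inequality with conjugate exponents $1/p$ and $q/p$ to the pair $(ab)^p$ and $b^{-p}$, which is your weighted AM--GM step in different bookkeeping), and both extract the equality case $ab=b^{-q}\Leftrightarrow a^p=b^{-q}$ from the equality case of that inequality. Your observation that the displayed inequality should read $b^{-q}$ rather than $b^{q}$ is also right --- the paper's own proof derives exactly $ab \geq \tfrac{a^p}{p} - \tfrac{b^{-q}}{q}$, and the subsequent application in the Reverse H\"older lemma uses $|u_n v_n| \geq \tfrac{1}{p}|u_n|^p - \tfrac{1}{q}|v_n|^{-q}$, confirming the typo in the statement rather than in your argument.
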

\begin{proof}
We define $u$ and $v$ such that $\frac{1}{u} = p $ and $\frac{1}{v} = \frac{p}{q}$. By hypothesis, $\frac{1}{u} + \frac{1}{v} = 1$, thus we can apply Young's Inequality for Products \citep{rudin2021principles}.
\begin{align}
& (ab)^p b^{-p} \leq \frac{((ab)^p)^{1/p}}{1/p} + \frac{(b^{-p})^{q/p}}{q/p} \\
        &\implies \quad a^p \leq pab + p\frac{b^{-q}}{q} \\
        &\implies \quad \frac{a^p}{p} \leq ab + \frac{b^{-q}}{q} \\
        &\implies \quad ab \geq \frac{a^p}{p} - \frac{b^{-q}}{q}.
\end{align}
Equality holds if and only if the equality case in Young's inequality is attained, that is when
\begin{equation}
ab = b^{-q} \Longleftrightarrow a^p = b^{-q}. 
\end{equation}
\end{proof}

\begin{lemma}[Reverse Hölder's Inequality for Sums]\label{lemma: reverse holder}
Let $ p, q \in \R_{>0} $ be strictly positive real numbers such that $ \frac{1}{p} - \frac{1}{q} = 1 $.

Suppose that the sequences $ \rvx = \{x_n\}_{n\in \mathbb{N}} $ and $ \rvy = \{y_n\}_{n\in \mathbb{N}} $ in $ \mathbb{R} \text{ or } \R^d $ are such that the series
\begin{equation}
 \|\rvx\|_p \coloneqq \left( \sum_{n=1}^{\infty} |x_n|^p \right)^{1/p}
\end{equation}
and
\begin{equation}
\|\rvy\|_{-q} \coloneqq \left( \sum_{n=1}^{\infty} |y_n|^{-q} \right)^{-1/q}
\end{equation}
are convergent. Let $ \|\rvx\rvy\|_1 $ denote the 1-norm of $ \rvx\rvy $, if $ \rvx\rvy $ is in the Lebesgue space $ \ell^1 $. Then,
\begin{equation}
\|\rvx\rvy\|_1 \geq \|\rvx\|_p \|\rvy\|_{-q}.
\end{equation}
Equality holds if and only if there exists a constant $ c > 0 $ such that for all $n \in \mathbb{N}$,
\begin{equation}
|x_n|^p = c \cdot |y_n|^{-q}.
\end{equation}
\end{lemma}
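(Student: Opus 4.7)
The plan is to reduce the claim to a pointwise application of the Reverse Young's Inequality (\Cref{lemma: reverse young}) and then sum. The required arithmetic identity $\tfrac{1}{p}-\tfrac{1}{q}=1$ appears twice: once to make Young's inequality available term by term, and once at the end to collapse the two weighted sums into the constant $1$.

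First I would dispose of the degenerate cases. If $\|\rvx\|_p=0$ then $x_n=0$ for every $n$, so both sides of the claimed inequality vanish and there is nothing to prove (the equality condition is vacuous since $\rvy$ has no zero entries, otherwise $\|\rvy\|_{-q}$ would not be defined). If $\|\rvy\|_{-q}=0$, i.e.\ $\sum_n |y_n|^{-q}=+\infty$, the right-hand side is $0$ and the inequality is automatic. Hence I may assume $A\coloneqq \|\rvx\|_p\in(0,\infty)$ and $B\coloneqq \|\rvy\|_{-q}\in(0,\infty)$.

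Next, for every $n$ set $a_n=|x_n|/A$ and $b_n=|y_n|/B$, so that $a_n\ge 0$ and $b_n>0$ (note $y_n\ne 0$ since $\sum |y_n|^{-q}<\infty$). Applying \Cref{lemma: reverse young} with exponents $p,q$ (valid since $\tfrac{1}{p}-\tfrac{1}{q}=1$) to each pair $(a_n,b_n)$ gives
\begin{equation}
\frac{|x_n||y_n|}{AB} \;\ge\; \frac{1}{p}\,\frac{|x_n|^p}{A^p} \;-\; \frac{1}{q}\,\frac{|y_n|^{-q}}{B^{-q}}.
\end{equation}
Summing over $n$ and using $\sum_n |x_n|^p = A^p$ and $\sum_n |y_n|^{-q}=B^{-q}$, the two fractions on the right collapse to $\tfrac{1}{p}$ and $\tfrac{1}{q}$ respectively, yielding
\begin{equation}
\frac{\|\rvx\rvy\|_1}{AB} \;\ge\; \frac{1}{p} - \frac{1}{q} \;=\; 1,
\end{equation}
which is the desired inequality after multiplying through by $AB>0$.

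Finally I would extract the equality case by tracking the per-term equality condition of \Cref{lemma: reverse young}, namely $a_n^p = b_n^{-q}$. Substituting back the definitions of $a_n$ and $b_n$, this is equivalent to $|x_n|^p = c\,|y_n|^{-q}$ for every $n$, with the constant $c = A^p B^{q}>0$ forced by the normalization. Conversely, given any such $c>0$, summing the identity $|x_n|^p = c|y_n|^{-q}$ fixes $c=A^p B^q$, so the per-term equality in Young's inequality holds for all $n$ and the summed inequality is an equality. The only step that requires some care, and is the main thing worth checking, is the exchange of summation with the pointwise inequality and the careful handling of possibly zero $x_n$ in the equality characterization (when $x_n=0$, the relation $|x_n|^p=c|y_n|^{-q}$ would force $y_n=\infty$, so the equality case in fact rules out zero entries in $\rvx$, consistently with the equality in Young requiring $a_n>0$).
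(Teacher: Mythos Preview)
Your proof is correct and follows essentially the same route as the paper: normalize by $\|\rvx\|_p$ and $\|\rvy\|_{-q}$, apply the Reverse Young inequality (\Cref{lemma: reverse young}) term by term, sum, and use $\tfrac{1}{p}-\tfrac{1}{q}=1$ to conclude; the equality characterization is likewise obtained from the per-term equality case of Young. Your treatment of the degenerate cases and of zero entries in the equality clause is slightly more careful than the paper's, but the argument is otherwise identical.
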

\begin{proof}
Without loss of generality, assume that $ \rvx $ and $ \rvy $ are non-zero.

Let
\begin{equation}
\mathbf{u} = \{u_n\}_{n\in \mathbb{N}} = \frac{\rvx}{\|\rvx\|_p}
\end{equation}
and
\begin{equation}
\mathbf{v} = \{v_n\}_{n\in \mathbb{N}} = \frac{\rvy}{\|\rvy\|_{-q}}.
\end{equation}

Then,
\begin{equation}
\|\mathbf{u}\|_p = \|\mathbf{v}\|_{-q}  = 1.
\end{equation}

Since $\frac{1}{p} - \frac{1}{q} = 1$, by Reverse Young's Inequality for Products we have (\Cref{lemma: reverse young}),
\begin{equation}
|u_n v_n| \geq \frac{1}{p} |u_n|^p - \frac{1}{q} |v_n|^{-q}.
\end{equation}

and summing over all $ n \in \mathbb{N} $ gives
\begin{equation}
\|\mathbf{u}\mathbf{v}\|_1 \geq \frac{1}{p} \|\mathbf{u}\|^p_p - \frac{1}{q} \|\mathbf{v}\|^{-q}_{-q} = 1
\end{equation}

as desired.

From \Cref{lemma: reverse young}, equality holds if and only if
\begin{equation}
\forall n \in \mathbb{N}, |u_n|^p = |v_n|^{-q},
\end{equation}
and then Hölder's inequality becomes an inequality if and only if there exists $\alpha,\beta > 0$ (namely $\alpha = \|\rvy \|_{-q}^{-q}$ and $\beta = \|\rvx \|_{p}^p$ ) such that
\begin{equation}
\alpha | x_n |^p = \beta | y_n |^{-q}.
\end{equation}
\end{proof}

\begin{proposition}[Minkowski's Reverse Inequality for Sums: case $p < 1$]\label{proposition: reverse minkowski}
Let $\mathbf{a} = (a_1,\dots,a_K)^T \in \mathbb{R}^K_{>0}$, $\mathbf{b} = (b_1,\dots,b_K)^T \in \mathbb{R}^K_{>0}$, and $ p \in \mathbb{R}$.

If $ p < 1 $, $ p \neq 0 $, then
\begin{equation}
\left( \sum_{k=1}^K (a_k + b_k)^p \right)^{1/p} \geq \left( \sum_{k=1}^K a_k^p \right)^{1/p} + \left( \sum_{k=1}^K b_k^p \right)^{1/p}
\end{equation}
and equality holds if and only if $\mathbf{a} = c \cdot \mathbf{b}$ for some $c>0$.
\end{proposition}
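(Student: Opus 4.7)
The plan is to follow the classical Minkowski-from-Hölder scheme, adapted to the reverse setting. I will start from the algebraic identity $(a_k + b_k)^p = a_k(a_k+b_k)^{p-1} + b_k(a_k+b_k)^{p-1}$, which after summation yields $\sum_k (a_k+b_k)^p = \sum_k a_k (a_k+b_k)^{p-1} + \sum_k b_k (a_k+b_k)^{p-1}$. I will then apply the Reverse Hölder Inequality (Lemma~A.7) to each of the two sums on the right-hand side and combine them.

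\textbf{Choice of exponents.} For each sum I will use conjugate exponents $e_1 = p$ and $e_2 = p/(p-1)$, which satisfy $1/e_1 + 1/e_2 = 1$ and lie on opposite sides of zero: exactly one of them is in $(0,1)$ and the other is negative. This matches the structure of Lemma~A.7, whose $\|\,\cdot\,\|_{-q}$ factor corresponds precisely to a negative exponent. The assignment of the two sequences to the lemma's $x$ and $y$ will swap depending on the sign of $p$: for $0 < p < 1$, the sequence $(a_k)_k$ carries the positive exponent ($p_{\mathrm{lem}} = p \in (0,1)$) while $((a_k+b_k)^{p-1})_k$ carries the negative one ($q_{\mathrm{lem}} = p/(1-p) > 0$); for $p < 0$, the assignment is reversed ($p_{\mathrm{lem}} = p/(p-1) \in (0,1)$ and $q_{\mathrm{lem}} = -p > 0$). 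In both cases the condition $1/p_{\mathrm{lem}} - 1/q_{\mathrm{lem}} = 1$ is satisfied, and a direct computation of the exponents appearing in $\|\,\cdot\,\|_{p_{\mathrm{lem}}}$ and $\|\,\cdot\,\|_{-q_{\mathrm{lem}}}$ produces the \emph{same} bound $\sum_k a_k (a_k+b_k)^{p-1} \geq \bigl(\sum_k a_k^p\bigr)^{1/p} \bigl(\sum_k (a_k+b_k)^p\bigr)^{(p-1)/p}$, together with its analogue for $b$.

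\textbf{Combining and simplifying.} Adding the two bounds and using the decomposition of the first step will give $\sum_k (a_k+b_k)^p \geq \bigl[\bigl(\sum_k a_k^p\bigr)^{1/p} + \bigl(\sum_k b_k^p\bigr)^{1/p}\bigr] \bigl(\sum_k (a_k+b_k)^p\bigr)^{(p-1)/p}$. Since $\sum_k (a_k+b_k)^p > 0$, I can divide both sides by $\bigl(\sum_k (a_k+b_k)^p\bigr)^{(p-1)/p}$ (positive, so no inequality flip), and the conclusion will follow from the identity $1 - (p-1)/p = 1/p$. The equality case will be obtained by tracing back the equality condition of Lemma~A.7, which forces $a_k^p$ proportional to $(a_k+b_k)^p$ and similarly for $b_k^p$; this yields $a_k = \lambda (a_k+b_k)$ and $b_k = (1-\lambda)(a_k+b_k)$ with a single constant $\lambda$, equivalently $\mathbf{a} = c\,\mathbf{b}$ for some $c>0$.

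\textbf{Main obstacle.} The one subtlety I expect lies in the asymmetric formulation of Lemma~A.7: it is stated only for $p_{\mathrm{lem}}, q_{\mathrm{lem}} > 0$, which forces the \emph{positive} exponent to be in $(0,1)$. I therefore cannot treat the two regimes $0 < p < 1$ and $p < 0$ with a single uniform application; instead I must swap which of the two sequences plays the role of $x$ (positive exponent) versus $y$ (the one with the $\|\,\cdot\,\|_{-q}$ factor). Once this bookkeeping is handled, the two cases funnel into the same intermediate inequality and the remainder is purely algebraic manipulation, so I do not anticipate further conceptual difficulty.
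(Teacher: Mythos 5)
Your proposal is correct and follows essentially the same route as the paper's proof: decompose $\sum_k(a_k+b_k)^p$ as $\sum_k a_k(a_k+b_k)^{p-1}+\sum_k b_k(a_k+b_k)^{p-1}$, bound each sum via the reverse H\"older inequality (\Cref{lemma: reverse holder}) to obtain $\bigl(\sum_k a_k^p\bigr)^{1/p}\bigl(\sum_k(a_k+b_k)^p\bigr)^{(p-1)/p}$ and its analogue for $b$, then divide by the common factor and trace the equality case back through H\"older. Your explicit swap of which sequence carries the positive exponent in the two regimes $0<p<1$ and $p<0$ is in fact more careful than the paper's write-up, which sets $q=p/(p-1)$ and asserts \engquote{$p>0$ and $-q>0$} --- literally valid only for $0<p<1$ --- even though the proposition is ultimately invoked with $p=-1$ in \Cref{lemma: super additivity}.
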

\begin{proof}
Define $q = \frac{p}{p-1}$. Then,
\begin{equation}
\frac{1}{p} - \frac{1}{-q} = \frac{1}{p} + \frac{p-1}{p} = 1
\end{equation}
where $p > 0$ and $-q>0$. Using \Cref{lemma: reverse holder} followed by $(p-1)q = p$, we have

\begin{align}
\sum_{k=1}^{K} (a_k + b_k)^p 
&= \sum_{k=1}^{K} a_k (a_k + b_k)^{p-1} + \sum_{k=1}^{K} b_k (a_k + b_k)^{p-1} \\
&\geq \left( \sum_{k=1}^{K} a_k^p \right)^{1/p} \left( \sum_{k=1}^{K} \left((a_k + b_k)^{p-1}\right)^q \right)^{1/q} 
+ \left( \sum_{k=1}^{K} b_k^p \right)^{1/p} \left( \sum_{k=1}^{K} \left((a_k + b_k)^{p-1}\right)^q \right)^{1/q} \\
&= \left( \sum_{k=1}^{K} a_k^p \right)^{1/p} \left( \sum_{k=1}^{K} (a_k + b_k)^p \right)^{1/q}
+ \left( \sum_{k=1}^{K} b_k^p \right)^{1/p} \left( \sum_{k=1}^{K} (a_k + b_k)^p \right)^{1/q} \\
&= \left[ \left( \sum_{k=1}^{K} a_k^p \right)^{1/p} + \left( \sum_{k=1}^{K} b_k^p \right)^{1/p} \right]
\left( \sum_{k=1}^{K} (a_k + b_k)^p \right)^{1/q}. \\
\end{align}
Then,
\begin{align}
&\left( \sum_{k=1}^{K} (a_k + b_k)^p \right)^{1 - 1/q}
\geq \left( \sum_{k=1}^{K} a_k^p \right)^{1/p} + \left( \sum_{k=1}^{K} b_k^p \right)^{1/p} \\
 \Rightarrow &\left( \sum_{k=1}^{K} (a_k + b_k)^p \right)^{1/p}
\geq \left( \sum_{k=1}^{K} a_k^p \right)^{1/p} + \left( \sum_{k=1}^{K} b_k^p \right)^{1/p}.
\end{align}
From \Cref{lemma: reverse holder}, equality case holds if and only if there exists $c_1,c_2 > 0$ such that $a_k^p = c_1 \cdot (a_k + b_k)^p$ and $b_k^p = c_2 \dot (a_k+b_k)^p $, which means $\rva \propto \rvb$. To demonstrate the latter, suppose $a_k^p = c_1 (a_k + b_k)^p$ and $b_k^p = c_2 (a_k + b_k)^p$ for all $k$, with $c_1, c_2 > 0$. Then
\begin{align}
\left( \frac{a_k}{a_k + b_k} \right)^p = c_1,  \quad
\left( \frac{b_k}{a_k + b_k} \right)^p = c_2. 
\end{align}
Dividing the two equations yields
\begin{align}
\left( \frac{a_k}{b_k} \right)^p = \frac{c_1}{c_2}
\quad \Rightarrow \quad
\frac{a_k}{b_k} = \left( \frac{c_1}{c_2} \right)^{1/p}
\quad \text{for all } k. \notag
\end{align}
Hence, $\mathbf{a} \propto \mathbf{b}$. Conversely, if $\mathbf{a} = c \cdot \mathbf{b}$ for some $c > 0$, then $a_k + b_k = (1 + c) \cdot b_k$, so
\begin{align}
a_k^p &= c^p \cdot b_k^p = c^p (1 + c)^{-p} \cdot (a_k + b_k)^p, \notag \\
b_k^p &= (1 + c)^{-p} \cdot (a_k + b_k)^p. \notag
\end{align}
Thus, the condition holds with $c_1 = c^p  (1 + c)^{-p}$ and $c_2 = (1 + c)^{-p}$.

\end{proof}

\section{Additional aggregation schemes} \label{app: high level ensemble}

While \Cref{section: ensemble mechanisms} mainly focuses on step-wise aggregation schemes, it includes only one model that operates at a higher level. This raises the question of whether ensembling can be performed solely at the final stage of the process, for example. Additionally, all our step-wise aggregators follow the form $\text{SDE\_SOLVER}(\text{AGG}(\cdot,\dots,\cdot))$, as illustrated in \Cref{fig: methodology}, but one may wonder what happens if we reverse the order and apply the AGG function after each step of the SDE SOLVER. We describe two additional aggregation schemes in \Cref{app: description detaille high level ensemble} and show in \Cref{app: agg schemes additional results} that these methods do not perform better in practice on Deep Ensemble than our main approaches.

\subsection{Detailed descriptions}\label{app: description detaille high level ensemble}

We list two methods that operate differently on the sampling trajectories. \citet{song2021scorebasedgenerativemodelingstochastic} provides a general framework for reverse diffusion sampling by combining for each noise level a predictor step, that drives the sample toward the solution of the SDE, and a corrector step that drives the sample to a more correct sample given the noise level. We write  $\text{Predictor}(\rvx_{t_n}, t_n) =\gamma_{n}\rvx_{t_n} +\tilde{P}(\vs_{\vtheta}(\rvx_{t_n},t_n), t_n, \rvz_n)$ and $\text{Corrector}(\rvx_{t_n}) = \delta_n \rvx_{t_n} + \tilde{C}(\vs_{\vtheta}(\rvx_{t_n},t_n), \rvz_n)$ where $\rvz_n$ is a standard gaussian noise. If we consider Euler-Maruyama step as predictor, $\gamma_n = 1$ and $\tilde{P}(\vs_{\vtheta}(\rvx_{t_n},t_n), t_n, \rvz_n) =\left[\mathbf{f}(\rvx_{t_{n}},t_{n}) - g^2(t_n) \vs_{\vtheta}(\rvx_{t_n}, t_n)\right] \Delta t + g(t_n) \sqrt{\Delta t} \rvz_n$. For DDPM, we construct a sequence of scales $(\beta_{t_n})_{n\in \{1,\dots,N\}}$, then set $\gamma_n = \frac{1}{\sqrt{1 - \beta_{t_n}}}$, and $\tilde{P}(\vs_{\vtheta}(\rvx_{t_n},t_n), t_n, \rvz_n) = \frac{\beta_{t_n}}{\sqrt{1 - \beta_{t_n}}} \vs_{\vtheta}(\rvx_{t_n}, t_n)  + \sqrt{\beta_{t_n}} \rvz_n$. If we consider Langevin dynamics for the corrector, we can set $\delta_n = 1$ and $\tilde{C}(\vs_{\vtheta}(\rvx_{t_n},t_n), \rvz_n) = \epsilon_n \vs_{\vtheta}(\rvx_{t_n},t_n) + \sqrt{2\epsilon_n} \rvz_n$.  

\paragraph{Average of noises.} At each step $t_{n+1}$, we can average the noises added to $\gamma_n \rvx_{t_n}$ to get $ \rvx_{t_{n+1}}$ in the predictor. Let's define $\Delta_{n+1} \coloneqq \rvx_{t_{n+1}} - \delta_n \rvx_{t_n}$. It represents the noise injected into the image state to advance to the next step. Here we propose 
$\frac{1}{K} \sum_{k=1}^K \Delta^{(k)}_{n+1} = \frac{1}{K} \sum_{k=1}^K \tilde{P}(\vs^{(k)}_\vtheta(\rvx_{t_n},t_n), t_n, \rvz^{(k)}_n)$,
assuming for simplicity that only the Predictor step is taken into account.
 Typically, if we consider Euler-Maruyama steps, the update corresponds to
\begin{align}
& \left[\mathbf{f}(\rvx_{t_{n}},t_{n}) - g^2(t_n) \overline{\vs}^{(K)}_\vtheta(\rvx_{t_n}, t_n)\right] \Delta t +  g(t_n) \sqrt{\Delta t} \frac{1}{K} \sum_{k=1}^K \rvz^{(k)}_n
\end{align}
which is almost equivalent to simply averaging the scores due to the linearity of the updates, though it differs because the Gaussian noise terms are also averaged. Not that the latter vanish asymptotically with $K$, leaving only the contribution of the averaged scores. Asymptotically it corresponds to the discretization of an ODE
\begin{equation}
\mathrm{d}\rvx = [\mathbf{f}(\rvx,t) - g(t)^2 \mathbb{E}_{\substack{k \sim \mathcal{U}[\![1,K]\!]}}[\vs^{(k)}_\vtheta(\rvx,t)]\mathrm{d}t + g(t)\underbrace{\mathbb{E}[\mathrm{d}\bar{\mathbf{w}}]}_{=0}.
\end{equation}
 One can also average the corrections in the corrector steps, similarly to the predictor.

\paragraph{Mean of predictions / Posterior mean approximation.} We compute the mean over all samples $\widehat{\rvx}_0$ generated from $K$ sampling paths for a given input $\rvz$. This approach aligns with the objective of Bayesian inference in the generative modeling framework.  For diffusion models, the posterior predictive distribution is obtained from an initial noise $\rvz$ following a \engquote{prior} distribution (Gaussian noise). Hence averaging the outputs corresponds to approximating \begin{equation}\label{eq: average of predictions generative} p(\widehat{\rvx}_0|\rvz,\train) = \int p(\widehat{\rvx}_0 | \rvz, \vtheta)q(\vtheta|\train)d\vtheta \approx \frac{1}{K} \sum_{k=1}^K p(\widehat{\rvx}_0 | \rvz, \vtheta_k)
\end{equation}
The difference from our framework is that $\rvz$ represents here the stochasticity introduced in the entire sampling process (not only the starting noise). Some works employ this bayesian inference technique, using different ensemble strategies to sample $\vtheta_k$ like Laplace approximation \citep{ritter2018scalable} or Hyper Networks \citep{ha2016hypernetworks}, to estimate uncertainty for diffusion models \citet{chan2024hyper, jazbec2025generative}.

\subsection{These methods are under-performing} \label{app: agg schemes additional results}

We present additional results complementing \Cref{section: exeperiments aggregations}, by incorporating aggregation schemes that perform extremely poorly and are described in \Cref{app: description detaille high level ensemble}. In addition to the sum mentioned in \Cref{section: ensemble mechanisms}, we also evaluate the two trajectory-level aggregation schemes described in \Cref{app: high level ensemble}. We provide results on Deep Ensemble using ADM models trained on FFHQ.

\begin{table}[h]
    \centering
    \caption{Comparison of different aggregation methods for on FID, IS, and KID. The parameter $\eta$ defines the level of entropy involved during DDIM generation, see \cite{song2020denoising} for a clear definition. The result highlighted in orange is the one reported in \Cref{tab: aggregation_results}.}
    \small
    \begin{tabular}{l G G G G G}
    \toprule
    & \multicolumn{3}{c}{CIFAR-10 ($32 \times 32$)} & \multicolumn{2}{c}{FFHQ ($256 \times 256$)} \\
    Deep Ensemble & FID $\downarrow$ & IS $\uparrow$ & KID $\downarrow$ & FID $\downarrow$ & KID $\downarrow$ \\\midrule
    \rowcolor{LightOrange}
    Individual model DE & \textbf{4.79} (0.08) & \textbf{9.37} (0.07) & \textbf{0.001} (0.000) &\textbf{ 24.20 }(5.70) &\textbf{ 0.015} (0.003)  \\
    Sum of scores& 516.19 & 1.14 & 0.591 & 524.33 & 0.69 \\
    Mean predictions ($\eta{=}0.0$) & — & — & — & 58.31 & 0.04 \\
    Mean predictions ($\eta{=}0.5$) & — & — & — & 144.89 & 0.13 \\
    \bottomrule
    \rowcolor{LightCyan}
    Best model & 4.65 & 9.45 & 0.0006 & 21.7 & 0.012 \\
    \bottomrule
    \end{tabular}
    \label{tab: aggregation_results_additional}
\end{table}

\section{Tables and plots for U-Nets ensembles}\label{app: tables plots unets}

In this section, we first present in \Cref{fig: distribution fid cifar} the distribution of individual FID-10k values to complement \Cref{tab: aggregation_results}. We also provide in \Cref{tab: mc_dropout_results,tab: aggregations no intersection,fig: fid ffhq sampling steps} additional tables and plots on ensemble strategies to complement our study in neural network ensembling for diffusion models (\Cref{section: exeperiments aggregations,section: MC Dropout,section: diversity promoting}). We also report the effect on NLL given by increasing $K$ in \Cref{fig: NLL cifar}.

\subsection{Distribution of FID-10k from DE on CIFAR-10}

\begin{figure}[h]
    \centering
    \includegraphics[width=0.7\linewidth]{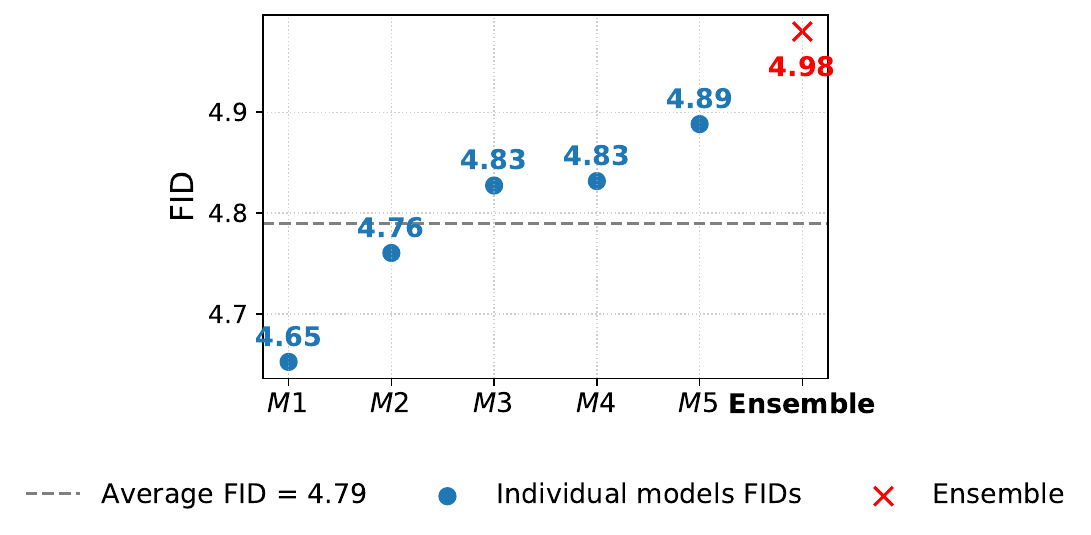}
    \caption{FID-10k values from all individual models M$i$ trained with DE where $i\in \{1,2,3,4,5\}$. Ensemble aggregation is the arithmetic mean. Score averaging performs worse than the weakest individual model, which suggests an existing blurring effect caused by score averaging at all timesteps. It illustrates that ensembling can sometimes be detrimental.}
    \label{fig: distribution fid cifar}
\end{figure}

\subsection{Aggregation schemes}\label{app: U-Nets aggregation schemes}

\begin{table}[h]
    \centering
    \caption{Comparison of different aggregation methods for $K = 20$ on CIFAR-10 in terms of FID, KID, and $L_\DDSM$. The best value for each metric is highlighted in bold. We do not measure it on FFHQ since the associated model is not trained with Dropout.} 
    \small
    \begin{tabular}{l G G G}
    \toprule
    & \multicolumn{3}{c}{CIFAR-10 ($32 \times 32$)} \\
    MC Dropout $K = 20$ & FID $\downarrow$ & KID $\downarrow$ & $L_\DDSM$ $\downarrow$ \\\midrule
    \rowcolor{LightOrange}
    w/o dropout & \textbf{4.83} & \textbf{0.0008} & \textbf{0.0284}~(0.0006) \\
    \rowcolor{LightCyan}
    w/ dropout & 5.75~(0.07) & 0.0014~(0.0001) & 0.0286~(0.0003) \\
    Arithmetic mean & 5.85 & 0.0014 & 0.0286~(0.0005) \\
    Geometric mean & 5.87 & 0.0014 & 0.0286~(0.0003) \\
    Dominant & 8.05 & 0.0044 & 0.0285~(0.0004) \\
    Median & 5.95 & 0.0016 & 0.0285~(0.0002) \\
    Alternating sampling & 8.21 & 0.0017 & - \\
    Mixture of experts & 7.47 & 0.0016 & - \\
    \bottomrule
    \end{tabular}
    \label{tab: mc_dropout_results}
\end{table}

\begin{table}[h]
    \centering
    \caption{We complement results from \Cref{tab: aggregations intersection} by evaluating ensemble performance for $K = 5$ models trained on disjoint subsets of CIFAR-10 in terms of FID, KID, and $L_\DDSM$. Each model is trained on two classes. The arithmetic mean produces catastrophic results: generated images are extremely noisy. This is because we are combining models of fundamentally different natures. Moreover, if we consider that it approximates a PoE, such a failure is unsurprising, as it essentially attempts to sample from an empty support. Mixture of experts performs better, as it avoids this issue, but the models remain too domain-specific to fully exploit diversity and reach high performance.}
    \small
    \begin{tabular}{l G G G}
    \toprule
    & \multicolumn{3}{c}{CIFAR-10 ($32 \times 32$)} \\
    Subsets w/o intersection $K = 5$ & FID $\downarrow$ & KID $\downarrow$ & $L_\DDSM$~$\downarrow$ \\\midrule
    \rowcolor{LightOrange}
    Individual model DE & \textbf{4.79} (0.08) & \textbf{0.001} (0.00) & \textbf{0.0284}~(0.0005) \\
    \rowcolor{LightCyan}
    Individual model & 65.42 (7.66) & 0.037 (0.00) & 0.0381~(0.0008) \\
    Arithmetic mean & 90.16 & 0.072 & 0.0345~(0.0006) \\
    Mixture of experts & 12.05 & 0.007 & - \\
    \bottomrule
    \end{tabular}
    \label{tab: aggregations no intersection}
\end{table}

\subsection{Ensembling on the noise space}

We further investigate in \Cref{tab: ensembling early timesteps} whether limiting model averaging to the early stages of the reverse diffusion process can mitigate the degradation in perceptual quality observed with full-step ensembling. The rationale is that, during the initial timesteps, the model primarily reconstructs coarse structures, while later steps refine high-frequency details that strongly influence perceptual metrics such as FID. Averaging may thus be less harmful if applied only to the early denoising stages. To test this hypothesis, we perform ensembling over the first third of the diffusion steps only, keeping the remaining steps deterministic.

\begin{table}[h]
    \centering
    \caption{We evaluate FID and KID when using a Deep Ensemble where arithmetic averaging of the scores is applied only during the first one-third of the 1000 diffusion timesteps. After this early aggregation phase, sampling proceeds with a single model for the remaining steps. This experiment is repeated five times, each time using a different model from the ensemble.}
    \small
    \begin{tabular}{l G G}
    \toprule
    & \multicolumn{2}{c}{CIFAR-10 ($32 \times 32$)} \\
    Early ensembling $K=5$. & FID $\downarrow$ & KID $\downarrow$ \\\midrule
    \rowcolor{LightOrange}
    Individual model DE & \textbf{4.79} (0.08) & \textbf{0.001} (0.00) \\
    \rowcolor{LightCyan}
    Arithmetic mean DE & 4.98 & 0.001 \\ 
    Arithmetic mean & 4.83 (0.08) & 0.001  (0.00)\\
    \bottomrule
    \end{tabular}
    \label{tab: ensembling early timesteps}
\end{table}

Partial ensembling does not improve FID compared to individual models. Interestingly, the resulting performance (FID = 4.83) lies between that of the individual models and full-step ensembling, suggesting that the negative effects of ensembling on CIFAR-10 are mitigated when it is applied over fewer timesteps. However the differences remain small overall, indicating that the results are essentially similar.

\subsection{Arithmetic mean and NLL}\label{app: NLL and arithmetic mean}

The model log-likelihood is computed according to the explicit formula and method given in \Cref{app: how to evaluate ensemble on score matching and nll}. We use $3$ Monte Carlo samples to estimate the trace.

The metric shows a marginal downward trend with increasing $K$. (see \Cref{app: how to evaluate ensemble on score matching and nll}).

\begin{figure}[h]
    \centering
    \includegraphics[width=0.4\linewidth]{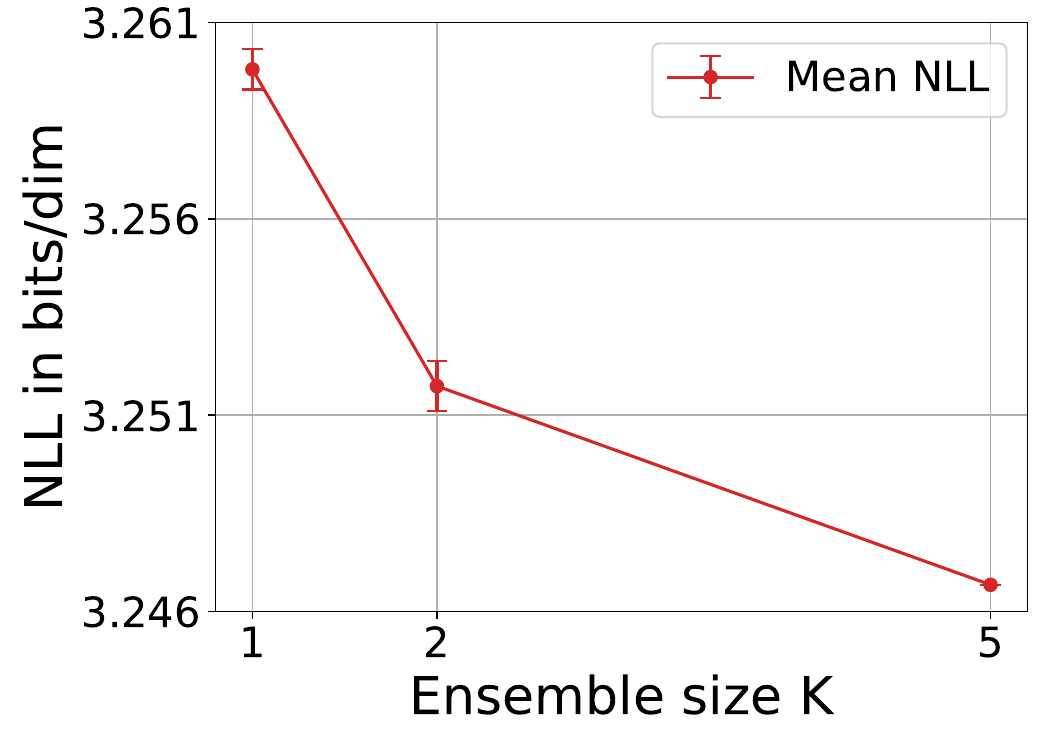}
    \caption{Mean NLL computed on CIFAR-10 validation set, with the ensemble size $K$ in the x-axis. The uncertainty bands are computed exactly in the same way as FID (\Cref{app: experiment details}).}
    \label{fig: NLL cifar}
\end{figure}

\subsection{Effect of the number of sampling steps on ensembling performance}

DDIM enables sampling with a number of diffusion steps that is 10 to 100 times smaller than that used during DDPM training (e.g. 1000), while still maintaining satisfactory image quality \citep{song2020denoising} as long as the number of steps is not too small. In \Cref{fig: fid ffhq sampling steps}, we use numbers lower than 100, which we initially selected as it provided the best trade-off between speed and quality.

\begin{figure}[h]
    \centering
    \begin{subfigure}[t]{0.4\linewidth}
        \centering
        \includegraphics[width=\linewidth]{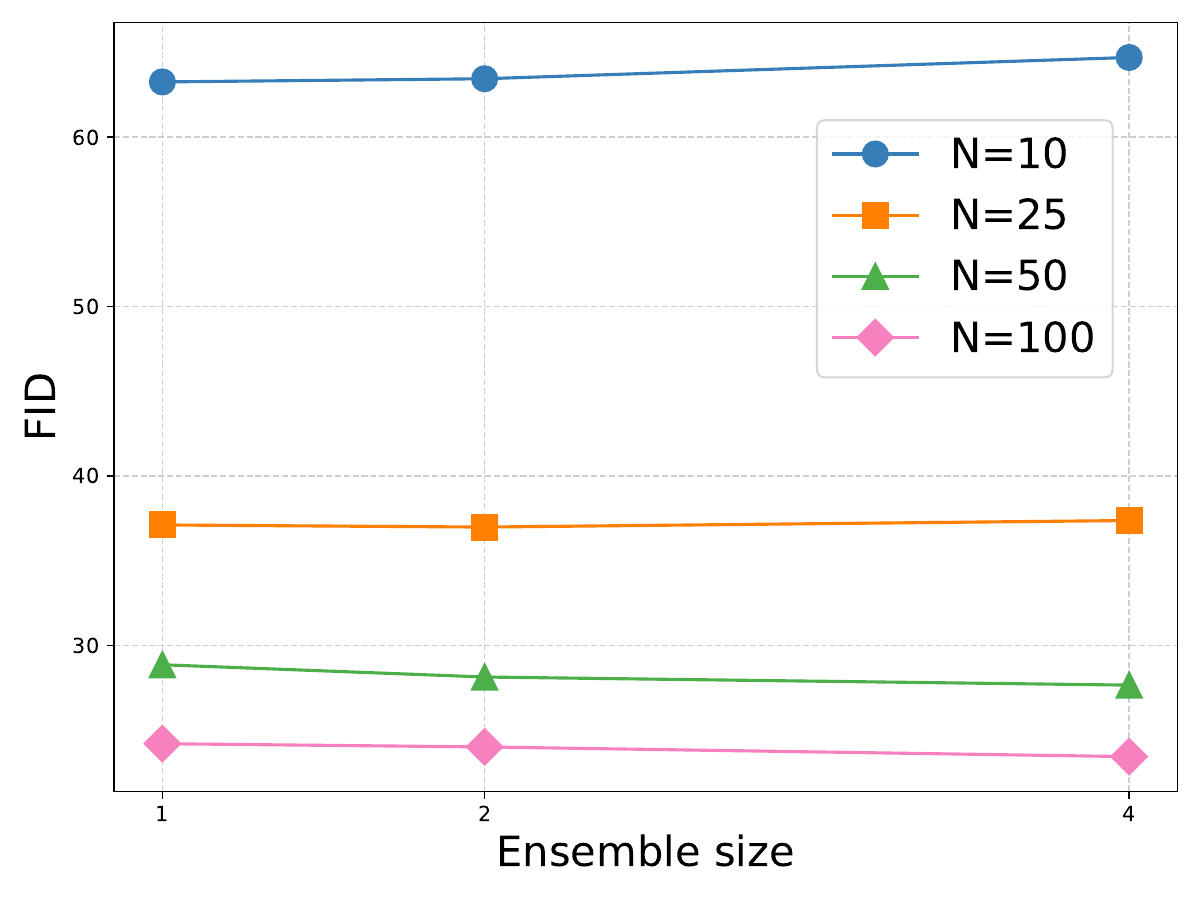}
    \end{subfigure}
    \begin{subfigure}[t]{0.4\linewidth}
        \centering
        \includegraphics[width=\linewidth]{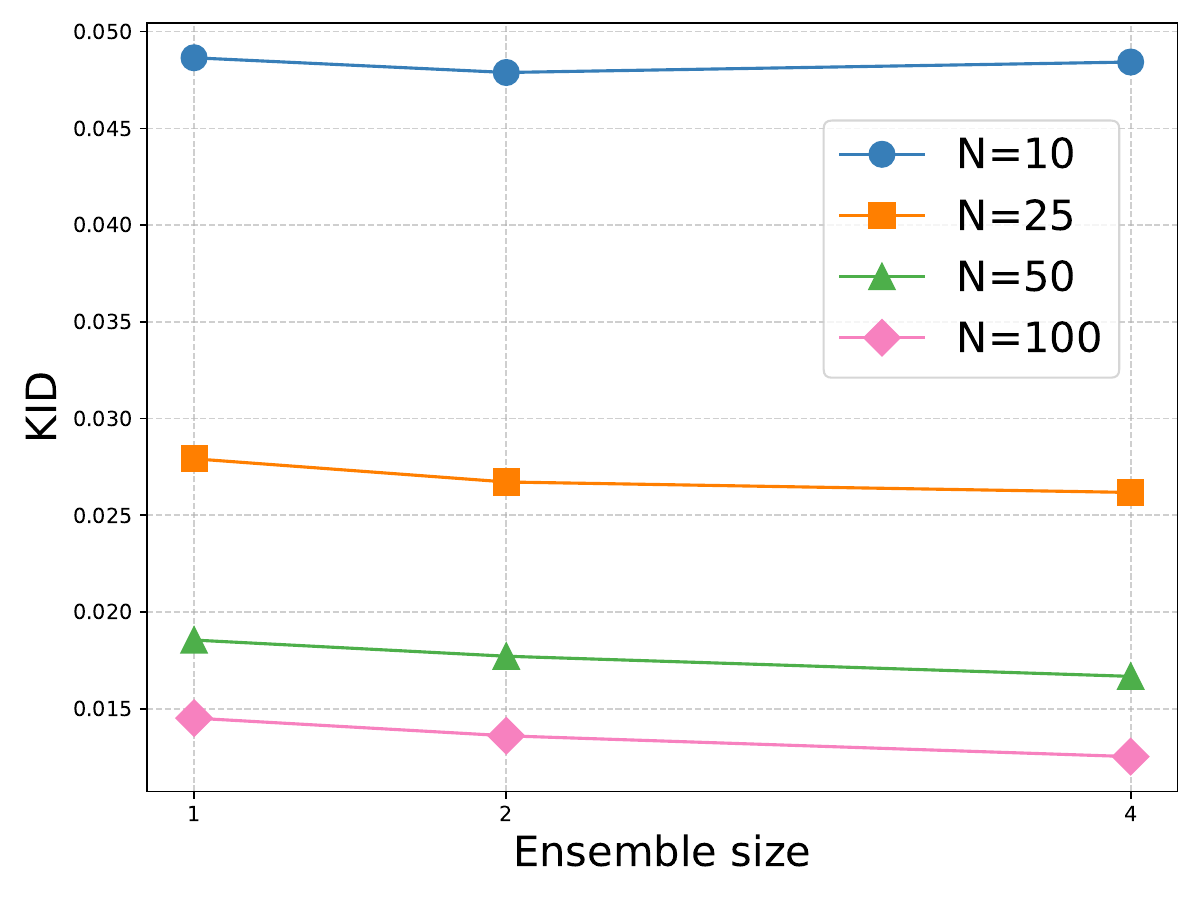}
    \end{subfigure}
    \caption{
        Effect of the number of diffusion steps at inference on perceptual quality on FFHQ for different ensemble sizes $K$. 
        Both FID and KID remain nearly constant with respect to $K$, showing that ensembling does not mitigate discretization errors.
    }
    \label{fig: fid ffhq sampling steps}
\end{figure}

\section{Tables and plots for Random Forests}\label{app: tables plots random forest}

In this section, we provide additional tables on ensemble strategies to complement the part on Random Forest (\Cref{section: experiments forestdiffusion}). We vary datasets and metrics. Our conclusion is that \textbf{Dominant feature} dominates in all settings, and \Cref{section: experiments forestdiffusion,fig: generation forestdiffusion stds} explain why.

\subsection{Wasserstein distances on additional datasets} \label{app: random forest additional tables wasserstein}

We present detailed results on Iris dataset and three additional ones larger than Iris (\Cref{tab: w1 test random forest,tab: w1 test random forest airfoil,tab: w1 test random forest wine white,tab: w1 test random forest wine red}). We still perform three train/test splits and report averaged results. The Dominant feature aggregation scheme shows in any case the best performance on generated samples. For other methods, increasing $K$ does not significantly improve generation quality.

\begin{table}[h]
    
    \centering
    \caption{
    Wasserstein distance $W_{\text{test}}$ for different aggregation methods as a function of the number of trees $K$. We perform three different train/test splits for each experience. 
    }
    \small
    \begin{tabular}{c G G G G G}
        \toprule
        & \multicolumn{5}{c}{$W_{\text{test}}$ on Iris dataset} \\
        Number of trees $K$ & Arithmetic & Geometric & Dominant & Median & Alternating \\
        \midrule
         1    & 0.96 & 0.96 & 0.96 & 0.96 & 0.96 \\
        25    & 0.94 & 0.94 & 0.50 & 0.98 & 0.94 \\
        50    & 0.94 & 0.94 & 0.49 & 0.98 & 0.96 \\
       100    & 0.94\textsuperscript{*} & 0.94 & 0.46 & 0.99 & 0.95 \\
       500    & 0.94 & 0.94 & 0.46 & 0.99 & 0.95 \\
      1000    & 0.94 & 0.94 & 0.46 & 0.99 & 0.96 \\
        \bottomrule
    \end{tabular}

    \vspace{1mm}
    \small\textsuperscript{*}Value reported by \citet{jolicoeur2024generating}.

    \label{tab: w1 test random forest}
\end{table}

\begin{table}[h]
    \centering
    \caption{
    Wasserstein distance $W_{\text{test}}$ for different aggregation methods as a function of the number of trees $K$ on the Airfoil Self Noise dataset ($n = 1503$, $d = 5$, \citealp{brooks1989airfoil}).
    }
    \small
    \begin{tabular}{c G G G G G}
        \toprule
        & \multicolumn{5}{c}{$W_{\text{test}}$ on Airfoil Self Noise dataset} \\
        Number of trees $K$ & Arithmetic & Geometric & Dominant & Median & Alternating \\
        \midrule
         1    & 0.88 & 0.88 & 0.88 & 0.88 & 0.88 \\
        25    & 0.88 & 0.89 & 0.58 & 0.92 & 0.88 \\
        50    & 0.88 & 0.89 & 0.55 & 0.92 & 0.88 \\
       100    & 0.88 & 0.89 & 0.54 & 0.92 & 0.88 \\
       500    & 0.88 & 0.89 & 0.53 & 0.92 & 0.88 \\
      1000    & 0.88 & 0.89 & 0.52 & 0.92 & 0.88 \\
        \bottomrule
    \end{tabular}
    \label{tab: w1 test random forest airfoil}
\end{table}
\begin{table}[h]
    \centering
    \caption{
    Wasserstein distance $W_{\text{test}}$ for different aggregation methods as a function of the number of trees $K$ on the Wine Quality White dataset ($n = 4898$, $d = 11$, \citealp{wine_quality_186}).
    }
    \small
    \begin{tabular}{c G G G G G}
        \toprule
        & \multicolumn{5}{c}{$W_{\text{test}}$ on Wine Quality White dataset} \\
        Number of trees $K$ & Arithmetic & Geometric & Dominant & Median & Alternating \\
        \midrule
         1    & 3.71 & 3.71 & 3.71 & 3.71 & 3.71 \\
        25    & 3.69 & 3.90 & 2.69 & 3.60 & 3.71 \\
        50    & 3.70 & 3.91 & 2.62 & 3.59 & 3.71 \\
       100    & 3.69 & 3.90 & 2.55 & 3.59 & 3.71 \\
       500    & 3.69 & 3.90 & 2.43 & 3.58 & 3.70 \\
      1000    & 3.69 & 3.90 & 2.38  & 3.58  & 3.70 \\
        \bottomrule
    \end{tabular}
    \label{tab: w1 test random forest wine white}
\end{table}
\begin{table}[h]
    \centering
    \caption{
    Wasserstein distance $W_{\text{test}}$ for different aggregation methods as a function of the number of trees $K$ on the Wine Quality Red dataset ($n = 1599$, $d = 10$, \cite{wine_quality_186}).
    }
    \small
    \begin{tabular}{c G G G G G}
        \toprule
        & \multicolumn{5}{c}{$W_{\text{test}}$ on Wine Quality Red dataset} \\
        Number of trees $K$ & Arithmetic & Geometric & Dominant & Median & Alternating sampling \\
        \midrule
         1    & 3.23 & 3.23 & 3.23 & 3.23 & 3.23 \\
        25    & 3.23 & 3.39 & 2.18 & 3.11 & 3.25 \\
        50    & 3.23 & 3.39 & 2.09 & 3.10 & 3.25 \\
       100    & 3.23 & 3.39 & 2.01 & 3.10 & 3.25 \\
       500    & 3.23 & 3.38 & 1.85 & 3.09 & 3.25 \\
      1000    & 3.23 & 3.38 & 1.80 & 3.09 & 3.25 \\
        \bottomrule
    \end{tabular}
    \label{tab: w1 test random forest wine red}
\end{table}

\subsection{Coverage and efficiency}

We measure coverage and efficiency (see \Cref{app: experiment details} for short descriptions) on the Iris dataset. We use the same range for number of trees as above, and aggregation schemes are also unchanged. \textbf{Dominant feature} performs in comparison to other methods in terms of diversity (\Cref{tab: cov test combination}) and in post-generation classification (\Cref{tab: f1 test combination}). This result aligns with \Cref{tab: w1 test random forest} and gives further credence to it.

\begin{table}[H]
    \centering
    \caption{
    Coverage score $cov_{\text{test}}$ for different aggregation methods as a function of the number of trees $K$.
    }
    \small
    \begin{tabular}{c G G G G G}
        \toprule
        & \multicolumn{5}{c}{$cov_{\text{test}}$ on Iris dataset} \\
        Number of trees $K$ & Arithmetic & Geometric & Dominant & Median & Random select \\
        \midrule
         1    & 0.31 & 0.31 & 0.31 & 0.31 & 0.31 \\
        25    & 0.35 & 0.32 & 0.81 & 0.27 & 0.36 \\
        50    & 0.35 & 0.34 & 0.81 & 0.27 & 0.32 \\
       100    & 0.34 & 0.34 & 0.82 & 0.28 & 0.29 \\
       500    & 0.35 & 0.35 & 0.83 & 0.30 & 0.33 \\
      1000    & 0.34 & 0.35 & 0.85 & 0.29 & 0.31 \\
        \bottomrule
    \end{tabular}
    \label{tab: cov test combination}
\end{table}

\begin{table}[H]
    \centering
    \caption{
    F1 score $F1_{\text{test}}$ for different aggregation methods as a function of the number of trees $K$.
    }
    \small
    \begin{tabular}{c G G G G G}
        \toprule
        & \multicolumn{5}{c}{$F1_{\text{test}}$ on Iris dataset} \\
        Number of trees $K$ & Arithmetic & Geometric & Dominant & Median & Random select \\
        \midrule
         1    & 0.75 & 0.75 & 0.75 & 0.75 & 0.75 \\
        25    & 0.78 & 0.79 & 0.87 & 0.76 & 0.79 \\
        50    & 0.78 & 0.80 & 0.86 & 0.78 & 0.73 \\
       100    & 0.78 & 0.78 & 0.88 & 0.77 & 0.75 \\
       500    & 0.77 & 0.78 & 0.90 & 0.77 & 0.76 \\
      1000    & 0.79 & 0.78 & 0.88 & 0.77 & 0.74 \\
        \bottomrule
    \end{tabular}
    \label{tab: f1 test combination}
\end{table}

\subsection{Noise underestimation}

\begin{figure}[H]
    \centering
    \includegraphics[width=1.\linewidth]{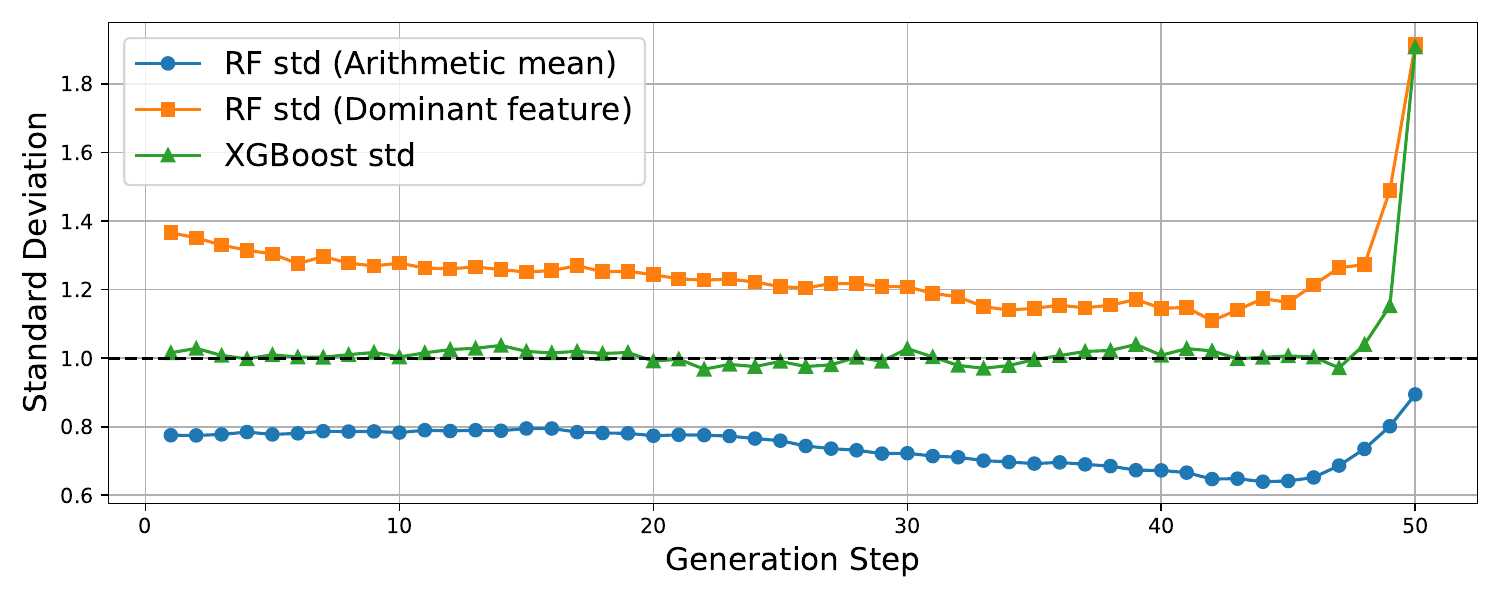}
    \caption{We compare the overall standard deviations of the two-dimensional predicted scores on the \textit{Iris} dataset. Specifically, we evaluate the Arithmetic Mean and Dominant Component methods against XGBoost, which is the best-performing score model according to \citet{jolicoeur2024generating}. To ensure a fair underfitting analysis, all models were trained with the same maximum tree depth of 7, which is sufficient for this dataset. The Dominant Component method yields higher predicted noise magnitudes than the Arithmetic Mean, closely matching XGBoost’s standard deviation at the final steps. } 
    \label{fig: generation forestdiffusion stds}
\end{figure}

\begin{figure}[h]
    \centering
    \includegraphics[width=1.\linewidth]{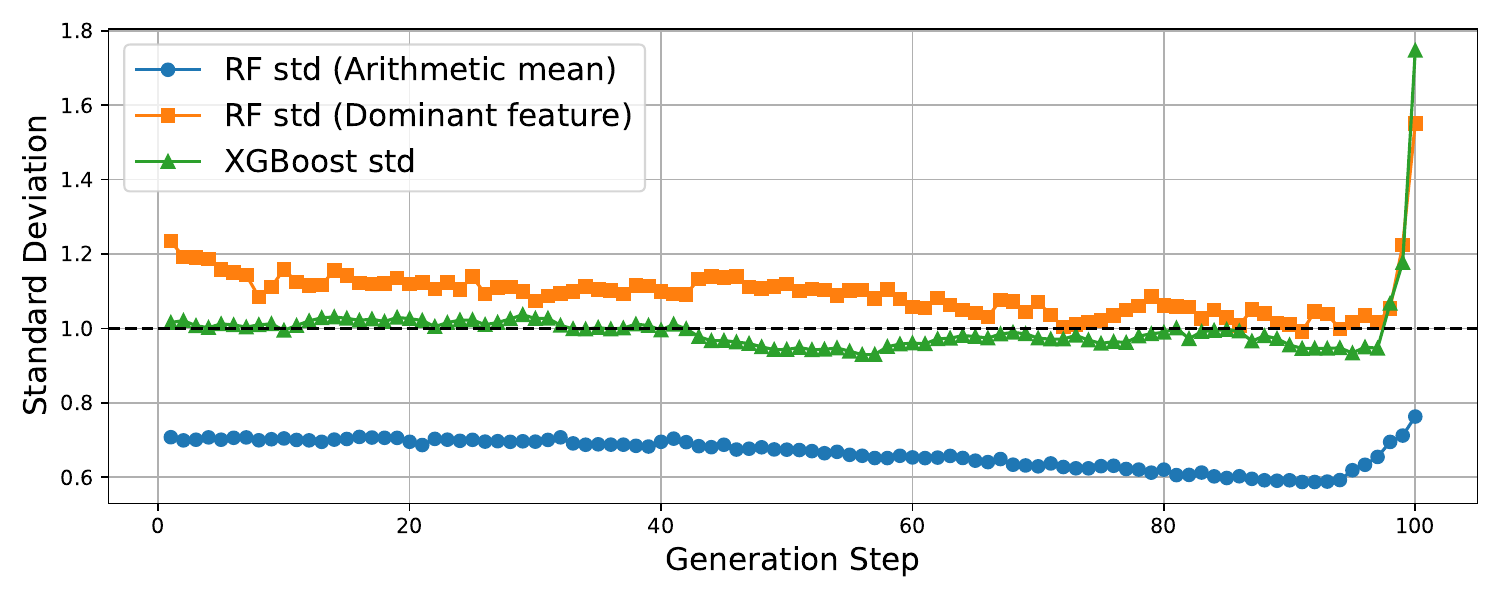}
    \caption{We compare the overall standard deviations on more diffusion steps (100). Increasing the number of diffusion steps does not allow Random Forest to reach the score estimation quality of the stronger models.}
    \label{fig generation forestdiffusion stds 100}
\end{figure}

\section{Weight scaling at initialization}\label{app: initialization diversity}

To the best of our knowledge, no prior work has examined the benefits of initializing a score network with weights scaled close to zero. Yet, as we will demonstrate, this choice positively influences both training behavior and final performance. In this section, we review related methods and illustrate how this initialization affects convergence on $L_\DDSM$ and FID.

\subsection{Near-zero output initialization on score models}\label{app: near zero output}

In these architectures, whether the network predicts the score \citep{song2021scorebasedgenerativemodelingstochastic} or the noise \citep{ho2020denoisingdiffusionprobabilisticmodels, nichol2021improved}, the final layer’s weights are by design initialized near zero. For example, \citet{nichol2021improved} enforce this via their \texttt{zero\_module} implementation on the final convolutional layer, while \citet{song2021scorebasedgenerativemodelingstochastic} apply Xavier uniform initialization \citep{glorot2010understanding} with a very small scale parameter (e.g., $10^{-10}$), effectively constraining the network’s output to be a zero tensor at the start of training. In this experiment we upscale $\lambda$ to one. 

\subsection{Related works on general contexts}

ControlNet \citep{zhang2023adding} almost applies this principle but in the context of fine-tuning, by initializing newly added convolutional layers to zero allowing the model to incorporate conditioning without immediately altering the behavior of the pre-trained diffusion backbone. A similar effect has been studied in the context of deep residual networks, where \citet{de2020batch} show that Batch Normalization facilitates training by pushing residual blocks toward the identity function. However, subsequent work demonstrated that BatchNorm is not strictly necessary, as a simple modification to the initialization (such as setting the residual branch to zero at the start) suffices to achieve similar trainability \citep{zhang2019fixup}. This suggests that initializing the final layer near zero in diffusion models may play a comparable role, ensuring smooth early training dynamics without requiring explicit normalization layers.

\subsection{Effect of weight scaling on training}

We verify how scaling down $\lambda$ helps stabilizing training using the DDPM++ model on CIFAR-10 (same settings as \Cref{section: experiments}).

\begin{figure}[H]
    \centering
    \begin{minipage}{0.48\linewidth}
        \centering
        \includegraphics[width=\linewidth]{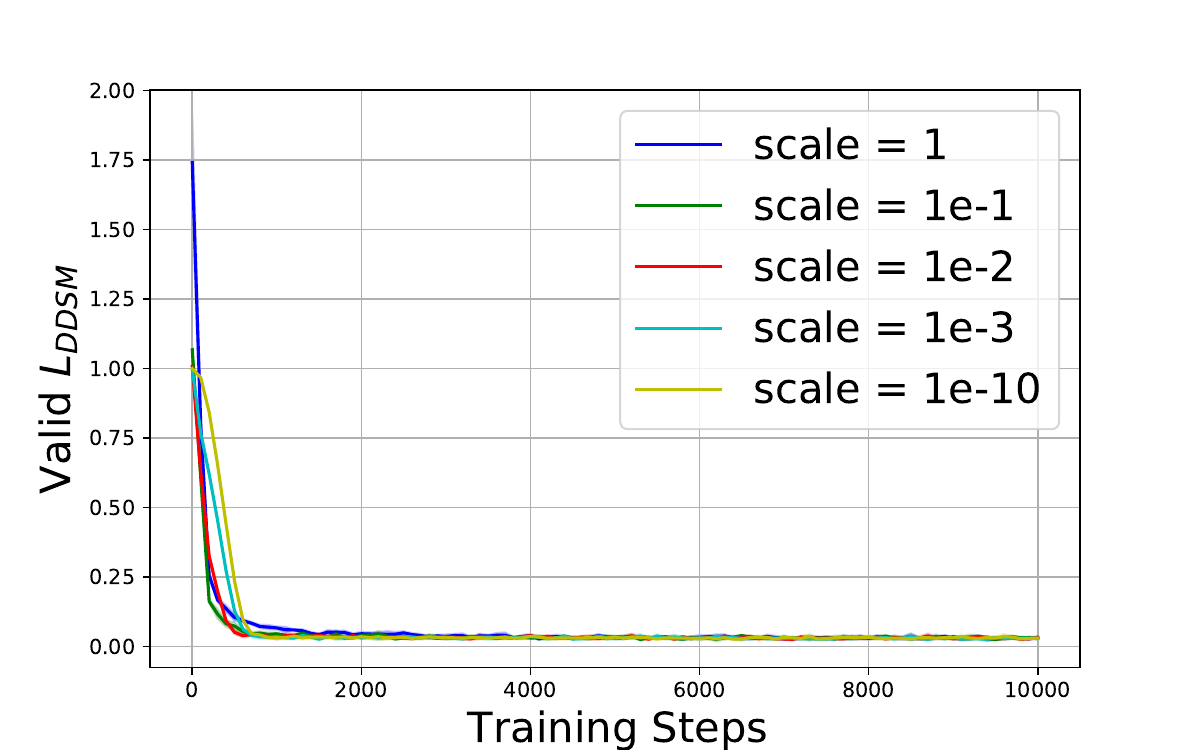}
        \caption*{(a) Losses at normal scale.}
    \end{minipage}
    \hfill
    \begin{minipage}{0.48\linewidth}
        \centering
        \includegraphics[width=\linewidth]{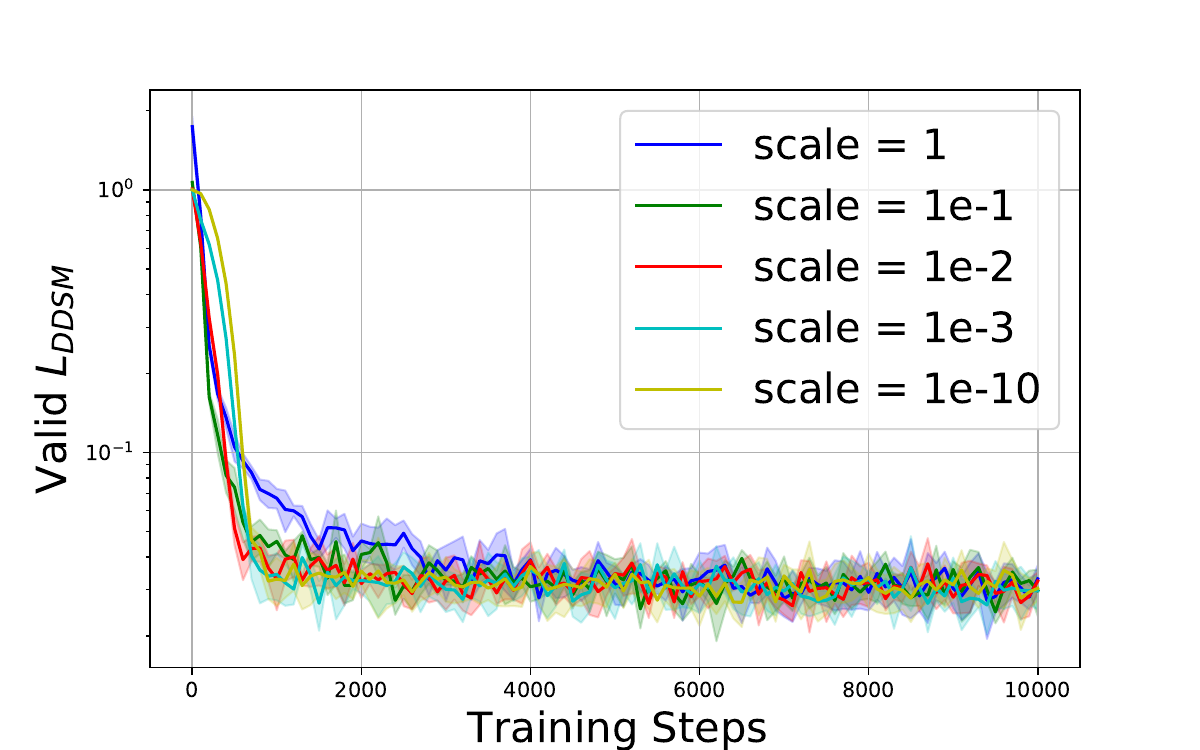}
        \caption*{(b) Losses in log scale.}
    \end{minipage}
    \caption{$L_\DDSM$ evaluated on  the validation set of CIFAR-10 over the course of training up to 200k iterations. We evaluate different scales $\lambda$ ($10^{-j}$ for $j \in \{0,1,2,3,10\}$). We observe that compared to using a scale of $\lambda = 1$, reducing the scale below one brings the average loss closer to 1 at the first step and results in faster convergence during the early stages of training. Despite this, the losses quickly stabilize around similar values regardless of the initial scale.}
    \label{fig: losses scales}
\end{figure}


\subsection{Effect of weight scaling on FID}

We show the positive effect of near-zero scaling of initialization from the DDPM++ architecture on sample quality (same settings as \Cref{section: experiments}).

\begin{figure}[H]
    \centering
    \includegraphics[width=0.5\linewidth]{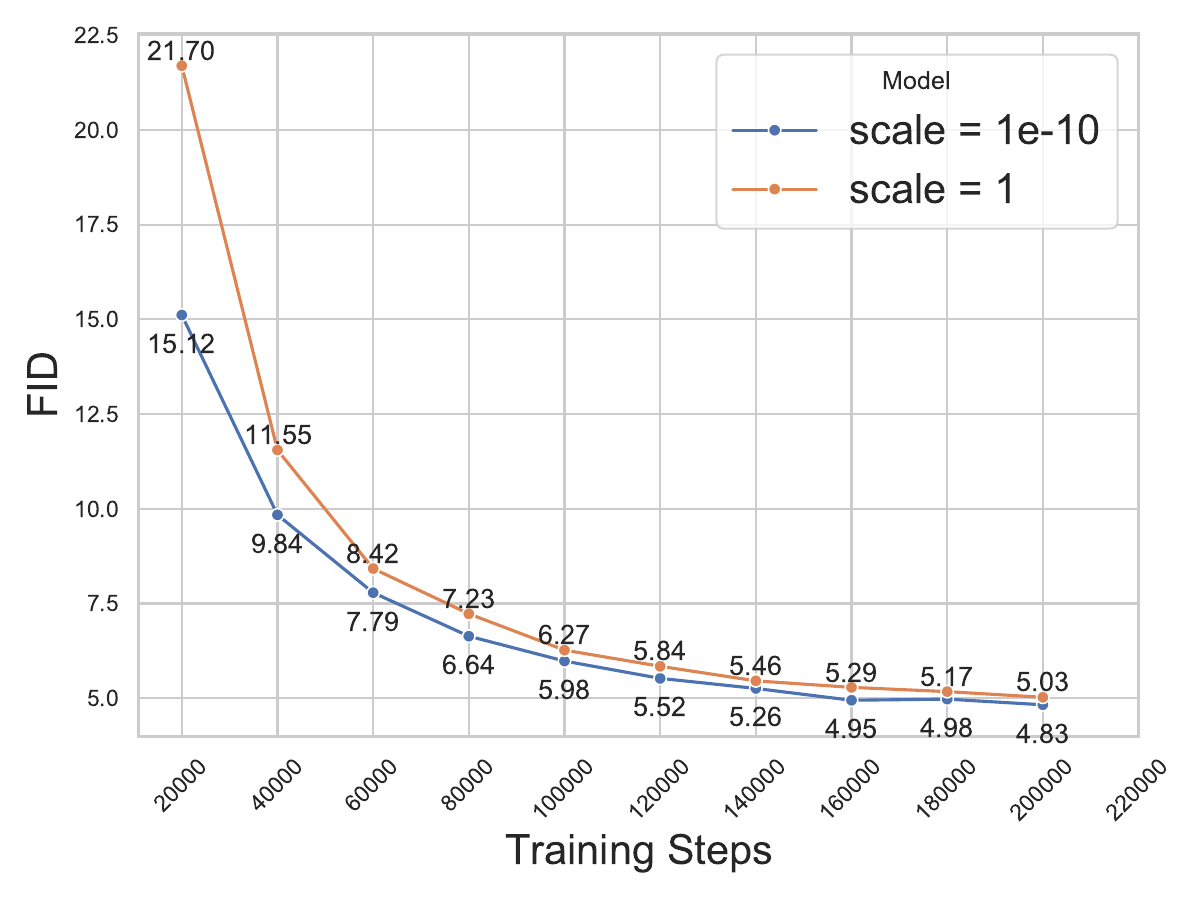}
    \caption{FID-10k on CIFAR-10 evaluated over the course of training on the two extreme cases of $\lambda$ up to 200k iterations. While both models improve their FID-10k score over the course of training, the model initialized with a scale of $10^{-10}$ consistently outperforms the one with scale $1$, which fails to reach the same level of performance before 200k steps (the gap consistently stays above 0.2).}
    \label{fig: fid scales}
\end{figure}

\subsection{Effect of weight scaling on predictive diversity} \label{app: effect scaling on predictive diversity}

We show in \Cref{fig: predictive diversity scales} that increasing the weight scale at initialization enhances post-training diversity.

We measure the predictive (or functional) diversity, which is the diversity in the output space. Since each model is trained by minimizing the MSE of scores or noises depending on the framework, we measure the predictive diversity by adopting the metric associated to the squared loss and arithmetic mean combiner from \citet{wood2023unified}. Given $K$ models $\vs_\vtheta^{(1)}, \dots, \vs_\vtheta^{(K)}$ taking a data point $\rvx$ and a timestep $t$ as input and producing an output in the same space as $\rvx$, we define \textit{predictive diversity} as the variance of the predictions and write it as
\begin{equation}\label{eq: diversity score}
D^{(K)}(\vs_\vtheta^{(1)}, \dots, \vs_\vtheta^{(K)}) = \mathbb{E}_{(\rvx,t)}\left[\frac{1}{K}\sum_{k=1}^K (\vs_\vtheta^{(k)}(\rvx,t) - \overline{\vs}_\vtheta^{(K)}(\rvx,t))^2\right]
\end{equation}
where $\overline{\vs}_\vtheta^{(K)}(\rvx,t) = \frac{1}{K} \sum_{k=1}^K \vs_\vtheta^{(k)}(\rvx,t)$ is the combination of the predictors, and the average of squared differences is calculated per image rather than per pixel.

\section{Note on FID and related metrics}\label{app: note fid}

Automated perceptual metrics remain standard and widely used tools for comparing generative models. They provide an essential, reproducible baseline for evaluation, and are often indispensable when large-scale human studies are impractical or unavailable. However, if typically we take FID which is the most popular one, it is frequently used without a thorough assessment of its limitations and its relevance as a proxy for visual quality has been repeatedly questioned in the literature \citep{stein2023exposing,jayasumana2024rethinking, karras2020analyzing, borji2022pros, morozov2021self}. In particular, \citet{stein2023exposing} highlight cases where FID (and network feature-based metrics in general) correlates poorly with human judgment, notably showing that FID underestimates the quality of diffusion models on FFHQ compared to human perception. Another point is that FID, IS and KID rely on features extracted from Inception networks pre-trained on ImageNet. Yet several works showed limits of their use on datasets that differ significantly from ImageNet, such as human faces \citep{borji2022pros,kynkaanniemi2022role}. For instance, \citet{kynkaanniemi2022role} show that FID is sensitive to the alignment between the generated and ImageNet class distributions, which can result in misleading evaluations particularly on datasets like FFHQ, where the label distribution deviates substantially from that of ImageNet.

\end{document}